\theoremstyle{definition}
\newtheorem{lemma}{Lemma}
\theoremstyle{definition}
\theoremstyle{remark}
\declaretheorem[style=remark]{proposition}
\theoremstyle{remark}
\newcommand{\R}{\mathbb{R}}
\newcommand{\D}{\mathcal{D}}
\newcommand{\EE}[2]{\mathbb{E}_{#1\!\!}\left[#2\right]}
\def\E#1{\EE{\,}{#1}}
\newcommand{\CEE}[3]{\EE{#1}{{#2}~\middle\vert~{#3}}}
\newcommand{\VV}[2]{\mathbb{V}_{#1\!\!}\left[#2\right]}
\def\V#1{\VV{\,}{#1}}
\newcommand{\CVV}[3]{\VV{#1}{{#2}~\middle\vert~{#3}}}
\DeclarePairedDelimiterX{\infdivx}[2]{(}{)}{%
	#1\;\delimsize\|\;#2%
}
\DeclareMathAlphabet{\mathmybb}{U}{bbold}{m}{n}
\newcommand{\indicator}{\mathmybb{1}}
\let\mc\mathcal                                             %
\let\mb\mathbb                                                  %
\let\tt\texttt                                              %
\newcommand{\cmark}{\ding{51}}%
\newcommand{\xmark}{\ding{55}}%
\def\asequal{\overset{\text{a.s.}}{=}}
\def\dequal{\overset{\text{d.}}{=}}
\newcommand{\norm}[1]{\lVert#1\rVert_2}
\newcommand{\nstars}[2][4]{%
	\myloopcounter0%
	\loop\ifnum\myloopcounter < #1
	\ifthenelse{\myloopcounter < #2}{
		\textcolor{black}{\star}
	}{
		\textcolor{black!22}{\star}
	}
	\advance\myloopcounter by 1 %
	\repeat %
}
\let\oldnl\nl%
\newcommand{\nonl}{\renewcommand{\nl}{\let\nl\oldnl}}%
\crefname{equation}{}{}
\def\floor#1{\lfloor #1 \rfloor}
\DeclareMathAlphabet{\mathsfit}{\encodingdefault}{\sfdefault}{m}{sl}
\SetMathAlphabet{\mathsfit}{bold}{\encodingdefault}{\sfdefault}{bx}{n}
\icmltitlerunning{Multi-Output Conformal Regression}
\begin{document}

\twocolumn[
\icmltitle{A Unified Comparative Study with Generalized Conformity Scores for Multi-Output Conformal Regression}

\icmlsetsymbol{equal}{*}

\begin{icmlauthorlist}
\icmlauthor{Victor Dheur}{umons}
\icmlauthor{Matteo Fontana}{london}
\icmlauthor{Yorick Estievenart}{umons}
\icmlauthor{Naomi Desobry}{umons}
\icmlauthor{Souhaib Ben Taieb}{umons,mbzuai}
\end{icmlauthorlist}

\icmlaffiliation{umons}{Department of Computer Science, University of Mons, Mons, Belgium}
\icmlaffiliation{london}{Department of Computer Science, Royal Holloway, University of London, Egham, United Kingdom}
\icmlaffiliation{mbzuai}{Department of Statistics and Data Science, Mohamed bin Zayed University of Artificial Intelligence, Abu Dhabi, United Arab Emirates}

\icmlcorrespondingauthor{Victor Dheur}{victor.dheur@umons.ac.be}

\icmlkeywords{Machine Learning, ICML}

\vskip 0.3in
]

\printAffiliationsAndNotice{}  %

\begin{abstract}
Conformal prediction provides a powerful framework for constructing distribution-free prediction regions with finite-sample coverage guarantees. While extensively studied in univariate settings, its extension to multi-output problems presents additional challenges, including complex output dependencies and high computational costs, and remains relatively underexplored. In this work, we present a unified comparative study of nine conformal methods with different multivariate base models for constructing multivariate prediction regions within the same framework. This study highlights their key properties while also exploring the connections between them. Additionally, we introduce two novel classes of conformity scores for multi-output regression that generalize their univariate counterparts. These scores ensure asymptotic conditional coverage while maintaining exact finite-sample marginal coverage. One class is compatible with any generative model, offering broad applicability, while the other is computationally efficient, leveraging the properties of invertible generative models. Finally, we conduct a comprehensive empirical evaluation across 13 tabular datasets, comparing all the multi-output conformal methods explored in this work. To ensure a fair and consistent comparison, all methods are implemented within a unified code base\footnote{\url{https://github.com/Vekteur/multi-output-conformal-regression}}.

\end{abstract}

\section{Introduction}

Quantifying uncertainty in model predictions is crucial in many real-world applications, often involving prediction problems with multiple output variables and complex statistical dependencies. For example, in medical diagnostics, the progression of a disease can be studied by analysing multiple health indicators that exhibit nonlinear dependencies, such as blood pressure and cholesterol levels of a patient \citep{Rajkomar2018-hg}. Although modern probabilistic AI models can model complex relationships between variables, they may produce unreliable or overly confident predictions \citep{Nalisnick2018-ew}. 

\begin{figure}
    \begin{center}
        \includegraphics[width=0.95\linewidth]{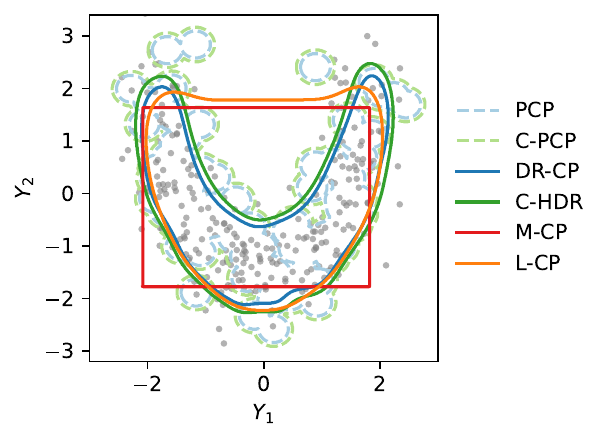}
    \end{center}
    \vspace{-17pt}
    \caption{Examples of bivariate prediction regions with an $80$\% coverage level for a toy example.
}\label{fig:contours/unimodal_heteroscedastic_2d_x_1}
\end{figure}

Conformal prediction (CP) offers a robust framework for improving model reliability by generating distribution-free prediction regions with a finite-sample coverage guarantee \citep{Vovk1999-vy}. Although substantial research has focused on univariate prediction problems \citep{Romano2019-kp,Sesia2021-tn,Rossellini2023-xo}, multivariate settings have received less attention. Among existing work, \citet{Zhou2024-sq} achieves marginal coverage by combining univariate prediction regions, but fails to capture dependencies between variables. Other methods, such as density-based approaches \citep{Izbicki2022-ru} or sample-based techniques \citep{Wang2023-vn,Plassier2024-ex}, suffer from high computational costs. An alternative method \citep{Sadinle2016-yr} optimises the size of the region, but does not achieve asymptotic conditional coverage.
For a toy bivariate example, \cref{fig:contours/unimodal_heteroscedastic_2d_x_1} illustrates the diversity of prediction regions obtained using a selection of conformal methods considered in this paper.

Our \emph{first contribution} is a unified comparative study of nine conformal methods with different multivariate base models for constructing multivariate prediction regions within the same framework. This study highlights their key properties while also exploring the connections between them. We examine different conformity scores with different multivariate base predictors, discussing prediction regions derived from the marginal distributions of individual output variables, their joint probability density function, or sampling procedures (e.g., generative models).

Our \textit{second contribution} introduces two novel classes of conformity scores for multi-output regression that generalize their univariate counterparts. These scores ensure asymptotic conditional coverage while maintaining exact finite-sample marginal coverage.

The first, CDF-based scores, leverage the cumulative distribution function (CDF) of any conformity score to achieve asymptotic conditional coverage. This approach generalizes the univariate HPD-split score, based on univariate highest-density region from \cite{Izbicki2020-ed}, to multivariate prediction regions derived from any conformity score. Additionally, we propose a specific instance of CDF-based scores that builds on PCP from \cite{Wang2023-vn}. This method avoids the estimation of a predictive density, instead relying solely on samples from any generative model.

The second, latent-based scores is inspired by \cite{Feldman2023-cc} and can be interpreted as an extension of distributional conformal prediction \citep{Chernozhukov2021-sg} to multivariate outputs. Compared to \cite{Feldman2023-cc}, it does not require directional quantile regression, and the conformalization is performed directly in the latent space, eliminating the need to construct a grid. This enhances both computational efficiency and scalability.

Finally, as our \textit{third contribution}, we conduct a large-scale empirical study comparing the different multi-output conformal methods across 13 tabular datasets with multivariate outputs, evaluating several performance metrics. We consider a variety of multivariate regression models, namely Multivariate Quantile Function Forecaster \citep{Kan2022-xl}, Distributional Random Forests \citep{Cevid2022-ev}, and a multivariate Gaussian Mixture Model parameterized by a hypernetwork \citep{Ha2022-gv,Bishop1994-kj}.

\section{Background}
\label{sec:background}

Consider a multivariate regression problem where the objective is to predict a $d$-dimensional response vector $y \in \mathcal{Y} = \mathbb{R}^d$ based on a feature vector $x \in \mathcal{X} \subseteq \R^p$. We assume there exists a true joint distribution \( F_{XY} \) over \( \mathcal{X} \times \mathcal{Y} \), and we have access to a dataset \( \mathcal{D} = \{(X^{(j)}, Y^{(j)})\}_{j=1}^n \) where \( (X^{(j)}, Y^{(j)}) \overset{\text{i.i.d.}}{\sim} F_{XY} \). Given a feature vector \( x \), we denote the conditional distribution of \( Y \) given \( X = x \) as \( F_{Y|x} \), and the associated probability density function (PDF) as \( f_{Y|x} \).

Using the dataset $\mc{D}$, for any \( x \in \mathcal{X} \), CP allows us to transform base predictors, denoted $\hat{h}$, into calibrated, distribution-free prediction regions \( \hat{R}(x) \subseteq \mathcal{Y} \) for the true output \( y \) with finite-sample coverage guarantees.

\subsection{Split-conformal prediction}
\label{sec:SCP}

Split-conformal prediction \citep[SCP,][]{Papadopoulos2002-eb} is a computationally efficient variant of conformal prediction that divides the dataset \( \mathcal{D} \) into two disjoint subsets: a training set \( \mathcal{D}_\text{train} \) and a calibration set \( \mathcal{D}_\text{cal} \). A model is first trained on \( \mathcal{D}_\text{train} \) to obtain a base predictor \( \hat{h} \). Based on \( \hat{h} \), a conformity score function \( s: \mathcal{X} \times \mathcal{Y} \rightarrow \mathbb{R} \) is defined, where lower scores indicate a better fit between the feature vector \( x \) and the response \( y \). The calibration scores \( \{s_i\}_{i=1}^{|\mathcal{D}_\text{cal}|} := \{s(x, y): (x, y) \in \mathcal{D}_\text{cal}\} \) are then computed, from which the \( (1-\alpha) \) empirical quantile is calculated as:
\begin{equation}
	\hat{q} = \text{Quantile}\left(\{s_i\}_{i=1}^{|\mathcal{D}_\text{cal}|} \cup \{\infty\}; \frac{k_\alpha}{|\mathcal{D}_{\text{cal}}| + 1}\right),
    \label{eq:q_hat}
\end{equation}
where $k_\alpha = \lceil (|\mathcal{D}_{\text{cal}}| + 1)(1-\alpha)\rceil$. This quantile serves as the threshold for constructing prediction regions. For a new input $x$, the (random) prediction region is given by:
\begin{equation}
	\hat{R}(x) = \{y \in \mathcal{Y}: s(x, y) \leq \hat{q}\}.
	\label{eq:region}
\end{equation}
If the random pair $(X, Y)$ is exchangeable with $\mathcal{D}_\text{cal}$ and $s$ is deterministic, SCP guarantees marginal coverage:
\begin{equation}
	\mathbb{P}_{X, Y, \D_\text{cal}}(Y \in \hat{R}(X)) = \mathbb{P}(s(X, Y) \leq \hat{q}) \geq 1 - \alpha,
	\label{eq:marginal_coverage}
\end{equation}
where the probability is taken over $(X, Y)$ and $\D_\text{cal}$. Assuming no ties in scores, the marginal coverage is exactly $\frac{k_\alpha}{|\mathcal{D}_{\text{cal}}| + 1}$, yielding $\mathbb{P}(Y \in \hat{R}(X)) \leq 1 - \alpha + \frac{1}{|\mathcal{D}_{\text{cal}}|+1}$.

Ideally, the prediction region should account for heteroskedasticity in the data by achieving \textit{conditional coverage} at the level \( 1 - \alpha \), which is defined as:
\begin{equation}
	\mathbb{P}(Y \in \hat{R}(X) \mid X = x) \geq 1 - \alpha \quad \forall x \in \mathcal{X}.
	\label{eq:conditional_coverage}
\end{equation}
This is a stronger requirement than marginal coverage in \cref{eq:marginal_coverage}. However, as \citet{Barber2019-lw} demonstrate, achieving conditional coverage is generally impossible without making additional assumptions about the underlying data-generating process.

\subsection{Multi-output conformal methods}
\label{sec:existing_conformal_methods}

Many conformal prediction methods have been proposed in the literature and implemented within the SCP framework for various base predictors and conformity scores, with a specific focus on univariate prediction problems. In this section, we survey several conformal methods for constructing multivariate prediction regions, using different multivariate base predictors and corresponding conformity scores. Specifically, we discuss density-based, and sample-based methods, which are based on their joint probability density function, or a sampling procedure (e.g., a generative model), respectively.
In the following, we describe the conformity scores \( s \) for different methods. Other methods explored in this study are detailed in \cref{sec:additional_methods}. \texttt{M-CP} and \texttt{CopulaCPTS} are based on the marginal distribution of each output variable, \texttt{STDQR} leverages a transformation to a latent space, and \texttt{HD-PCP} is density-based. Once a conformity score is defined, the corresponding prediction region \( \hat{R} \) can be computed using \cref{eq:region}. We detail this relationship for each method in \cref{sec:relation_score_region}. Furthermore, in \cref{sec:comparison}, we analyze the properties and interconnections of these methods and provide illustrative examples of the resulting prediction regions.

\paragraph{DR-CP.}

Given a predictive density \( \hat{f}(y \mid x) \), \citet{Sadinle2016-yr} defines a conformity score as the negative density:
\begin{equation}
    s_{\text{DR-CP}}(x, y) = -\hat{f}(y \mid x).
    \label{eq:score_DR-CP}
\end{equation}
The corresponding prediction region is a density superlevel set, \( \hat{R}_{\text{DR-CP}}(x) = \{y \in \mc{Y}: \hat{f}(y \mid x) \geq -\hat{q}\} \).

\paragraph{C-HDR.} \cite{Izbicki2022-ru} proposed the HPD-split method, which defines a conformity score based on the Highest Predictive Density (HPD):
\begin{align}
    \text{HPD}_{\hat{f}}(y \mid x) &= \int_{\{y' \mid \hat{f}(y' \mid x) \geq \hat{f}(y \mid x)\}} \hat{f}(y' \mid x) \, dy' \label{eq:HPD_integral} \\
    &= \mathbb{P}\left(\hat{f}(\hat{Y} \mid x) \geq \hat{f}(y \mid x) \mid X = x \right), \label{eq:HPD}
\end{align}
where $\hat{Y} \sim \hat{f}(\cdot \mid X)$.
The corresponding prediction region is a highest density region \citep[HDR,][]{Hyndman1996-wx} with respect to \( \hat{f} \) at level \( \hat{q} \):
\begin{align}
    &\hat{R}_{\text{C-HDR}}(x) = \{y \in \mc{Y} : \hat{f}(y \mid x) \geq t_{\hat{q}}\},\\
	&\quad \text{where } t_{\hat{q}} = \sup\{t : \mathbb{P}(\hat{f}(\hat{Y} \mid x) \geq t \mid X = x) \geq \hat{q}\}. \nonumber
\end{align}
Compared to \texttt{DR-CP}, where the threshold \( -\hat{q} \) is independent of \( x \), \texttt{C-HDR} allows the threshold \( t_{\hat{q}} \) to vary with \( x \).  To compute the HPD in \eqref{eq:HPD_integral}, \cite{Izbicki2022-ru} use numerical integration, whereas in our experiments, we approximate \eqref{eq:HPD} using Monte Carlo sampling, as described in \eqref{eq:empirical_CDF_score}.

In the context of classification, Adaptive Prediction Sets \citep{Romano2020-ed} follows a similar principle by constructing a ``highest mass region'', which corresponds to a superlevel set of the probability mass function with probability content at least \( \hat{q} \).

\paragraph{PCP.}

Let \( \tilde{Y}^{(1)}, \tilde{Y}^{(2)}, \dots, \tilde{Y}^{(L)} \) denote a sample with $L$ points from the (estimated) conditional distribution \( \hat{F}_{Y|x} \). Probabilistic Conformal Prediction (PCP, \cite{Wang2023-vn}) defines the conformity score as the distance to the closest point:
\begin{align}
	&s_\text{PCP}(x, y) = \min_{l \in [L]} \norm{y - \tilde{Y}^{(l)}}, \label{eq:score_PCP} \\
 &\text{where $\tilde{Y}^{(l)} \sim \hat{F}_{Y|x}, \quad l \in [L]$}.
	\label{eq:score_PCP_2}
\end{align}
The corresponding region is a union of $L$ balls centered at each sample, i.e. 
 $\hat{R}_\text{PCP}(x) = \bigcup_{l \in [L]} \{y \in \mc{Y}: \norm{y - \tilde{Y}^{(l)}} \leq \hat{q} \}$.

\section{Generalized Conformity Scores for Multi-Output Regression} %
\label{sec:proposed_conformal_methods}

In this section, we introduce two new classes of conformity scores: \textit{CDF-based} and \textit{latent-based} scores. These scores generalize existing conformity scores for univariate regression to accommodate any conformity score for multivariate outputs. The former generalizes HPD-split \citep{Izbicki2020-ed} to any conformity score, allowing to apply this method to multivariate outputs. We further propose a specific instance that builds on PCP \citep{Wang2023-vn}. The latter is inspired by \cite{Feldman2023-cc} and can be interpreted as an extension of distributional conformal prediction \citep{Chernozhukov2021-sg} for multivariate outputs. \cref{sec:comparison} will present a comparative study of the conformity scores introduced in \cref{sec:existing_conformal_methods} alongside those introduced in this section.

\subsection{CDF-based conformity scores} 
\label{sec:CDF_based}

Consider any conformal method with a conformity score \( s_W \), and define the random variable \( W = s_W(X, Y) \) for a random pair \( (X, Y) \). For an observation \( (x, y) \), we introduce a new conformity score based on the cumulative distribution function (CDF) of \( W \) conditional on \( X = x \), evaluated at \( s_W(x, y) \). This is expressed as:
\begin{align}
	s_\text{CDF}(x, y) 
	&= \mathbb{P}(s_W(X, Y) \leq s_W(x, y) \mid X = x) \\
	&= F_{W \mid X=x}(s_W(x, y) \mid X = x). 
    \label{eq:CDF_score}
\end{align}
This new conformity score measures the rank of \( s_W(x, y) \) relative to the distribution of \( W \) conditional on \( X = x \).

This method applies to any conformity score \( s_W \) and generalizes the (oracle) HPD-split introduced in \cite{Izbicki2020-ed} in the context of univariate regression. Specifically, when \( s_W(x, y) = s_\text{DR-CP}(x, y) \) is used in \eqref{eq:CDF_score}, we recover the \texttt{C-HDR} method. Additionally, by applying the probability integral transform, \( s_\text{CDF}(X, Y) \mid X = x \sim \mathcal{U}(0, 1) \) for \( x \in \mathcal{X} \), meaning that the conformity score's distribution is independent of \( x \). This property ensures that conditional coverage is achieved as \( \mathcal{D}_\text{cal} \to \infty \) (see \cref{sec:proofs_conditional_coverage}, \cref{lemma:cond_coverage_CDF}). A similar observation was made by \cite{Izbicki2020-ed} for \texttt{C-HDR}.

However, in practice, since the distribution of \( Y \mid X = x \) is unknown, \( s_\text{CDF} \) is approximated using Monte Carlo sampling as follows: 
\begin{align}
	&s_{\text{ECDF}}(x, y) = \frac{1}{K} \sum_{k \in [K]} \mathbb{I}\left(s_W(x, \hat{Y}^{(k)}) \leq s_W(x, y)\right), \nonumber \\ 
	&\text{where } \hat{Y}^{(k)} \sim \hat{F}_{Y|x}, \ k \in [K].
	\label{eq:empirical_CDF_score}
\end{align}
\cite{Dheur2024-lm} considered a particular case of this empirical CDF-based approach with the $s_\text{DR-CP}$ score for a bivariate prediction problem involving temporal point processes, where the HPD region is estimated via Monte Carlo sampling.

\paragraph{C-PCP.}
We introduce a useful special case of our new score, called \texttt{C-PCP}, by setting \( s_W(x, y) = s_\text{PCP}(x, y) \) in \eqref{eq:empirical_CDF_score}, which gives:
\begin{align*}
	&s_\text{C-PCP}(x, y) = \\ 
	&\frac{1}{K} \sum_{k \in [K]} \mathbb{I}\left(\min_{l \in [L]} \| \hat{Y}^{(k)} - \tilde{Y}^{(l)} \| \leq \min_{l \in [L]} \| y - \tilde{Y}^{(l)} \|\right).
\end{align*}

Compared to the methods in \cite{Izbicki2020-ed} and \cite{Dheur2024-lm}, this score has the advantage of not requiring the estimation of a predictive density, relying instead on samples from the conditional distribution. Consequently, this score can be applied with any generative model that does not have an explicit density, while still retaining the desirable properties of our score in \eqref{eq:CDF_score}.

Interestingly, \texttt{C-PCP} shares similarities with the recently proposed CP$^2$-PCP method by \cite{Plassier2024-ex}. For a given \( x \in \mathcal{X} \), both methods adapt the radius of the prediction balls based on a second sample of \( K \) instances conditioned on \( x \), requiring a total of \( L + K \) samples. A detailed discussion can be found in \cref{sec:comparison_C_PCP_CP2_PCP}.

\subsection{Latent-based conformity scores} 

Inspired by \citet{Feldman2023-cc}, we propose a latent-based conformity score with key distinctions. First, our method does not require the use of directional quantile regression. Additionally, the conformalization step is performed in the latent space, eliminating the need to construct a grid, which improves both computational efficiency and scalability.

Our base predictor is a conditional invertible generative model \( \hat{Q}: \mathcal{Z} \times \mathcal{X} \to \mathcal{Y} \), which maps a latent random variable \( Z \in \mathcal{Z} \) (e.g., drawn from a standard multivariate normal distribution) to the output space \( \mathcal{Y} \), conditional on input \( X \in \mathcal{X} \) (e.g., using normalizing flows). The model is both conditional and invertible, meaning that \( \hat{Q}( \hat{Q}^{-1}(y ; x) ; x) = y, \forall x \in \mc{X}, y \in \mc{Y} \).

We propose the following conformity score, called \texttt{L-CP}, defined as:
\begin{equation}
	s_\text{L-CP}(x, y) = d_{\mc{Z}}(\hat{Q}^{-1}(y ; x)),
\end{equation}
where \( d_{\mc{Z}}: \mathcal{Z} \to \mathbb{R} \) is a conformity function in the latent space \( \mathcal{Z} \), independent of \( x \). In our experiments, we use \( Z \sim \mathcal{N}(0, I_d) \) and  \( d_{\mc{Z}}(z) = \|z\| \).

The corresponding prediction region is obtained by mapping a region in the latent space, \( R_{\mc{Z}}(\hat{q}) = \{ z \in \mathcal{Z} : d_{\mc{Z}}(z) \leq \hat{q} \} \), to a region in the output space, \( \hat{R}_\text{L-CP}(x) = \{ \hat{Q}(z; x) : z \in R_{\mc{Z}}(\hat{q}) \} \).

\texttt{L-CP} generalizes Distributional Conformal Prediction \citep{Chernozhukov2021-sg}, which is a special case when \( Y \) is univariate (\( d = 1 \)), \( Z \sim \mathcal{U}(0, 1) \), \( d_{\mc{Z}}(z) = |z - \frac{1}{2}| \), and \( \hat{Q}(\cdot; x) \) is the quantile function of \( Y \) given \( x \).

\section{Related Work}
\label{sec:related-work}

Conformal Prediction (CP), introduced by \cite{Vovk1999-vy}, forms the foundation of our work by providing prediction regions with finite-sample coverage guarantees. While CP methods are well-established for univariate regression \citep{Papadopoulos2008-ch,Lei2014-js,Romano2019-kp,Sesia2021-tn} and classification \citep{Romano2020-ed,Angelopoulos2020-cn}, extending them to high-dimensional outputs poses challenges.

To address multivariate prediction challenges, optimal transport methods like cyclically monotone mappings \citep{Carlier2016-pj} define multivariate quantile regions with desirable properties such as existence and uniqueness of mappings. These approaches have been refined by \cite{Hallin2017-gb,Hallin2021-vt,Del_Barrio2022-hz}. Neural network-based techniques leverage normalizing flows \citep{Kan2022-xl,Huang2020-md} or variational autoencoders \citep{Feldman2023-cc} to learn flexible, non-convex quantile regions. Additionally, conformalized highest density regions (HDRs) \citep{Hyndman1996-wx} handle multimodality and have been applied in various contexts \citep{Camehl2024-vw,Izbicki2022-ru,Dheur2024-lm}. Recently, \cite{Wang2023-vn} proposed constructing prediction regions as hyperballs centered on generative model samples, with extensions by \cite{Plassier2024-ex} improving conditional validity.
Other methods utilize copulas \citep{messoudi_copula-based_2021,Sun2022-jb} to model the dependency between variables.
In \cref{sec:further_related_work}, we provide a more comprehensive discussion of related works.

\section{Comparison of Multi-Output Conformal Methods}
\label{sec:comparison}

\begin{figure*}[t]
	\centering
	\includegraphics[width=\linewidth]{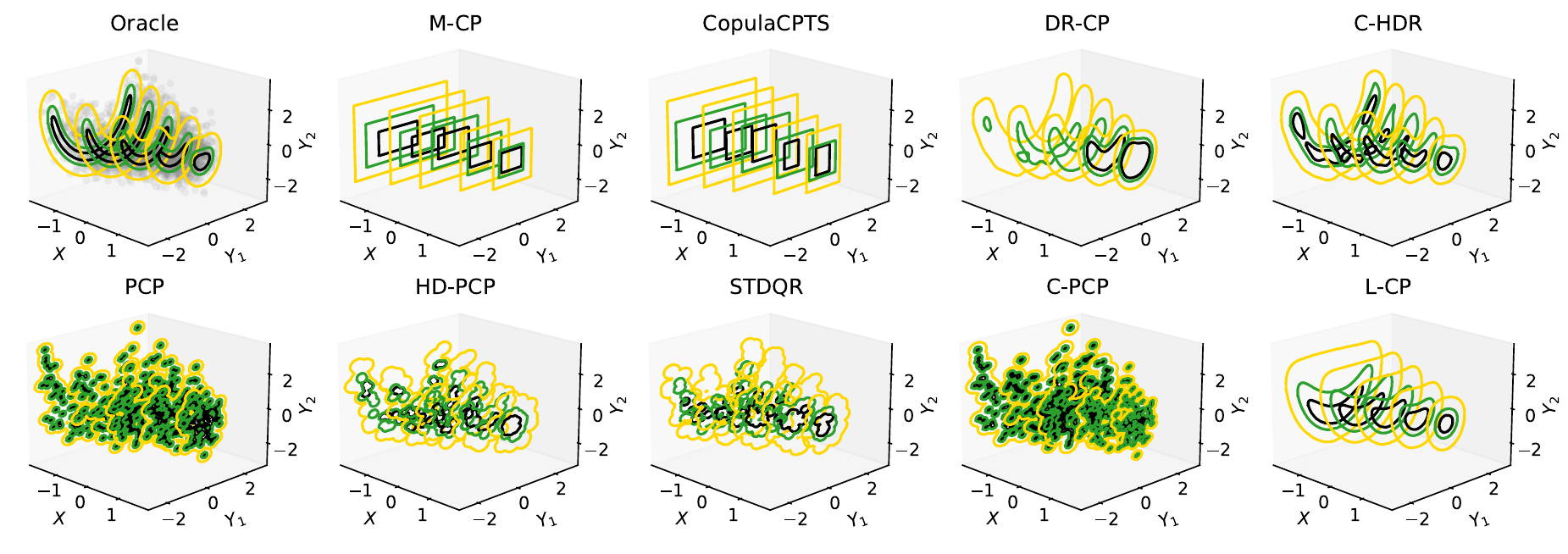}
	\caption{
		Prediction regions for a bivariate unimodal dataset, conditional on a unidimensional input. The black, green, and yellow contours represent regions with nominal coverage levels of 20\%, 40\%, and 80\%, respectively.
	}
	\label{fig:contours/unimodal_heteroscedastic}
    \vspace{-0.3cm}
\end{figure*}

In this section, we present a unified comparison of the conformity scores introduced in \cref{sec:existing_conformal_methods} and the generalized scores proposed in \cref{sec:CDF_based}.%

\subsection{Illustrative examples}
\label{sec:illustrative_example}
We provide illustrative examples of bivariate prediction regions for different conformal methods on simulated data, covering both unimodal (\cref{fig:contours/unimodal_heteroscedastic}) and bimodal distributions (\cref{fig:contours/bimodal_heteroscedastic} in \cref{sec:additional_toy_example}).
The data generating processes are detailed in \cref{sec:additional_toy_example}.
Additionally, we present bivariate prediction regions for a real-world application, predicting a taxi passenger's drop-off location based on the passenger's information (\cref{fig:taxi_application/low_uncertainty,fig:taxi_application/high_uncertainty} in \cref{sec:taxi_application}).

In both Figures \ref{fig:contours/unimodal_heteroscedastic} and \ref{fig:contours/bimodal_heteroscedastic}, the black, green, and yellow contours represent prediction regions with nominal coverage levels of 20\%, 40\%, and 80\%, respectively. The top-left panel illustrates the density level sets of the oracle distribution \( F_{Y|X} \). The remaining panels display the prediction regions generated by various conformal methods, all utilizing the MQF$^2$ base predictor, as detailed in \cref{sec:study_base_predictors}. 

We observe the following for the unimodal case in Figure \ref{fig:contours/unimodal_heteroscedastic}. \texttt{M-CP} and \texttt{CopulaCPTS} capture heteroscedasticity but produce rectangular prediction regions, which do not align with the circular level sets of the oracle conditional distribution, resulting in a lack of sharpness. \texttt{DR-CP} fails to maintain conditional coverage, and for \( X = 1 \), the absence of black and green contours indicates that the predictive density does not reach the threshold \( -\hat{q} \) defined in \cref{eq:score_DR-CP} for coverage levels of 0.2 and 0.4. \texttt{C-HDR} generates prediction regions that closely resemble the oracle level sets.%
\texttt{PCP} generates highly discontinuous regions, especially at lower coverage levels, where the regions appear as balls centered on individual samples. In contrast, \texttt{HD-PCP} and \tt{STDQR} yield smoother, more continuous regions but require the estimation of a predictive density function or the identification of a latent transformation.

For our methods, unlike \texttt{PCP}, \texttt{C-PCP} adjusts the radius of the prediction regions to improve conditional coverage. This is evident in the example, where the radius of the balls for \( X = -1 \) is smaller than for \( X = 1 \), as indicated by the tighter regions around the samples. \texttt{L-CP} generates prediction regions that closely align with the oracle level sets, demonstrating good conditional coverage.

For the bimodal distribution in \cref{fig:contours/bimodal_heteroscedastic} (\cref{sec:additional_toy_example}), the prediction regions generated by \texttt{M-CP} and \texttt{L-CP} are connected, failing to capture the bimodal nature of the distribution. For the real-world application, \cref{fig:taxi_application/low_uncertainty,fig:taxi_application/high_uncertainty} (\cref{sec:taxi_application})  illustrate predictions under low and high uncertainty, respectively. Our methods, \texttt{L-CP} and \texttt{C-PCP}, alongside \texttt{M-CP} and \texttt{C-HDR}, demonstrate the best adaptability to outputs with varying levels of uncertainty.

\subsection{Properties}
\label{sec:properties}

\begin{table*}[t]
	\caption{Comparison of multivariate conformal methods according to various criteria. (*) \texttt{M-CP} achieves asymptotic conditional coverage under certain assumptions (\cref{sec:proofs_conditional_coverage_M-CP}). (**) While \texttt{L-CP} does not require evaluating the predictive density, computing the scores and regions requires calculating the inverse of the quantile function \( \hat{Q}^{-1} \).}
    \centering
	\begin{tabular}{lcccccc}
		\toprule
		Method & Type of region & \makecell{Asymptotic \\ conditional \\ coverage} & \makecell{Small \\ average \\ size} & \makecell{Computational \\ complexity} & \makecell{Predictive \\ density \\ not required} & \makecell{Sampling \\ procedure \\ not required} \\
		\midrule
		\tt{M-CP} & Hyperrectangle & \xmark (*) & $\nstars[3]{1}$ & $O(dM)$ & \cmark & \cmark \\
		\tt{CopulaCPTS} & Hyperrectangle & \xmark & $\nstars[3]{1}$ & $O(dM + C)$ & \cmark & \cmark \\
		\tt{DR-CP} & Density superlevel set & \xmark & $\nstars[3]{3}$ & $O(D)$ & \xmark & \cmark \\
		\tt{C-HDR} & Highest density region & When $K \to \infty$ & $\nstars[3]{2}$ & $O(K (D + S))$ & \xmark & \xmark \\
		\tt{PCP} & Union of $d$-balls & \xmark & $\nstars[3]{1}$ & $O(LS)$ & \cmark & \xmark \\
		\tt{HD-PCP} & Union of $d$-balls & \xmark & $\nstars[3]{1}$ & $O(L (D + S))$ & \xmark & \xmark \\
		\tt{STDQR} & Union of $d$-balls & \xmark & $\nstars[3]{1}$ & $O(LS)$ & \cmark (**) & \xmark \\
		\tt{C-PCP} & Union of $d$-balls & When $K \to \infty$ & $\nstars[3]{1}$ & $O((K + L)S)$ & \cmark & \xmark \\
		\tt{L-CP} & Quantile region & \cmark  & $\nstars[3]{1}$ & $O(Q)$ & \cmark (**) & \xmark \\
		\bottomrule
	\end{tabular}
	\label{table:comparison_methods}
	\vspace{-7pt}
\end{table*}

In this section, we compare conformal methods based on several key properties. In the following, we use \( \dequal \) to denote equality in distribution and \( \asequal \) to denote almost sure equality.

\textbf{Marginal coverage.} 
All conformal methods presented achieve the classical finite-sample \textit{marginal coverage}. But, as noted by \cite{Wang2023-vn} (Theorem 1), the marginal coverage of methods such as \texttt{C-HDR}, \texttt{PCP}, \texttt{HD-PCP}, and \texttt{C-PCP} also depends on the randomness of the generated samples. Additionally, in \cref{sec:proof_marginal_coverage}, we demonstrate that the marginal coverage, conditional on the calibration dataset \( \mathcal{D}_\text{cal} \) and the samples drawn from it, follows a beta distribution, using standard arguments. 
\texttt{CopulaCPTS} is the only method that does not enter into the standard split-conformal algorithm and who does not satisfy the above property.

\textbf{Conditional coverage.}
We examine the \textit{asymptotic conditional coverage} (ACC) property, which corresponds to conditional coverage as defined in \cref{eq:conditional_coverage}, under the following conditions:

\begin{enumerate}
\item The base predictor corresponds to the oracle distribution \( F_{Y|x}  \). Specifically, for \texttt{M-CP}, \( \hat{l}_i(x) = Q_{Y_i}(\alpha_l \mid x) \) and \( \hat{u}_i(x) = Q_{Y_i}(\alpha_u \mid x) \); for \texttt{DR-CP}, \texttt{C-HDR}, and \texttt{HD-PCP}, \( \hat{f}(\cdot \mid X) = f_{Y|X} \); for \texttt{L-CP}, \( \hat{Q}(Z | X) \dequal Y | X \); and for \texttt{PCP} and \texttt{C-PCP}, \( \hat{F}_{Y|X} = F_{Y|X} \).

\item As \( |D_\text{cal}| \to \infty \), \( \hat{q} \) converges to the \( 1 - \alpha \) quantile of the random variable \( s(X, Y) \).
\end{enumerate}

While these assumptions are strong, it is crucial to demonstrate that the conformal procedure preserves the performance of the base model.
Our empirical results (\cref{sec:study}) demonstrate that methods achieving ACC under these assumptions also exhibit superior empirical conditional coverage in real-world scenarios across diverse datasets and base predictors.
Notably, \texttt{L-CP} is the only method that achieves ACC without additional assumptions. \texttt{C-HDR} and \texttt{C-PCP} achieve ACC with \( K \to \infty \). Finally, \texttt{M-CP} achieves ACC under specific assumptions. Assuming that \( Y_1, \dots, Y_d \) are conditionally independent given \( X \), \texttt{M-CP} achieves ACC if \( \alpha_u - \alpha_l = \sqrt[d]{1 - \alpha} \). Furthermore, under the assumption that \( Y_1 \mid X \asequal \dots \asequal Y_d \mid X \), \texttt{M-CP} achieves ACC if \( \alpha_u - \alpha_l = 1 - \alpha \). The true dependence typically lies between these two extremes. We provide detailed proofs of these statements in \cref{sec:proofs_conditional_coverage}.

As discussed in \cref{sec:illustrative_example}, \texttt{DR-CP} fails to achieve ACC. Likewise, \texttt{PCP}, \texttt{HD-PCP} and \texttt{STDQR} do not achieve ACC, as they are constrained to producing regions with limited volume, conditional on any \( x \in \mathcal{X} \). Assuming each ball has a volume of \( V \), \texttt{PCP} generates a region with a total volume of at most \( LV \). When \( Y \mid X = x \) has high uncertainty, it may be impossible to capture sufficient probability mass with a volume restricted to \( LV \).

\textbf{Region size.}
Among the methods that achieve asymptotic conditional coverage, \texttt{C-HDR} is expected to perform best, as it converges to the highest density regions, which correspond to the smallest volume regions \citep{Hyndman1996-wx}. Prediction regions from \texttt{C-PCP} are expected to have a larger volume since they are constrained to a union of \(L\) \(d\)-balls. Similarly, prediction regions from \texttt{L-CP} are less flexible than those from \texttt{C-HDR}, as they must be connected when the region \( R_Z(\lambda) \) in the latent space is connected for all \( \lambda \in \mathbb{R} \) and \( \hat{Q} \) is continuous. This constraint may be desirable when more interpretable regions are preferred \citep{Sesia2021-tn}. 

Among the remaining methods, \texttt{DR-CP} minimizes the expected region size \( \mathbb{E}[|\hat{R}(X)|] \) asymptotically, as shown in Theorem 1 by \cite{Sadinle2016-yr}. In contrast, \texttt{M-CP} and \texttt{CopulaCPTS} are expected to yield larger prediction regions, as they do not explicitly account for dependencies between outputs. While \texttt{PCP}, \texttt{HD-PCP}, and \texttt{C-PCP} can capture multimodality, they are susceptible to the randomness of the sampling procedure, as evidenced by the shape of the regions in \cref{fig:contours/unimodal_heteroscedastic}. Furthermore, since they rely on a finite union of \(L\) \(d\)-balls, they are subject to the curse of dimensionality in high-dimensional spaces, where data sparsity necessitates larger balls to maintain marginal coverage.

\textbf{Computing time.}
\cref{table:comparison_methods} reports the computational complexity to compute the score of each method. Let $M$ represent the time required to compute the conformity score for a single dimension using a univariate conformal method,
 and $C$ the optimization time for CopulaCPTS. Let $D$, $S$, and $Q$ denote the time required for density evaluation, sampling, and calculating the inverse of the quantile function $\hat{Q}^{-1}$, respectively. In many cases, $M$ is relatively low, $C$ is slightly larger, while $D$, $S$, and $Q$ are comparable across methods. \texttt{C-HDR}, \texttt{PCP}, \texttt{HD-PCP}, \texttt{STDQR} and \texttt{C-PCP} are significantly slower than \texttt{M-CP}, \texttt{L-CP}, and \texttt{DR-CP} since they need to generate a large number of samples to compute the conformity score or region (we used \( K = L = 100 \) in our experiments).

\textbf{Constraints on the base predictor.}
Finally, certain methods stand out because they do not need to evaluate the predictive density \( \hat{f} \) or generate samples. \texttt{M-CP} and \texttt{CopulaCPTS} only require a univariate model for the distribution of each output \( Y_i \), without needing a model for the joint distribution of \( Y \). \texttt{DR-CP} does not require sampling from the model, which is beneficial when using some normalizing flows that are slower to invert, such as Masked Autoregressive Flows \citep[MAF,][]{Papamakarios2017-uh} or Convex Potential Flows \citep{Huang2020-md}. \texttt{PCP} and \texttt{C-PCP} do not require evaluating the predictive density \( \hat{f} \), making them compatible with any generative model, including diffusion models and GANs. \texttt{L-CP} and \tt{STDQR} do not require predictive density evaluation but require the model to be invertible. We summarize the comparisons in Table \ref{table:comparison_methods}.

\subsection{Connection between sample-based and density-based methods}
\label{sec:connections}

\begin{figure}[b]
    \begin{center}
        \includegraphics[width=0.5\linewidth]{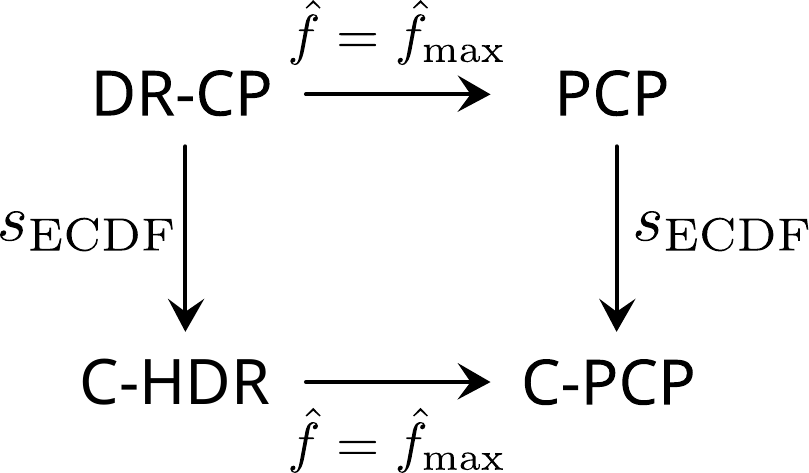}
    \end{center}
    \caption{Connections between different methods.}
    \label{fig:methods_relation}
\end{figure}

Interestingly, the sample-based methods (\texttt{PCP}, \texttt{HD-PCP}, \texttt{C-PCP}) can be viewed as special cases of density-based methods (\texttt{DR-CP}, \texttt{C-HDR}).

Let us assume a common density function $\hat{f}$ is used for the base predictor in these conformal methods. While \texttt{PCP} and \texttt{C-PCP} do not require knowledge of a density function, we assume that $\hat{f}(\cdot \mid x)$ and $\hat{F}_{Y|x}$ correspond to the same distribution. Let \( \tilde{Y}^{(l)} \sim \hat{F}_{Y|x} \) for \( l \in [L] \), and \( f_{\mathbb{S}}(\cdot; \tilde{Y}^{(l)}) \) be a density function with spherical level sets, centered at \( \tilde{Y}^{(l)} \), such as a standard multivariate Gaussian \( \mathcal{N}(\cdot; \tilde{Y}^{(l)}, I_d) \).  For \( x \in \mathcal{X} \), we define a new density function \( \hat{f}_\text{max}(y \mid x) = \max_{l \in [L]} f_{\mathbb{S}}(y; \tilde{Y}^{(l)}) / C \), where \( C \) is a normalization constant ensuring that \( \hat{f}_\text{max}(\cdot \mid x) \) integrates to 1. The following proposition establishes the relationship between these methods.

\begin{restatable}{proposition}{specialcasesamplebased}
	\label{proposition:special_case_sample_based}
	
	\tt{PCP} is equivalent to \tt{DR-CP} with $\hat{f} = \hat{f}_\text{max}$.
	Similarly, \tt{HD-PCP} is equivalent to \tt{DR-CP} with $\hat{f} = \hat{f}_\text{max}$ where only $\floor{(1 - \alpha) L}$ samples with the highest density among $\{\tilde{Y}^{(l)}\}_{l \in [L]}$ are kept.
	Finally, \tt{C-PCP} is equivalent to \tt{C-HDR} with $\hat{f} = \hat{f}_\text{max}$.
\end{restatable}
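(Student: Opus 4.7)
The plan is to establish each of the three equivalences by showing that the two conformity scores involved are monotone transformations of one another, so that both split conformal procedures produce the same ordering of calibration scores (hence the same prediction regions after thresholding). The key analytical fact driving all three parts is that, since $f_{\mathbb{S}}(\cdot\,;\tilde Y^{(l)})$ has spherical level sets and is strictly decreasing in $\|y-\tilde Y^{(l)}\|$ (e.g.\ the standard Gaussian), one has
\begin{equation*}
\hat f_{\text{max}}(y\mid x)
=\tfrac{1}{C}\max_{l\in[L]} f_{\mathbb{S}}\bigl(y;\tilde Y^{(l)}\bigr)
= \tfrac{1}{C}\,f_{\mathbb{S}}\!\bigl(0;0\bigr)\circ \Psi\!\Bigl(\min_{l\in[L]}\|y-\tilde Y^{(l)}\|\Bigr),
\end{equation*}
where $\Psi$ is strictly decreasing. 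In other words, $\hat f_{\text{max}}(y\mid x)$ is a strictly decreasing function of $\min_{l\in[L]}\|y-\tilde Y^{(l)}\|$ that does \emph{not} depend on which particular index attains the minimum.

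\textbf{Part 1 (PCP $\equiv$ DR-CP with $\hat f_{\text{max}}$).} I would start from the DR-CP score $s_{\text{DR-CP}}(x,y)=-\hat f_{\text{max}}(y\mid x)$ and apply the identity above to write $s_{\text{DR-CP}}(x,y) = \varphi\bigl(s_{\text{PCP}}(x,y)\bigr)$ for some strictly increasing $\varphi$. Because the split conformal region $\{y:s(x,y)\le \hat q\}$ is invariant under strictly increasing transformations of $s$ (the $k_\alpha$-th order statistic commutes with such transformations), calibrating with $s_{\text{DR-CP}}$ against $\hat q_{\text{DR-CP}}=\varphi(\hat q_{\text{PCP}})$ yields exactly $\hat R_{\text{PCP}}(x)$, which from \eqref{eq:region} is the union of balls of radius $\hat q_{\text{PCP}}$ around each sample.

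\textbf{Part 2 (HD-PCP $\equiv$ DR-CP on the pruned sample).} HD-PCP is PCP restricted to the $\lfloor(1-\alpha)L\rfloor$ samples with highest estimated density among $\{\tilde Y^{(l)}\}$; equivalently, it is PCP with $L$ replaced by this pruned collection. The argument of Part 1 applies verbatim to the pruned set: $\hat f_{\text{max}}$ built from the kept samples is still a strictly decreasing function of $\min_l\|y-\tilde Y^{(l)}\|$ over the retained indices, so the score identification and the monotone invariance of split CP give the equivalence.

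\textbf{Part 3 (C-PCP $\equiv$ C-HDR with $\hat f_{\text{max}}$).} Here I would substitute $\hat f=\hat f_{\text{max}}$ into the HPD definition \eqref{eq:HPD} and use the monotone relationship between $\hat f_{\text{max}}$ and $\min_l\|\cdot-\tilde Y^{(l)}\|$ to rewrite the event $\{\hat f_{\text{max}}(\hat Y\mid x)\ge \hat f_{\text{max}}(y\mid x)\}$ as $\{\min_l\|\hat Y-\tilde Y^{(l)}\|\le \min_l\|y-\tilde Y^{(l)}\|\}$. The Monte Carlo estimator \eqref{eq:empirical_CDF_score} of this probability, using $\hat Y^{(k)}\sim \hat F_{Y\mid x}$, is then syntactically identical to $s_{\text{C-PCP}}(x,y)$. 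Since C-HDR is implemented in exactly this empirical form (as noted after \eqref{eq:HPD}), the two scores coincide and hence so do the calibrated regions.

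\textbf{Main obstacle.} The only subtle point is that equality of scores for DR-CP and PCP holds as a strictly monotone transformation rather than as strict equality, so one must carefully argue (i) that the monotone-invariance of the split conformal quantile applies and (ii) that ties on the calibration set are resolved consistently by the two scores. For C-PCP/C-HDR, one must also be explicit that the samples $\tilde Y^{(l)}$ used to define $\hat f_{\text{max}}$ are the same samples used by PCP to build the balls, so that the two Monte Carlo constructions are coupled rather than independent; otherwise the equivalence is only in distribution. Both issues are straightforward once stated, making the overall proof essentially a bookkeeping exercise around the single monotonicity identity above.
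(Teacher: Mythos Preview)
Your proposal is correct and follows essentially the same route as the paper: both reduce Parts~1 and~2 to the observation that $-\hat f_{\text{max}}(y\mid x)$ is a strictly increasing transformation of $\min_{l}\|y-\tilde Y^{(l)}\|$ and invoke the monotone-invariance of split conformal regions (which the paper states as a separate lemma), and both handle Part~3 by showing that the indicator event inside the empirical C-HDR score rewrites exactly as the C-PCP indicator, so the two scores are literally equal rather than merely monotonically related. Your remarks on coupling of the samples and tie-handling go slightly beyond what the paper spells out, but the core argument is identical.
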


We provide a proof in \cref{sec:proofs_connection}. Although these sample-based methods are special cases of density-based approaches, the key advantage of \texttt{PCP} and \texttt{C-PCP} is that they rely solely on a sampling procedure, without requiring a predictive density \( \hat{f} \) as base predictor. Figure \ref{fig:methods_relation} summarizes the connections between the main conformal methods.

\section{A Large-Scale Study of Multi-Output Conformal Methods}
\label{sec:study}

In this section, we present a large-scale study of multi-output conformal methods using 13 tabular datasets from previous studies \citep{Tsoumakas2011-wf,Feldman2023-cc,Wang2023-vn,Del_Barrio2022-hz,Camehl2024-vw}. To ensure sufficient data for training, calibration, and testing, we include only datasets with at least 2,000 instances. The selected datasets contain between 7,207 and 50,000 data points, with the number of input features \( p \) ranging from 1 to 279 and the number of output variables \( d \) ranging from 2 to 16.

We consider three base predictive models: the Multivariate Quantile Function Forecaster (MQF$^2$), a normalizing flow \citep{Kan2022-xl}, Distributional Random Forests \citep{Cevid2022-ev}, and a multivariate Gaussian mixture model \citep{Bishop1994-kj}. We present results for MQF$^2$ in the main text, while similar results for the other models are provided in \cref{sec:additional_results}. We compare the methods using several metrics, including conditional coverage (WSC, CEC-X, and CEC-V), marginal coverage (MC), region size, and computational time. A detailed description of the experimental setup is provided in \cref{sec:experimental_setup}.

\begin{figure*}
	\centering
	\includegraphics[width=\linewidth]{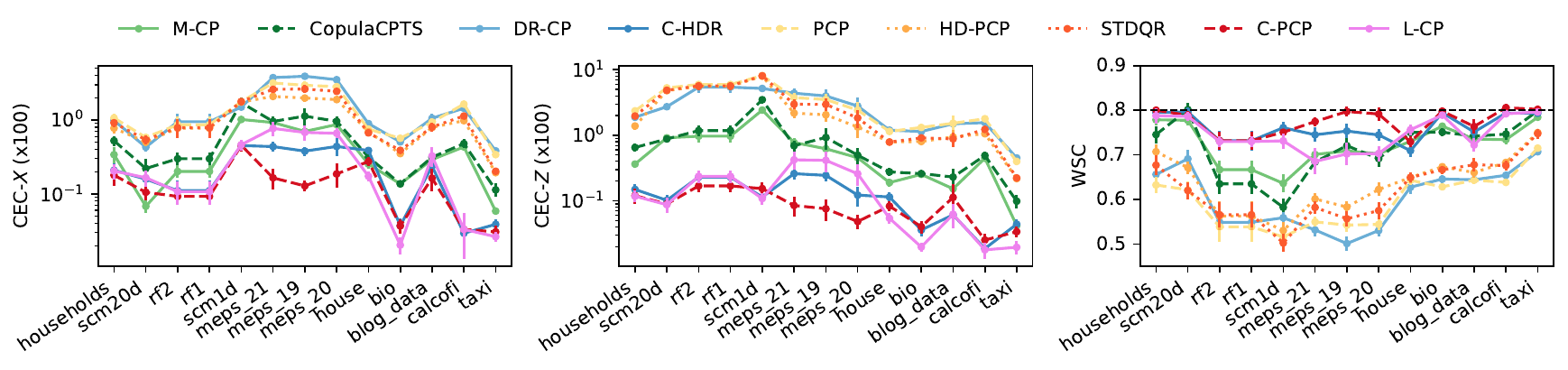}
	\vspace{-0.8cm}
	\caption{Conditional coverage metrics across datasets sorted by size. CEC-X and CEC-Z should be minimized while WSC should approach $1 - \alpha$.}
	\vspace{-0.4cm}
	\label{fig:pointplot/multiple}
\end{figure*}

\paragraph{Conditional coverage.}

\cref{fig:pointplot/multiple} presents the results for all datasets, ordered by increasing dataset size. On most datasets, \texttt{C-PCP}, \texttt{L-CP}, and \texttt{C-HDR} obtain the best conditional coverage. In contrast, \texttt{HD-PCP}, \texttt{STDQR}, \texttt{PCP}, and \texttt{DR-CP} are the least conditionally calibrated. Finally, \texttt{M-CP} and \texttt{CopulaCPTS} attain intermediate conditional coverage, with \texttt{M-CP} performing slightly better. These results align with our analysis in \cref{sec:properties}, where we showed that \texttt{C-PCP}, \texttt{L-CP}, and \texttt{C-HDR} achieve asymptotic conditional coverage, while \texttt{HD-PCP}, \texttt{STDQR}, \texttt{PCP}, and \texttt{DR-CP} do not, and \texttt{M-CP} achieves it only under specific conditions. Finally, Figure \ref{pointplot/DRF-KDE/all/multiple} shows that all methods achieve marginal coverage, as expected.

\begin{figure}[b]
	\centering
	\includegraphics[width=0.72\linewidth]{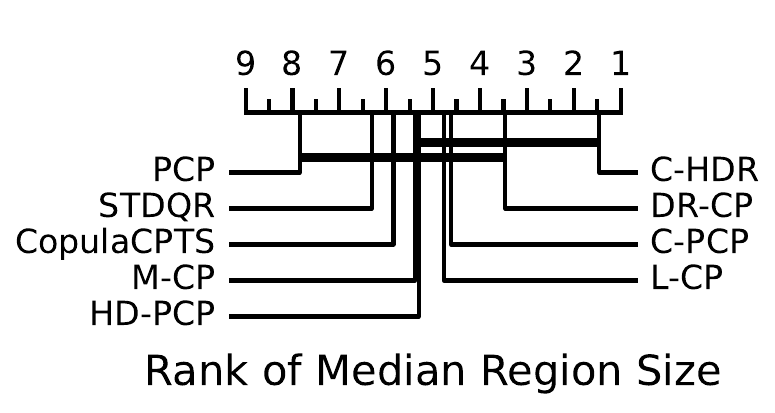}
	\vspace{-0.1cm}
	\caption{CD diagrams with the base predictor MQF$^2$ based on 10 runs per dataset and method.}
	\vspace{-0.4cm}
	\label{fig:cd_diagrams/MQF2/median_region_size}
\end{figure}

\paragraph{Region size.} \cref{fig:cd_diagrams/MQF2/median_region_size} presents a critical difference (CD) diagram comparing the median region size of all methods across datasets. Higher-ranked methods (further right) perform better. Thick horizontal lines indicate models with no statistically significant difference at the 0.05 level (see \cref{sec:CD_diagrams} for details).

Among the methods that achieve asymptotic conditional coverage, \texttt{C-HDR} yields the smallest median region size, as expected, since its regions converge to the highest density regions \citep{Izbicki2022-ru}. \texttt{C-PCP} and \texttt{L-CP} produce slightly larger regions, though the difference is not significant for these datasets. Among the remaining methods, \texttt{DR-CP} yields the smallest median region size due to the flexibility of its regions. 
In contrast, \texttt{M-CP} and \texttt{CopulaCPTS} generate larger regions, which is expected given their less flexible hyperrectangular shape.
\tt{PCP} tends to obtain the largest region sizes due to the added randomness of sampling, whereas \tt{STQDR} and \tt{HD-PCP} mitigate this by removing samples from low-density areas, resulting in more compact regions. Finally, \cref{fig:cd_diagrams/MQF2} in \cref{sec:additional_results} provides results for the mean region size, where \texttt{DR-CP} consistently performs best, as it asymptotically minimizes this metric, as explained in \cref{sec:properties}.

\paragraph{Computation time.} \cref{fig:pointplot/total_time} shows the total computation time for each method. \texttt{M-CP} and \texttt{CopulaCPTS} have the shortest computation times, as they do not require learning a complex model for the output joint distribution. \texttt{L-CP} and \texttt{DR-CP} follow, benefiting from the absence of per-instance sampling. In contrast, sampling-based methods typically require 100 to 200 times more computation time.

\begin{figure}[b]
	\centering
	\includegraphics[width=0.78\linewidth]{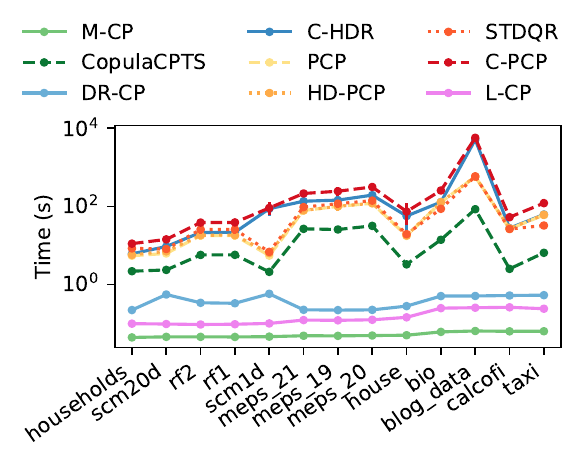}
	\vspace{-0.3cm}
	\caption{Total time in seconds for calibration and test.}
	\vspace{-0.5cm}
	\label{fig:pointplot/total_time}
\end{figure}

Finally, we extend our comparison to a regression problem where the output is an image, which has a higher dimensionality than the previously considered tabular datasets. Specifically, we use the CIFAR-10 dataset \citep{Krizhevsky2014-qr}, consisting of 32×32 RGB images, each labeled with one of 10 possible classes. We train the base predictor Glow \citep{Kingma2018-hp}, conditioned on the image label, where the output space is \(\mc{Y} = [0,1]^{3 \times 32 \times 32}\) (\(d = 3072\)) and the input space is \(\mc{X} = \{0, \dots, 9\}\) (\(p = 1\)). The results, detailed in \cref{sec:results_cifar_10}, lead to similar conclusions regarding conditional coverage, region size, and computational time.

\section{Conclusion}

We conducted a unified comparative study to evaluate several conformal methods for multi-output regression, encompassing marginal-based, density-based and sample-based methods. We introduced two conformity scores with asymptotic conditional coverage: \texttt{C-PCP}, compatible with any generative model, and \texttt{L-CP}, computationally efficient using an invertible model. We highlighted their respective strengths and weaknesses. Finally, we conducted a comprehensive empirical study to compare all the multi-output conformal methods considered in this work.

Given the generality of the presented methodologies, future directions of research should focus on extensions of these methodologies on more complex outputs such as semi-structured or unstructured data (e.g. images, text, or graphs).

Investigating finite-sample conditional guarantees, as in \cite{Plassier2024-ex}, is another potential direction. Our methods are also compatible with conformal risk control \citep{Angelopoulos2024-ld}, allowing for extensions to other risks beyond marginal coverage.

\clearpage
\section*{Impact statement}

This work contributes to the development of statistically reliable and interpretable machine learning algorithms. Enhancing trust and transparency in predictive modeling facilitates the creation of more practical and accessible models for real-world applications.

\printbibliography

\appendix
\newpage
\onecolumn

\section{Related Work}
\label{sec:further_related_work}

Our research builds on a broad body of literature that spans several closely related themes. This supplementary section provides a concise overview of these topics.

In the realm of multivariate functional data, \cite{Diquigiovanni2022-yy} introduced conformal predictors that create adaptive, finite-sample valid prediction bands, with extensions into time series applications, particularly in energy markets \citep{Diquigiovanni2021-mk}.
In image processing, recent applications \citep{Horwitz2022-rj,Teneggi2023-mk} apply CP in a pixel-wise manner, resulting in hyperrectangular regions that may not capture pixel dependencies effectively.

For multi-step-ahead or multi-horizon forecasting, predictions can be made across multiple outputs simultaneously rather than sequentially, aligning with a multi-output forecasting framework. \cite{Stankeviciute2021-zu} explored multi-horizon time series forecasting using recurrent neural networks (RNNs), incorporating univariate conformal techniques with nominal coverage adjustments via Bonferroni correction. Similarly, \cite{English2024-iw} adapted the Amplitude-Modulated L-inf norm method from \cite{Diquigiovanni2021-bh} for multi-output, multi-step forecasting.

In multi-target regression, \cite{Messoudi2021-gz} applied copula functions in deep neural networks to provide multivariate predictions with guaranteed coverage. Their findings suggest that simple parametric copulas can work for certain datasets, but more complex copulas may be required for well-calibrated predictions, which introduces challenges, as complex copulas typically require significant calibration data.  Building on this, \cite{Sun2022-fg} proposed a copula-based method for multi-step time series forecasting, optimizing the calibration and efficiency of confidence intervals. However, this method requires two calibration phases and is primarily feasible with large calibration datasets. Moreover, its validity relies on the empirical copula, limiting applicability to other learnable copula classes. One very recent advancement on the subject, following ideas expressed by \cite{Messoudi2021-gz} in their conclusions, is \cite{park_semiparametric_2024}, where the dependence structure between marginal distributions is recovered via the use of vine copulas.

Another set of methodologies that tackle multi-output problems are based on multiplicity-correction approaches for multiple testing. \cite{timans_adaptive_2024} uses permutation tests, and obtains a globally valid prediction.

In the context of conformal prediction, the flexibility in configuring the prediction region is a degree of freedom for the modeler. To overcome the limitations of traditional hyper-rectangular prediction regions, \cite{Messoudi2022-xx} introduced ellipsoidal uncertainty sets that enable instance-specific adaptation of confidence regions. \cite{Johnstone2022-jw} advanced multi-output regression by developing efficient techniques for approximating conformal prediction sets without retraining the model, although their approach relies heavily on the predictive model being a linear function of \( Y \). \cite{Xu2024-yh} constructed ellipsoidal prediction regions for time series, capable of modeling dependencies between outputs, though this method does not handle multimodality. Our work closely connects with the multivariate conformal prediction literature, where multi-horizon prediction is viewed as a prediction across multiple outputs.

Overall, as this study demonstrates, the flexibility of conformal prediction allows for coherent handling of diverse data types. Multi-output problems represent one facet of a broader taxonomy, as explored by \cite{zhou2024-dp}, who discuss further developments in multi-output conformal prediction.

\section{Additional multi-output conformal methods}
\label{sec:additional_methods}

In this section, we describe the prediction regions \( \hat{R} \) for additional methods
\texttt{M-CP} and \texttt{CopulaCPTS} both produce hyperrectangular regions, while \texttt{STDQR} and \texttt{HD-PCP} can be viewed as more efficient variants than \texttt{PCP}. However, they need either a transformation to a latent space or a predictive density function.

\paragraph{M-CP.}

\cite{Zhou2024-sq} applied a univariate conformal method to each output \( i \in [d] \) of the multivariate response. Specifically, given a conformity score \( s_i \) for the \( i \)-th dimension, joint coverage across all dimensions can be achieved using the following conformity score:
\begin{equation}
    s_{\text{M-CP}}(x, y) = \max_{i \in [d]} s_i(x, y_i).
    \label{eq:simultaneous_score}
\end{equation}
A similar score has been considered by \citet{Diquigiovanni2021-bh} in the context of functional regression. 

In this work, we use Conformalized Quantile Regression \citep[CQR,][]{Romano2019-kp} for each output \( i \in [d] \), where the conformity score is given by:
\begin{equation}
    s_i(x, y_i) = \max\{\hat{l}_i(x) - y_i, y_i - \hat{u}_i(x)\},
    \label{eq:cqr_score}
\end{equation}
with \( \hat{l}_i(x) \) and \( \hat{u}_i(x) \) representing the lower and upper conditional quantiles of \( Y_i|X=x\) at levels \( \alpha_l \) and \( \alpha_u \), respectively. In our experiments, we consider equal-tailed prediction intervals, where \( \alpha_l = \frac{\alpha}{2} \), \( \alpha_u = 1 - \frac{\alpha}{2} \), and \( \alpha \) denotes the miscoverage level. The corresponding prediction region $\hat{R}_\text{M-CP}(x) = \bigtimes_{i=1}^d [\hat{l}_i(x) - \hat{q}, \hat{u}_i(x) + \hat{q}]$ is a hyperrectangle.

\paragraph{CopulaCPTS.}

CopulaCPTS \citep{sun_copula_2024} is originally designed for time-series but the calibration procedure is valid for any multi-dimensional variable.
It models the joint probability of uncertainty for each output with a copula.
The calibration set is divided into two subsets $\D_{\text{cal}-1}$ and $\D_{\text{cal}-2}$.
$\D_{\text{cal}-1}$ serves for the estimation of a CDF on the conformity score of each output and $\D_{\text{cal}-2}$ allows to calibrate the copula.
CopulaCPTS can combine any univariate or multivariate conformity scores.
In this paper, we use the CQR score $s_i$ \cref{eq:cqr_score} for each dimension $i \in [d]$.

Denote $\hat{F}_i$ the empirical CDF of the conformity scores $\{ s_i(x, y_i) \}_{(x, y) \in \D_{\text{cal}-1}}$ for $i \in [d]$, and $\hat{F}^{-1}_i$ the corresponding empirical quantile function.
In practice, in the goal of finding minimal region sizes with marginal validity, CopulaCPTS computes the optimal $s_1^*, \dots, s_d^*$ that minimize the following loss using stochastic gradient descent:
\begin{equation} 
\mathcal{L}(\hat{s}_1, \dots, \hat{s}_d) = \frac{1}{|\mathcal{D}_{\text{cal}-2}|} \sum_{(x, y) \in \mathcal{D}_{cal-2}} \prod_{i=1}^{d} \mathds{1} \left[ \hat{F}_i(s_i(x, y_i)) < \hat{F}_i^{-1}(\hat{s}_i) \right] - (1-\alpha).
\end{equation}

Then, the prediction region is defined as:
\begin{equation} 
\hat{R}_\text{CopulaCPTS}(x) = \{y \in \mc{Y}: \forall i \in [d], s_i(x, y_i) < s_i^* \}
\end{equation}

\cite{sun_copula_2024} proved that this prediction set achieves marginal validity.
However, it does not enter into the standard split-conformal prediction framework.

\paragraph{ST-DQR.} Motivated by the limitation that existing multivariate quantile regression methods do not allow the construction of regions with arbitrary shapes, \cite{Feldman2023-cc} proposed to construct convex regions in a latent space \( \mathcal{Z} \) using directional quantile regression \citep{Paindaveine2011-ms}. These regions are then mapped to the output space \( \mathcal{Y} \) using a conditional variational autoencoder (CVAE), allowing a non-linear mapping between the two spaces. They further apply a conformalization step by creating a grid of samples in the latent space \( \mathcal{Z} \), mapping samples from region in the latent space to the output space \( \mathcal{Y} \), and constructing \( d \)-balls around the mapped samples, similarly to \texttt{PCP}.

\paragraph{HD-PCP.} When a predictive density is available alongside a sample of \( L \) points, \cite{Wang2023-vn} proposed an extension to \texttt{PCP}, called \texttt{HD-PCP}. This method uses the same conformity score as in \cref{eq:score_PCP}, and retains only the \( \floor{(1 - \alpha) L} \) samples with the highest density, ensuring that the prediction region is concentrated on high-density points.

\section{Relationship between conformity scores and regions}
\label{sec:relation_score_region}

\cref{sec:existing_conformal_methods} and \cref{sec:proposed_conformal_methods} in the main text presented several multi-output conformal methods through their conformity score \( s \). As explained in \cref{sec:SCP}, their corresponding prediction region \( \hat{R} \) can be computed as follows:
\[
    \hat{R}(x) = \{y \in \mathcal{Y}: s(x, y) \leq \hat{q}\}.
\]
In this section, we explicitly derive the prediction region associated with these methods.

\paragraph{M-CP.}

Following \cite{Zhou2024-sq}, we show that the prediction region $\hat{R}_\text{M-CP}$ can be derived from $s_\text{M-CP}$ as follows:
\begin{align}
    s_\text{M-CP}(x, y) \leq \hat{q}
	&\iff \max_{i \in [d]} s_i(x, y_i) \leq \hat{q} \\
	&\iff \forall i \in [d], s_i(x, y_i) \leq \hat{q} \\
	&\iff \forall i \in [d], \max\{\hat{l}_i(x) - y_i, y_i - \hat{u}_i(x)\} \leq \hat{q} \\
	&\iff \forall i \in [d], \hat{l}_i(x) - y_i \leq \hat{q} \land y_i - \hat{u}_i(x) \leq \hat{q} \\
	&\iff \forall i \in [d], \hat{l}_i(x) - \hat{q} \leq y_i \land y_i \leq \hat{u}_i(x) + \hat{q} \\
	&\iff \forall i \in [d], y_i \in [\hat{l}_i(x) - \hat{q}, \hat{u}_i(x) + \hat{q}] \\
	&\iff y_i \in \bigtimes_{i=1}^d [\hat{l}_i(x) - \hat{q}, \hat{u}_i(x) + \hat{q}] \\
	&\iff y \in \hat{R}_\text{M-CP}(x).
\end{align}

\paragraph{DR-CP}

The equivalence is straightforward.

\paragraph{C-HDR.}

Given $\hat{Y} \sim \hat{f}(\cdot \mid X = x)$ and $U = \hat{f}(\hat{Y} \mid X = x)$, for any $y \in \mc{Y}$, we can write 
\begin{align}
& s_\text{C-HDR}(x, y)  \label{eq:hdr_cp_score} \\
&= \text{HPD}_{\hat{f}}(y \mid x) \\
	&= \mathbb{P}(\hat{f}(\hat{Y} \mid x) \geq \hat{f}(y \mid x) \mid X = x) \\
	&= \mathbb{P}(U \geq \hat{f}(y \mid x) \mid X = x ) \\
	&= 1 - \mathbb{P}(U \leq \hat{f}(y \mid x) \mid X = x ) \\
	&= 1 - F_U(\hat{f}(y \mid x) \mid X = x),
\end{align}
where $F_U(\cdot \mid X = x)$ is the conditional CDF of $U$ given $X = x$.

Recall that the prediction region for \tt{C-HDR} is given by 
\begin{align}
	\hat{R}_{\text{C-HDR}}(x) = \{y \in \mc{Y} : \hat{f}(y \mid x) \geq t_{\hat{q}}\}, \quad \text{where } t_{\hat{q}} = \sup\{t : \mathbb{P}(\hat{f}(\hat{Y} \mid x) \geq t \mid X = x) \geq \hat{q}\}. \label{eq:threshold_tq}
\end{align}

The threshold $t_{\hat{q}}$ in \eqref{eq:threshold_tq} can be equivalently written as follows:
\begin{align}
	t_{\hat{q}} &= \sup\{t : \mathbb{P}(\hat{f}(\hat{Y} \mid x) \geq t \mid X = x) \geq \hat{q}\} \\
	&= \sup\Set{t : \mb{P}(U \geq t \mid X = x) \geq \hat{q}} \\
	&= \sup\Set{t : 1 - P(U \leq t \mid X = x) \geq \hat{q}} \\
	&= \sup\Set{t : 1 - \hat{q} \geq F_U(t \mid X = x)} \\
	&= F_U^{-1}(1 - \hat{q} \mid X = x), \label{eq:t_q_equiv}
\end{align}
where we use the definition of the upper quantile function in the last step.

Using \eqref{eq:hdr_cp_score}, \eqref{eq:threshold_tq}, and \eqref{eq:t_q_equiv}, we can write
\begin{align}
    s_\text{C-HDR}(x, y) \leq \hat{q}
	&\iff \text{HPD}_{\hat{f}}(y \mid x) \leq \hat{q} \\
	&\iff 1 - F_U(\hat{f}(y \mid x) \mid X = x) \leq \hat{q} \\
	&\iff F_U(\hat{f}(y \mid x) \mid X = x) \geq 1 - \hat{q} \\
	&\iff \hat{f}(y \mid x) \geq F_U^{-1}(1 - \hat{q} \mid X = x) \\
	&\iff \hat{f}(y \mid x) \geq t_{\hat{q}} \\
	&\iff y \in \hat{R}_\text{C-HDR}(x).
\end{align}

\paragraph{PCP.}

Let $B(\mu, r)$ represent a ball with center $\mu$ and radius $r$. Following \cite{Wang2023-vn}, we show that, for any $x \in \mc{X}$, $\hat{R}_\text{PCP}(x)$ corresponds to a union of balls:
\begin{align}
	s_\text{PCP}(x, y) \leq \hat{q}
	&\iff \min_{l \in [L]} \norm{y - \tilde{Y}^{(l)}} \leq \hat{q} \\
	&\iff \exists l \in [L], \norm{y - \tilde{Y}^{(l)}} \leq \hat{q} \\
	&\iff \exists l \in [L], y \in B(\tilde{Y}^{(l)}, \hat{q}) \\
	&\iff y \in  \bigcup_{l \in [L]} B(\tilde{Y}^{(l)}, \hat{q}) \\
    &\iff y \in \hat{R}_\text{PCP}(x),
\end{align}
where $\tilde{Y}^{(l)} \sim \hat{F}_{Y|x}, l \in [L]$.

\paragraph{HD-PCP.} For \tt{HD-PCP},  the reasoning is similar to \tt{PCP} with the difference that only the \( \floor{(1 - \alpha) L} \) samples with the highest density are kept.

\paragraph{CDF-based conformity scores.}

We note that the region $\hat{R}_\text{CDF}(x)$ has a similar form to $\hat{R}_W(x) = \{ y \in \mc{Y}: s_W(x, y) \leq \hat{q} \}$, except that the threshold on $s_W(x, y)$ is different and depends on $x$. In fact, we can write
\begin{align}
	\hat{R}_\text{CDF}(x)
	&= \{ y \in \mc{Y}: s_\text{CDF}(x, y) \leq \hat{q} \} \\
	&= \{ y \in \mc{Y}: F_{W|X=x}(s_W(x, y) \mid X=x) \leq \hat{q} \} \\
	&= \{ y \in \mc{Y}: s_W(x, y) \leq F^{-1}_{W|X=x}(\hat{q} \mid X=x) \}.
\end{align}

In the special case where \( s_W = s_\text{PCP} \), since \texttt{PCP} always generates predictions as a union of balls, we can conclude that \texttt{C-PCP} will do the same.

\paragraph{Latent-based conformity scores.}

Since \( \hat{Q}(\cdot; x) \) is bijective, for every \( y \in \mathcal{Y} \), there exists a unique \( z \in \mathcal{Z} \) such that \( y = \hat{Q}(z; x) \). Therefore, the condition \( d_{\mc{Z}}(\hat{Q}^{-1}(y; x)) \leq \hat{q} \) is equivalent to \( d_{\mc{Z}}(z) \leq \hat{q} \), where \( z = \hat{Q}^{-1}(y; x) \). This gives the prediction region:
\begin{align}
	\hat{R}_\text{L-CP}(x)
    &= \{ y \in \mc{Y}: d_{\mc{Z}}(\hat{Q}^{-1}(y ; x)) \leq \hat{q} \} \\
    &= \{ \hat{Q}(z; x) : z \in \mc{Z} \text{ and } d_{\mc{Z}}(z) \leq \hat{q} \}.
\end{align}

\section{Additional illustrative examples}
\label{sec:additional_illustrative_example}

\subsection{A real-world application}
\label{sec:taxi_application}

Following \cite{wang_conformal_2023}, we apply the multi-output conformal methods to the taxi dataset, where the goal is to predict the drop-off location of a New York taxi passenger based on the passenger's information.

Panel \ref{fig:taxi_application/sample_data} displays five randomly selected samples from the dataset, showing the pick-up (red pin) and drop-off (blue pin) locations of taxi passengers. The remaining panels show a specific input-output pair $(x, y)$ and the corresponding prediction regions generated by the conformal methods discussed in this paper. The coverage level \(1 - \alpha\) for these regions is set to 0.8, with MQF$^2$ as the base predictor, as introduced in Section \ref{sec:study_base_predictors}. Each region is labeled with its size, calculated using the estimator from Section \ref{sec:study_metrics}, displayed in the bottom left corner. Notably, C-PCP generates regions similar in shape to PCP but with an input-adaptive radius, resulting in smaller region sizes (8.2 compared to 8.67) in this case. Additionally, HD-PCP produces more clustered regions, while PCP and C-PCP show more dispersed regions.

Figure \ref{fig:taxi_application/high_uncertainty} presents the same example for an input-output pair where the input is associated with higher uncertainty, resulting in larger region sizes. As in the first figure, the shapes of the regions (e.g., unions of hyperrectangles, quantile regions, etc.) remain consistent but expand to cover a larger area. The order of the region sizes differs between the two figures, with C-HDR producing the smallest region in the first figure and DR-CP in the second. In this case, C-PCP selects a larger radius than PCP, resulting in larger regions than PCP. The observation that PCP and C-PCP produce more dispersed regions, while HD-PCP generates more clustered regions, also holds true for this higher uncertainty case.
\begin{figure}[H]
	\centering
	\begin{minipage}[t]{\textwidth}
		\centering
		\begin{subfigure}[b]{0.18\textwidth}
			\includegraphics[width=\textwidth]{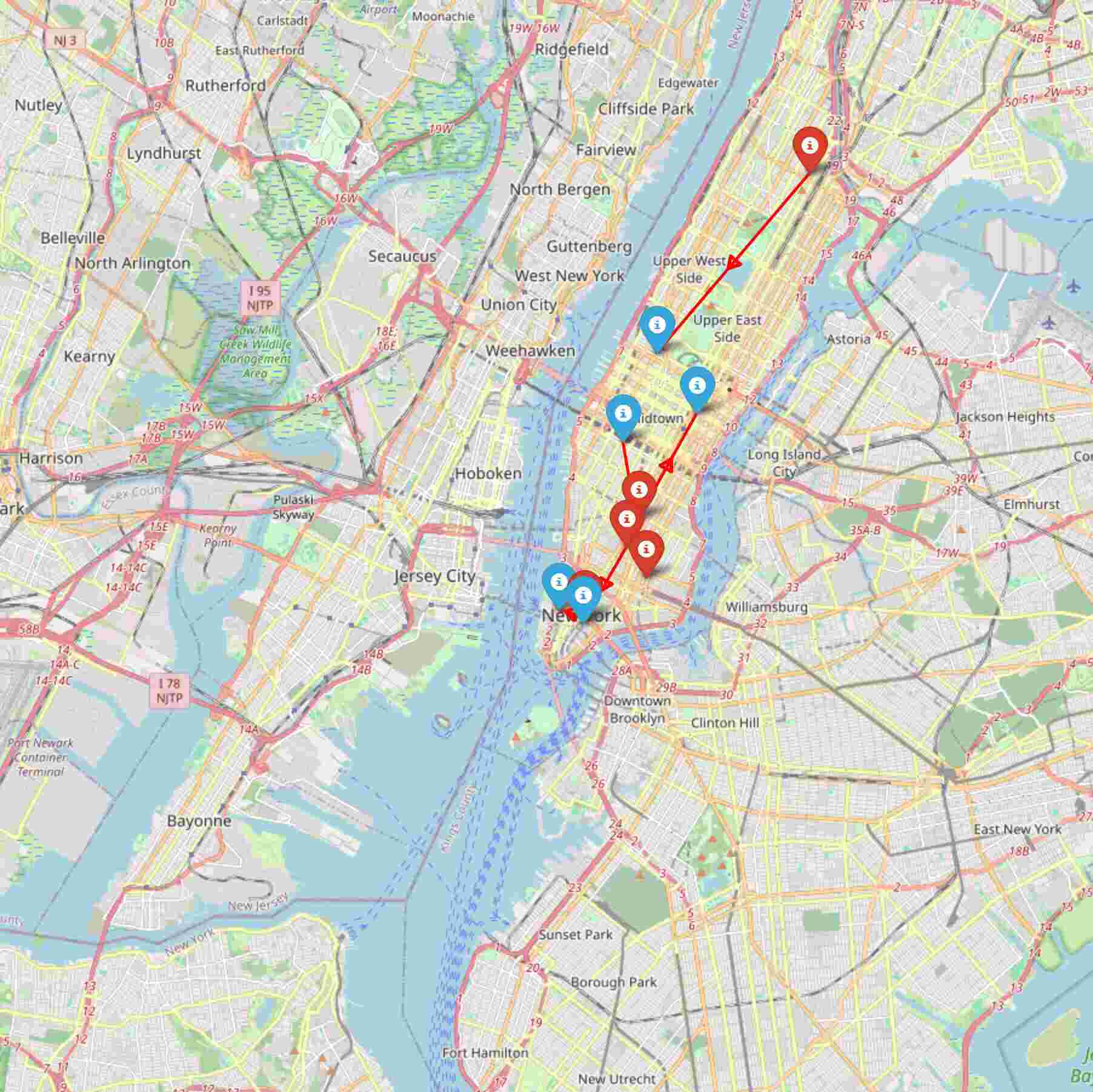}
			\caption{Sample Data}
			\label{fig:taxi_application/sample_data}
		\end{subfigure}
		~
		\begin{subfigure}[b]{0.18\textwidth}
			\includegraphics[width=\textwidth]{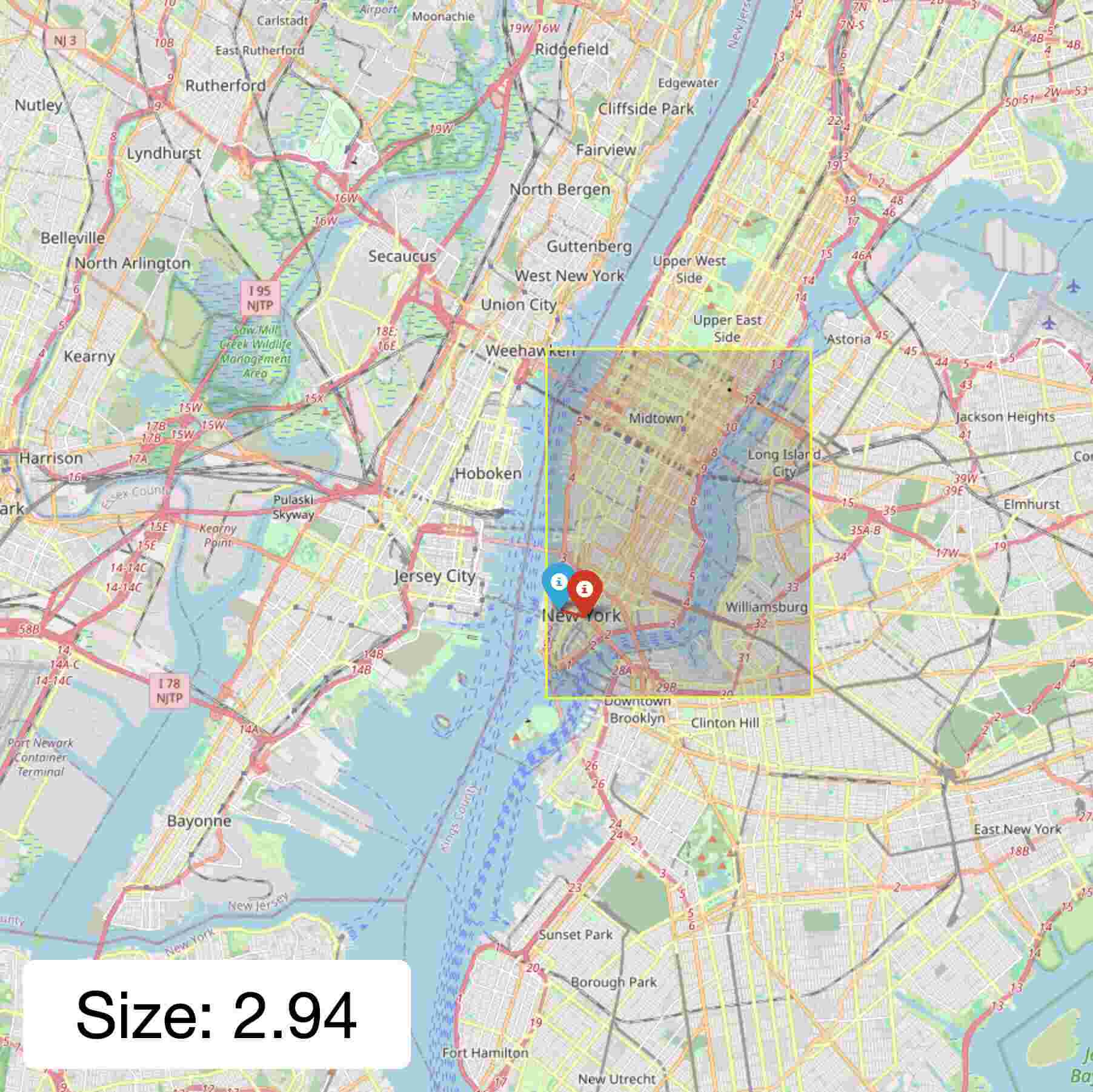}
			\caption{M-CP}
		\end{subfigure}
		~
            \begin{subfigure}[b]{0.18\textwidth}
    			\includegraphics[width=\textwidth]{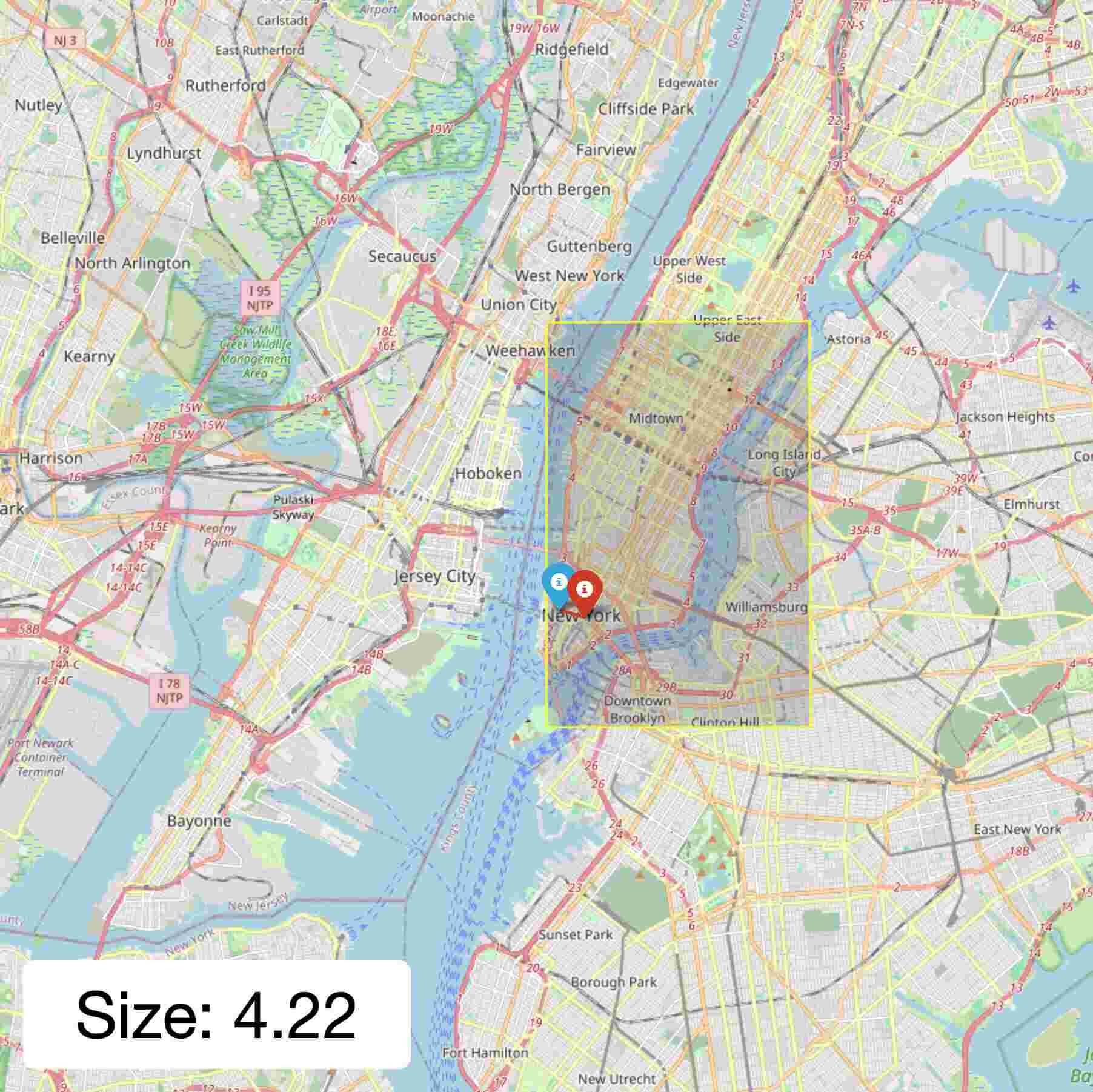}
			\caption{CopulaCPTS}
		\end{subfigure}
		~
		\begin{subfigure}[b]{0.18\textwidth}
			\includegraphics[width=\textwidth]{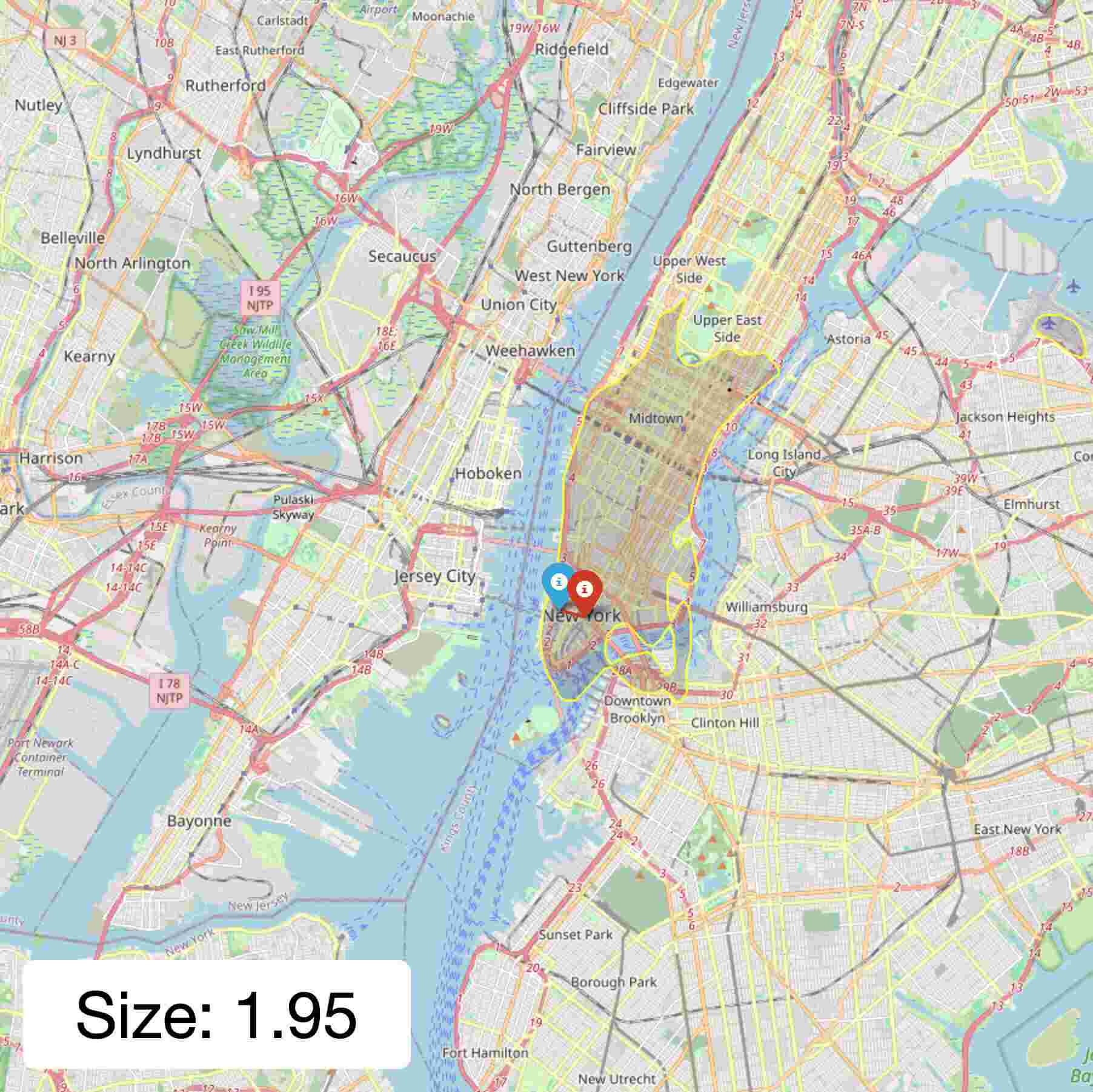}
			\caption{DR-CP}
		\end{subfigure}
		~
		\begin{subfigure}[b]{0.18\textwidth}
			\includegraphics[width=\textwidth]{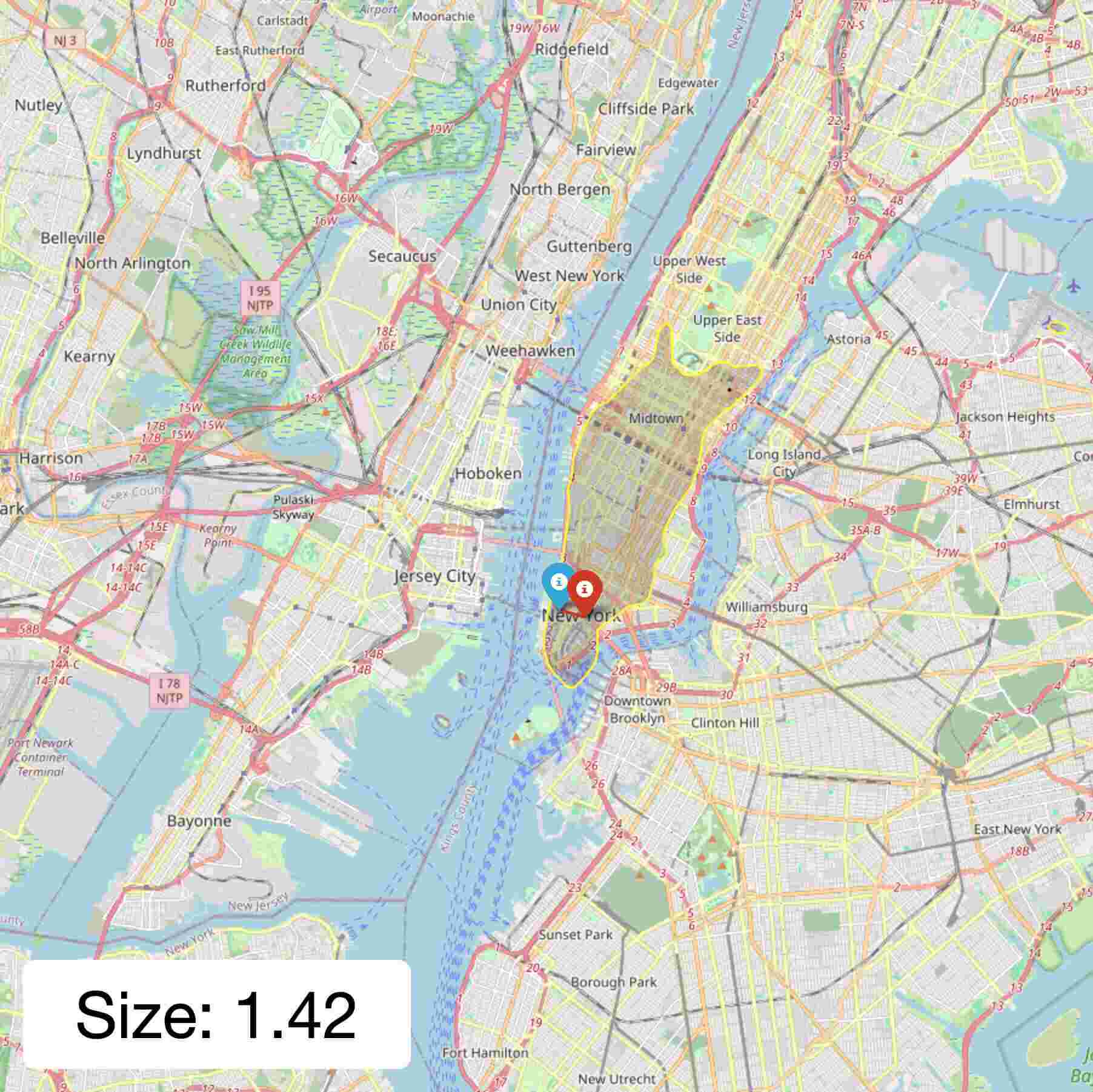}
			\caption{C-HDR}
		\end{subfigure}
		\par\smallskip %
		\begin{subfigure}[b]{0.18\textwidth}
			\includegraphics[width=\textwidth]{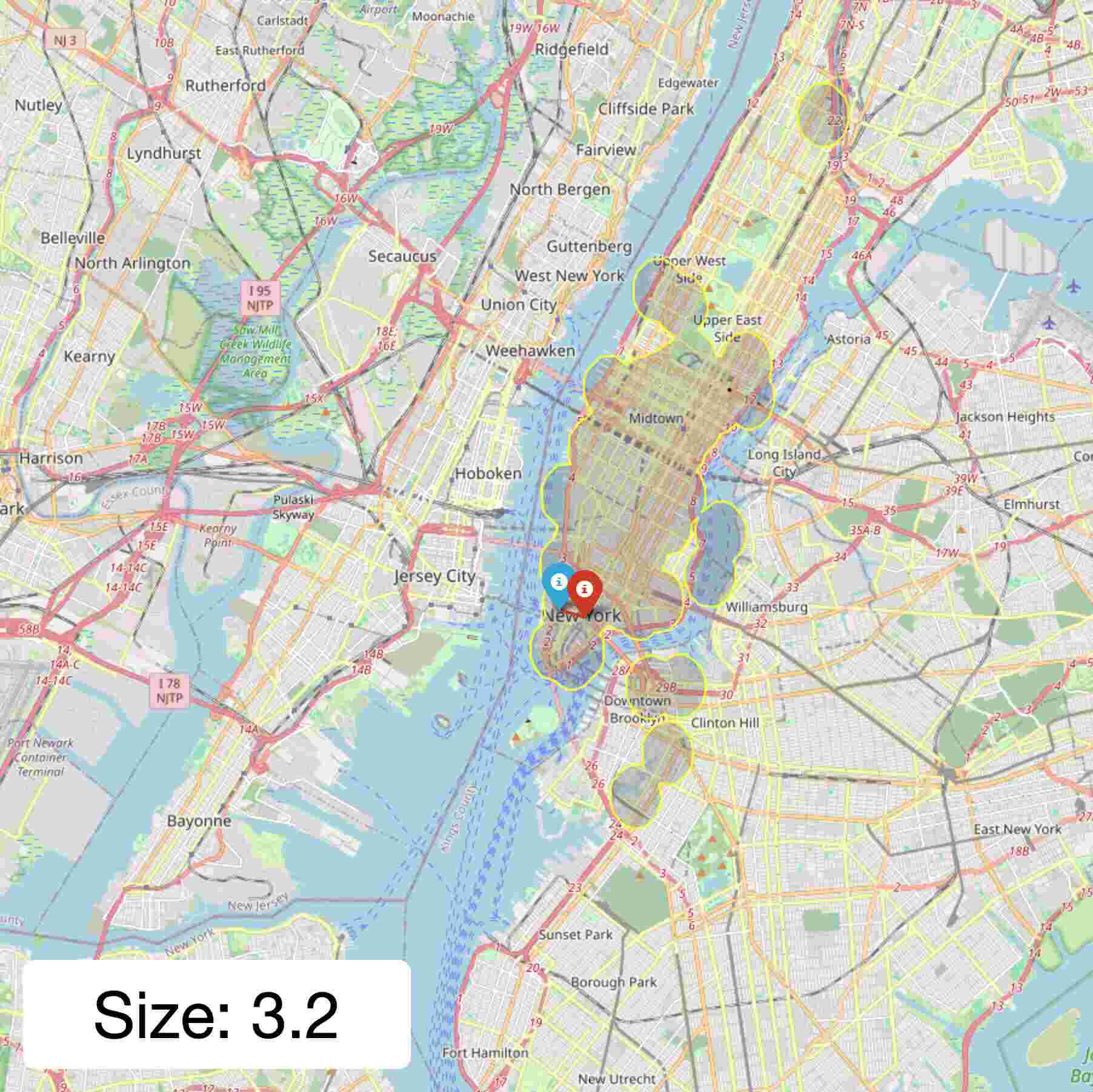}
			\caption{PCP}
		\end{subfigure}
		~
		\begin{subfigure}[b]{0.18\textwidth}
			\includegraphics[width=\textwidth]{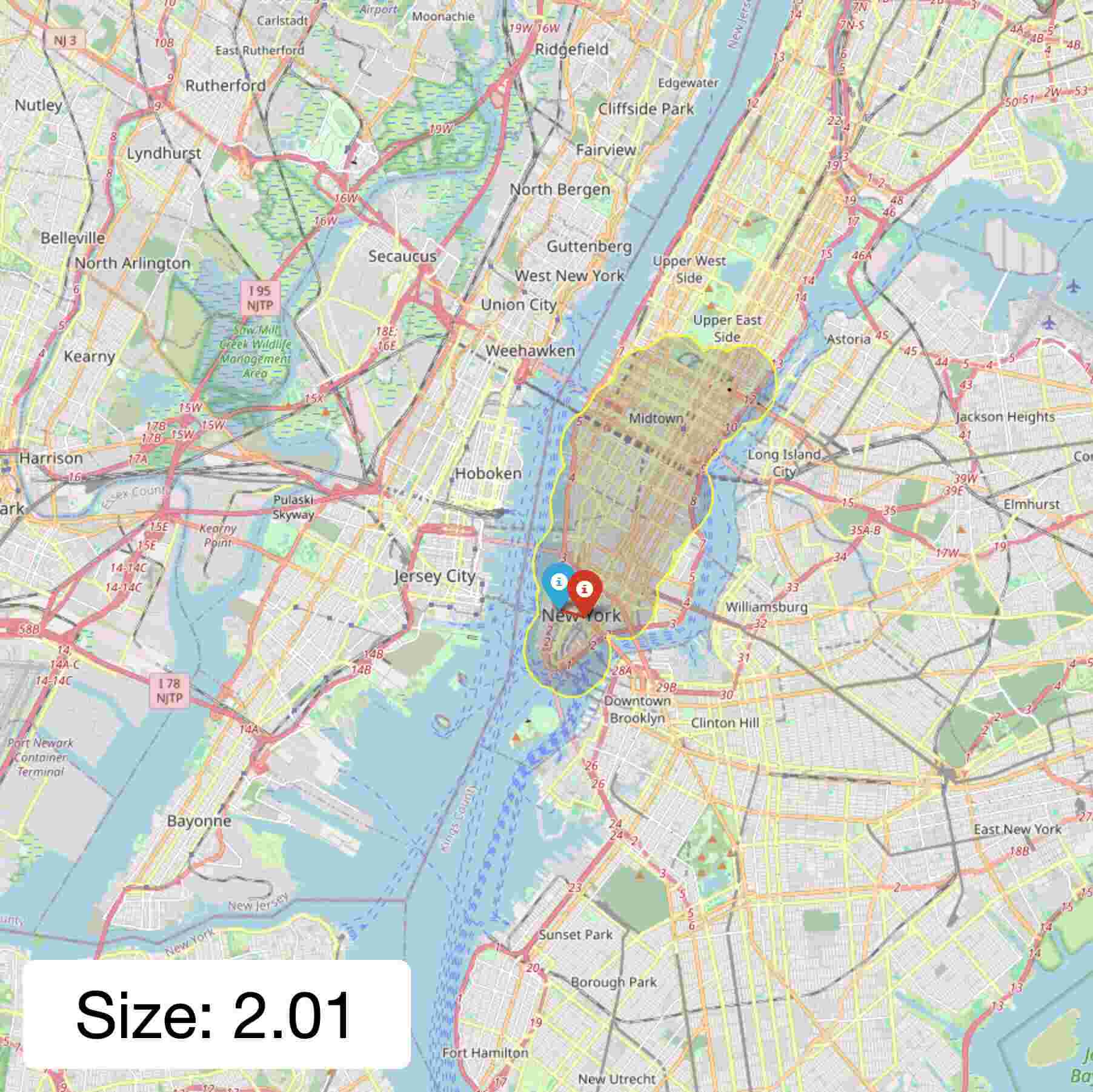}
			\caption{HD-PCP}
		\end{subfigure}
		~
            \begin{subfigure}[b]{0.18\textwidth}
            \includegraphics[width=\textwidth]{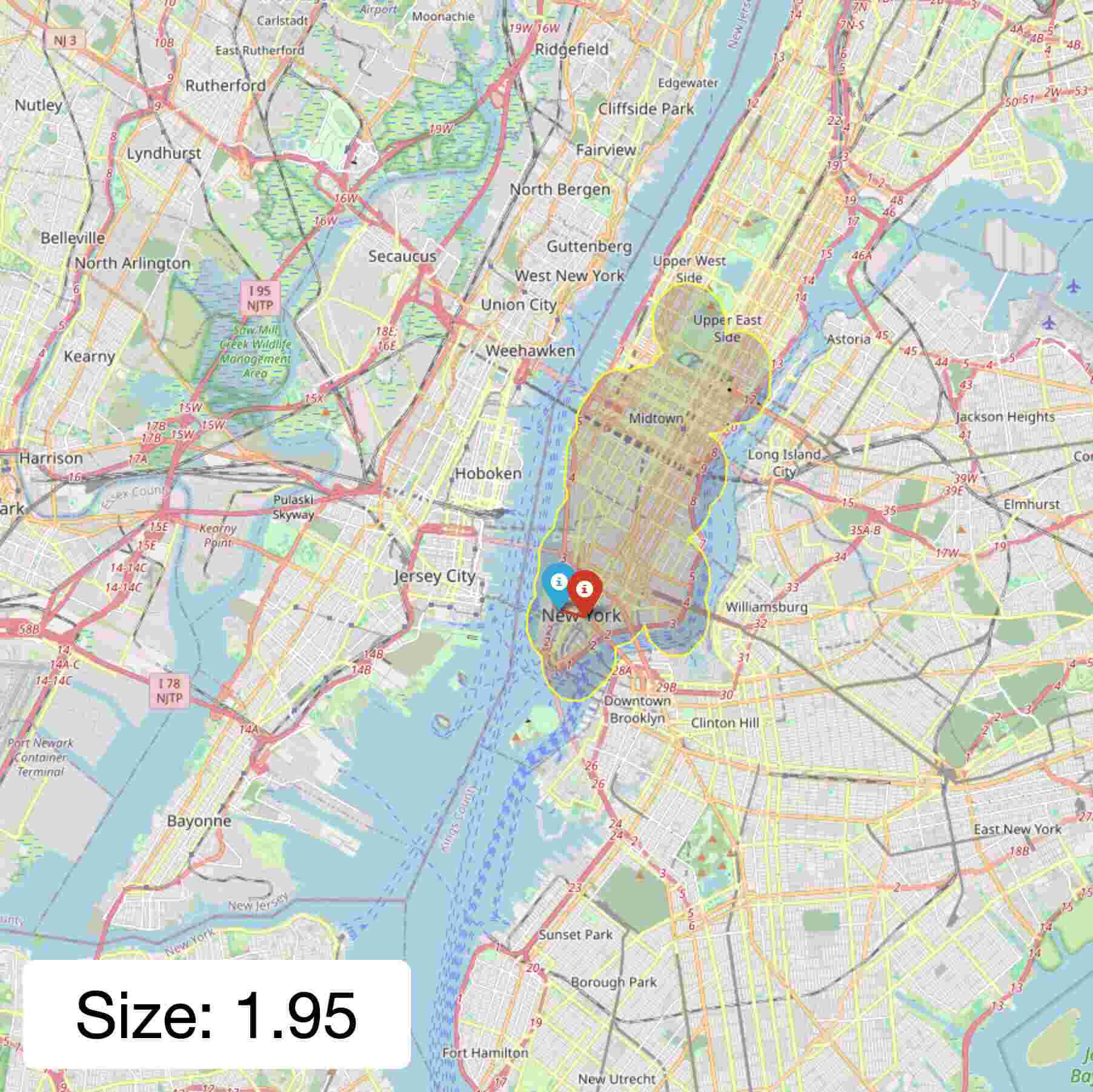}
    			\caption{STDQR}
		\end{subfigure}
		~
		\begin{subfigure}[b]{0.18\textwidth}
			\includegraphics[width=\textwidth]{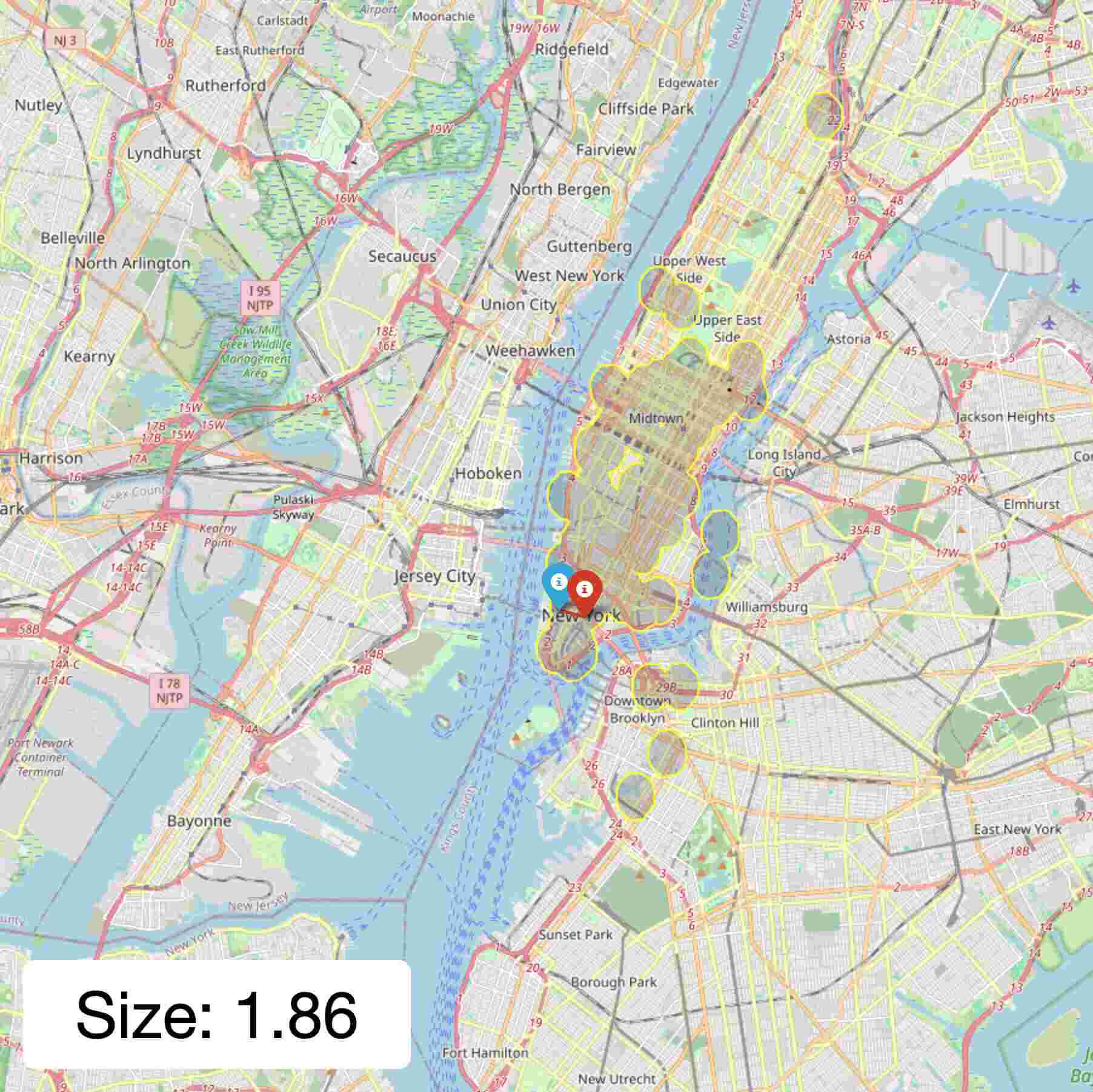}
			\caption{C-PCP}
		\end{subfigure}
		~
		\begin{subfigure}[b]{0.18\textwidth}
			\includegraphics[width=\textwidth]{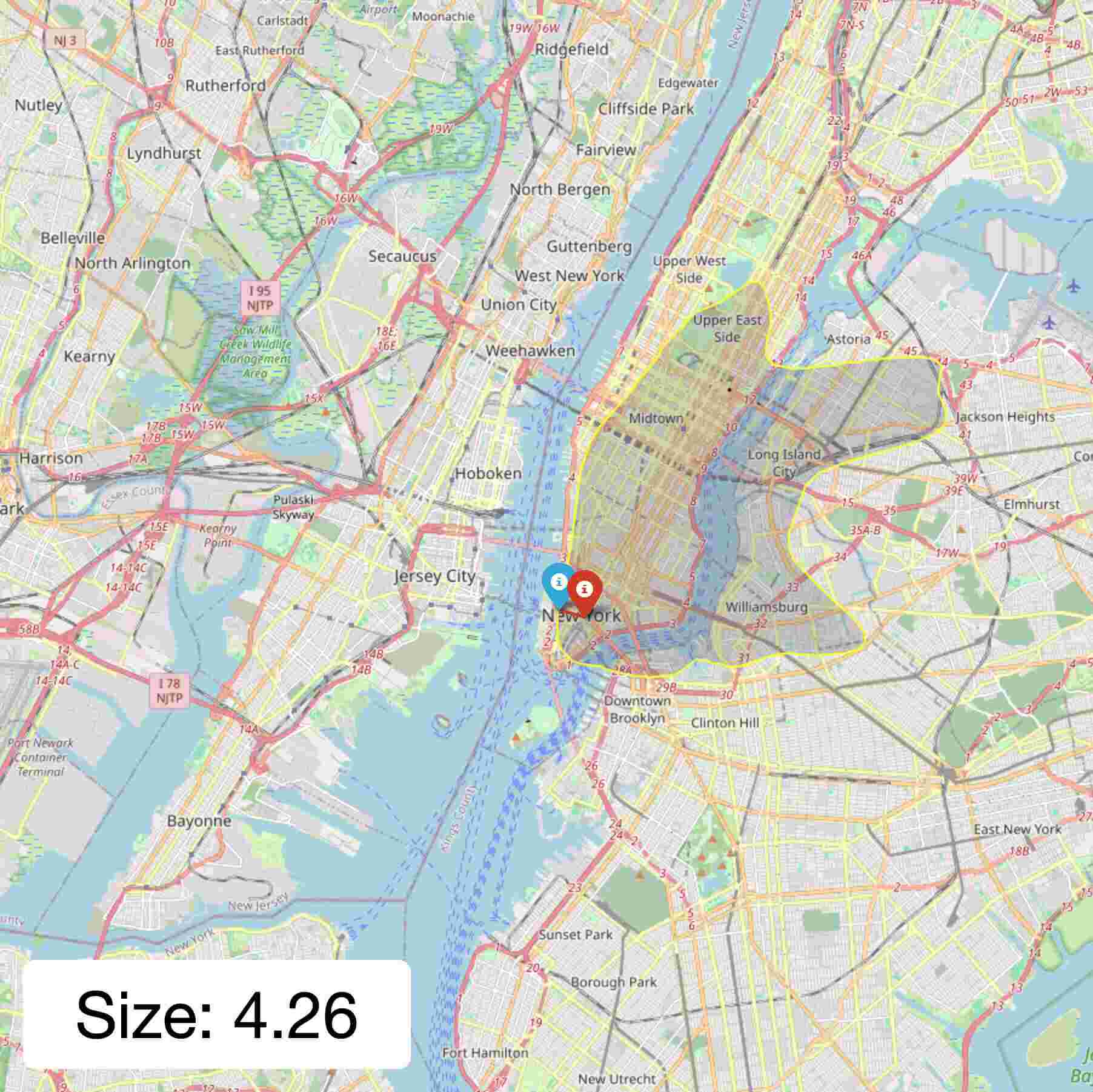}
			\caption{L-CP}
		\end{subfigure}
	\end{minipage}
	
	\caption{Conformal methods applied on the NYC Taxi dataset for an input with low uncertainty.}
	\label{fig:taxi_application/low_uncertainty}
\end{figure}

\begin{figure}[H]
	\centering
	\begin{minipage}[t]{\textwidth}
		\centering
		\begin{subfigure}[b]{0.18\textwidth}
			\includegraphics[width=\textwidth]{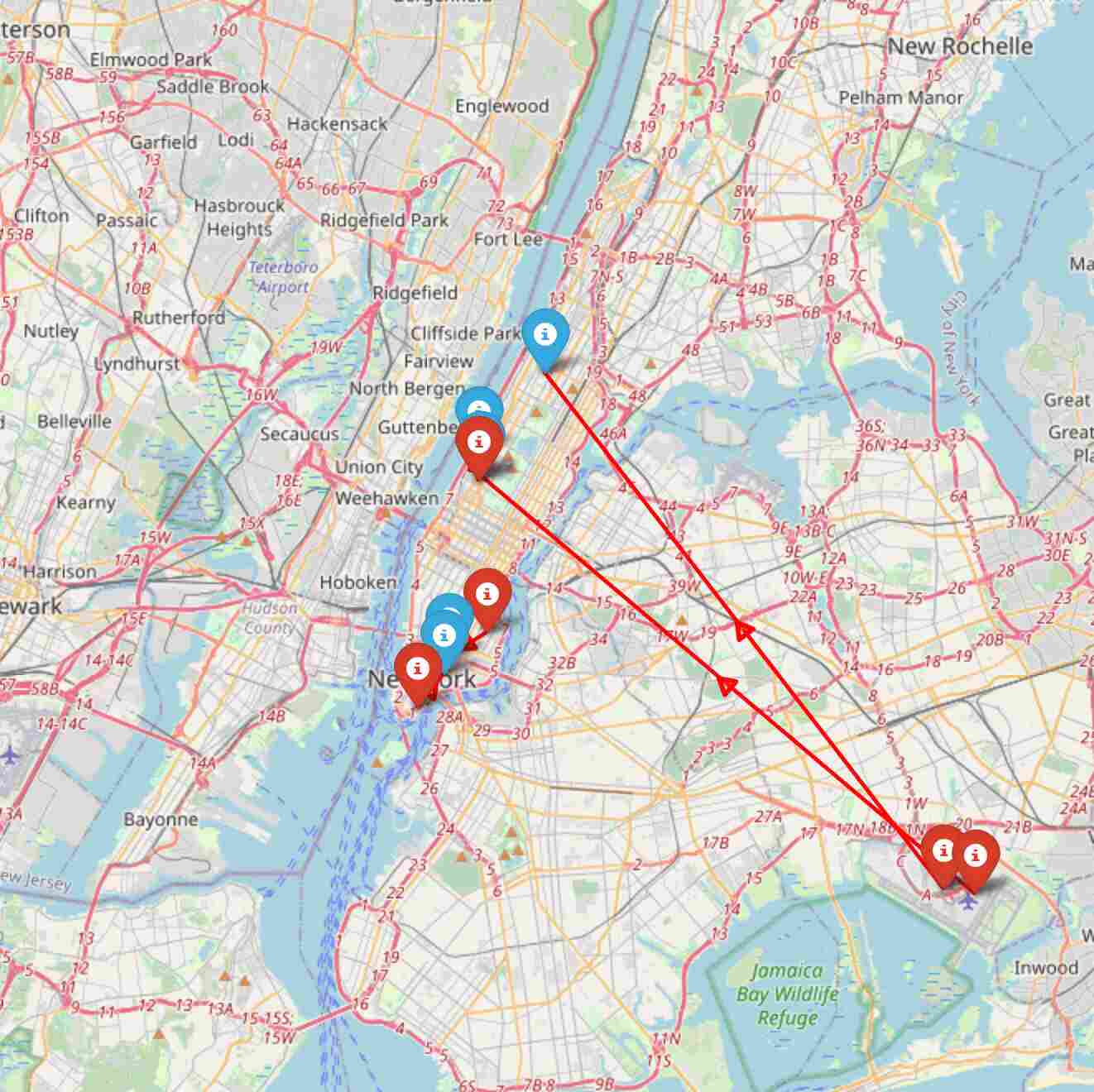}
			\caption{Sample Data}
		\end{subfigure}
		~
		\begin{subfigure}[b]{0.18\textwidth}
			\includegraphics[width=\textwidth]{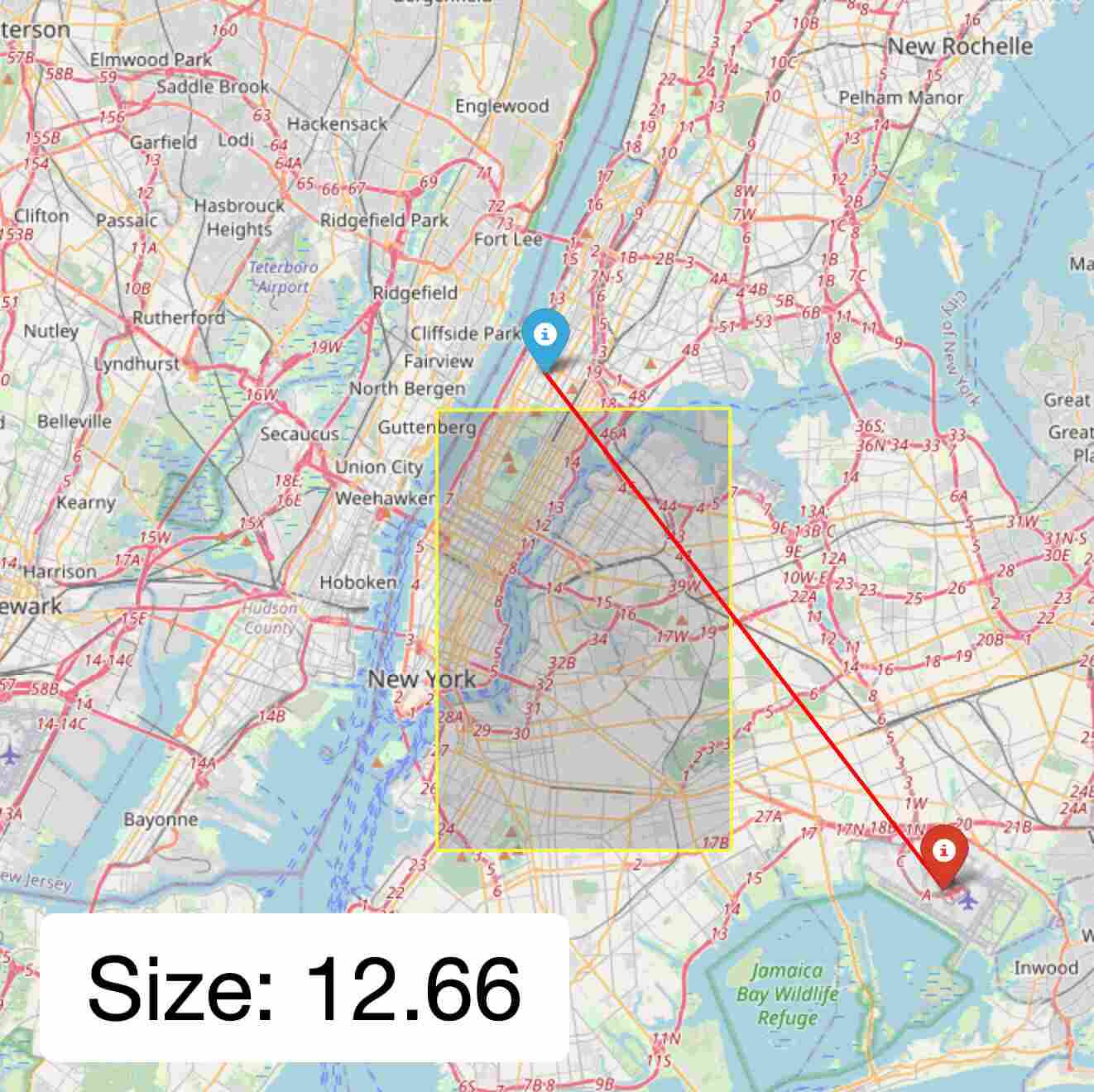}
			\caption{M-CP}
		\end{subfigure}
		~
            \begin{subfigure}[b]{0.18\textwidth}
        			\includegraphics[width=\textwidth]{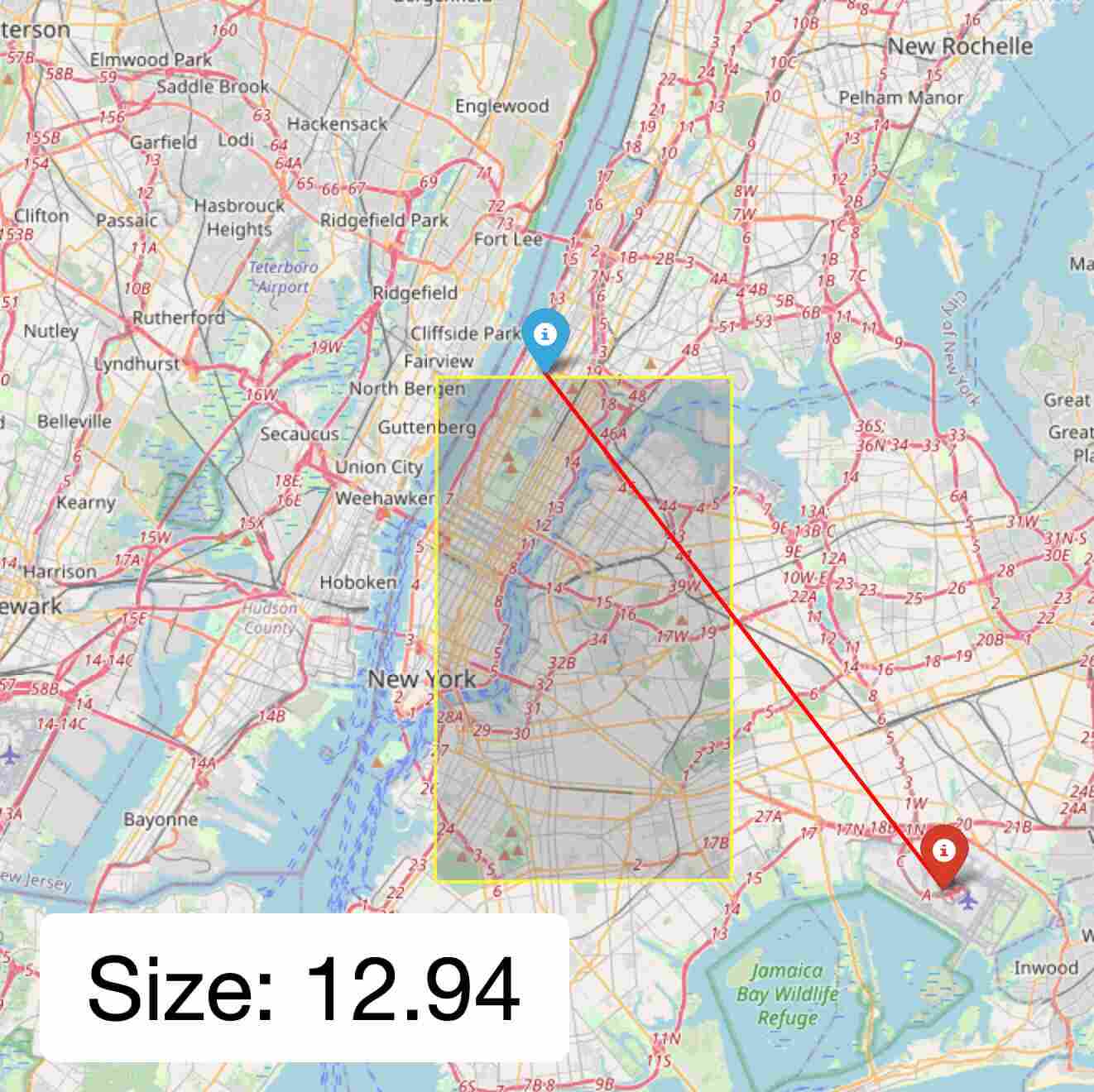}
			\caption{CopulaCPTS}
		\end{subfigure}
		~
		\begin{subfigure}[b]{0.18\textwidth}
			\includegraphics[width=\textwidth]{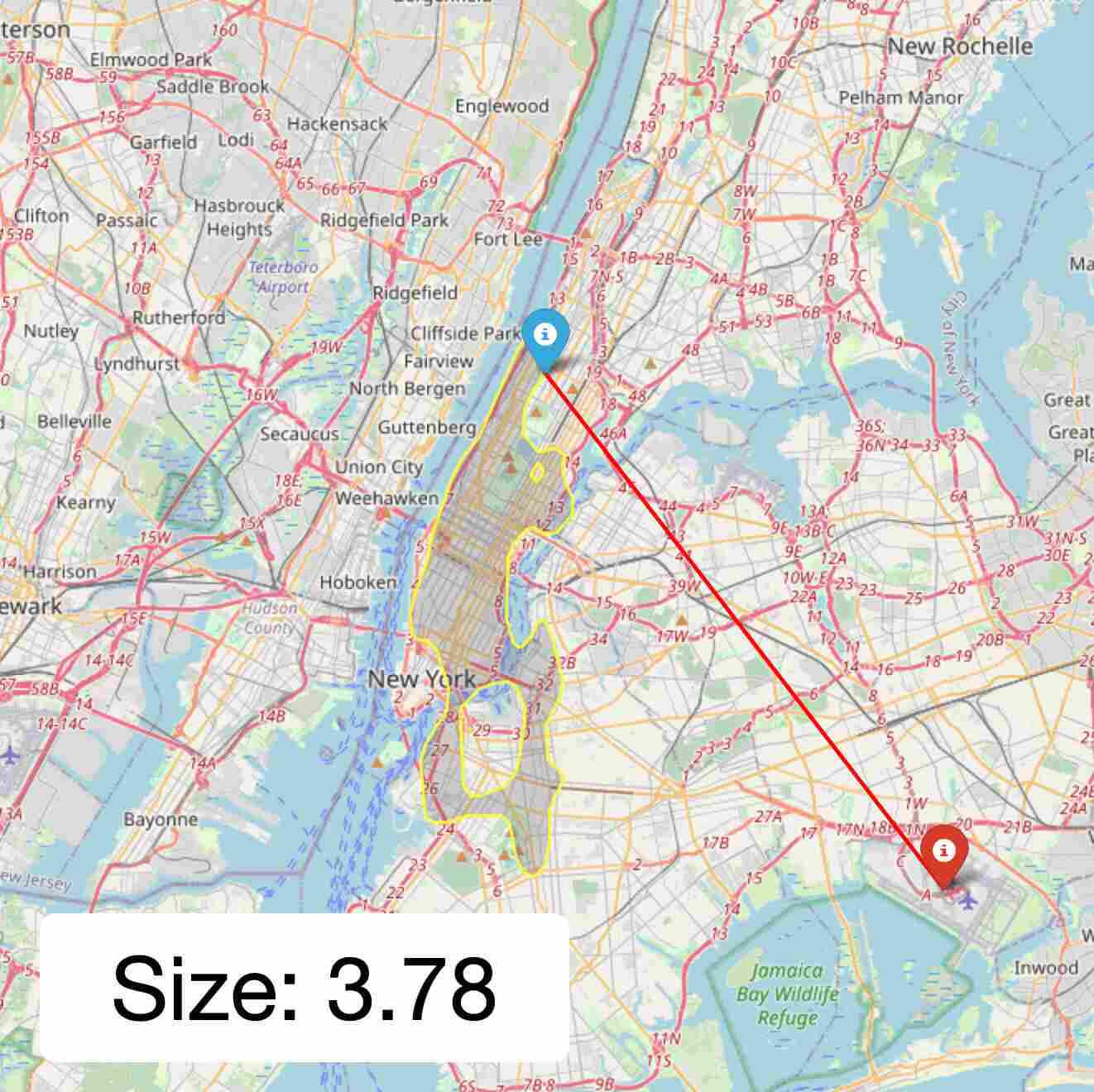}
			\caption{DR-CP}
		\end{subfigure}
		~
		\begin{subfigure}[b]{0.18\textwidth}
			\includegraphics[width=\textwidth]{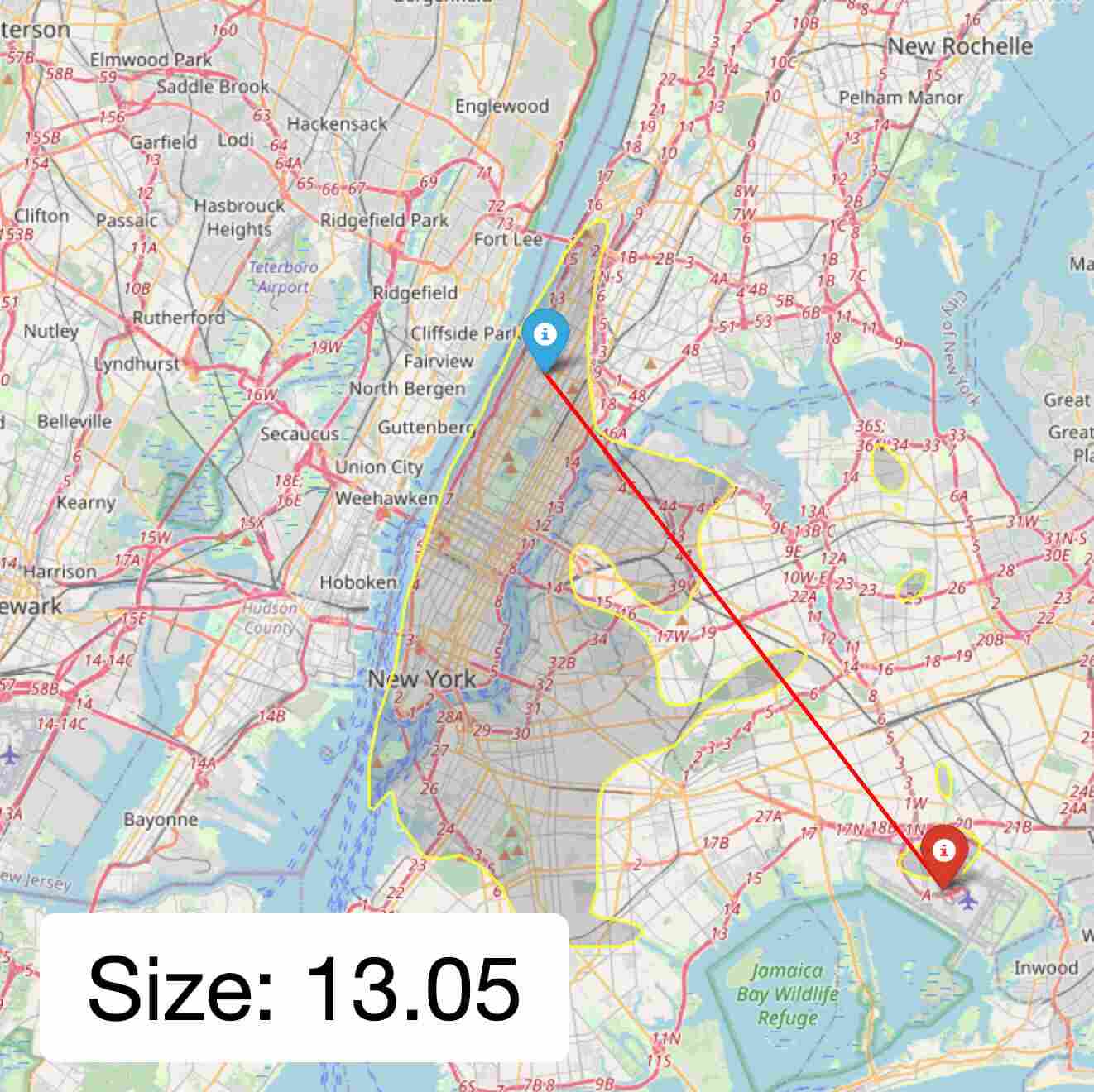}
			\caption{C-HDR}
		\end{subfigure}
		\par\smallskip %
		\begin{subfigure}[b]{0.18\textwidth}
			\includegraphics[width=\textwidth]{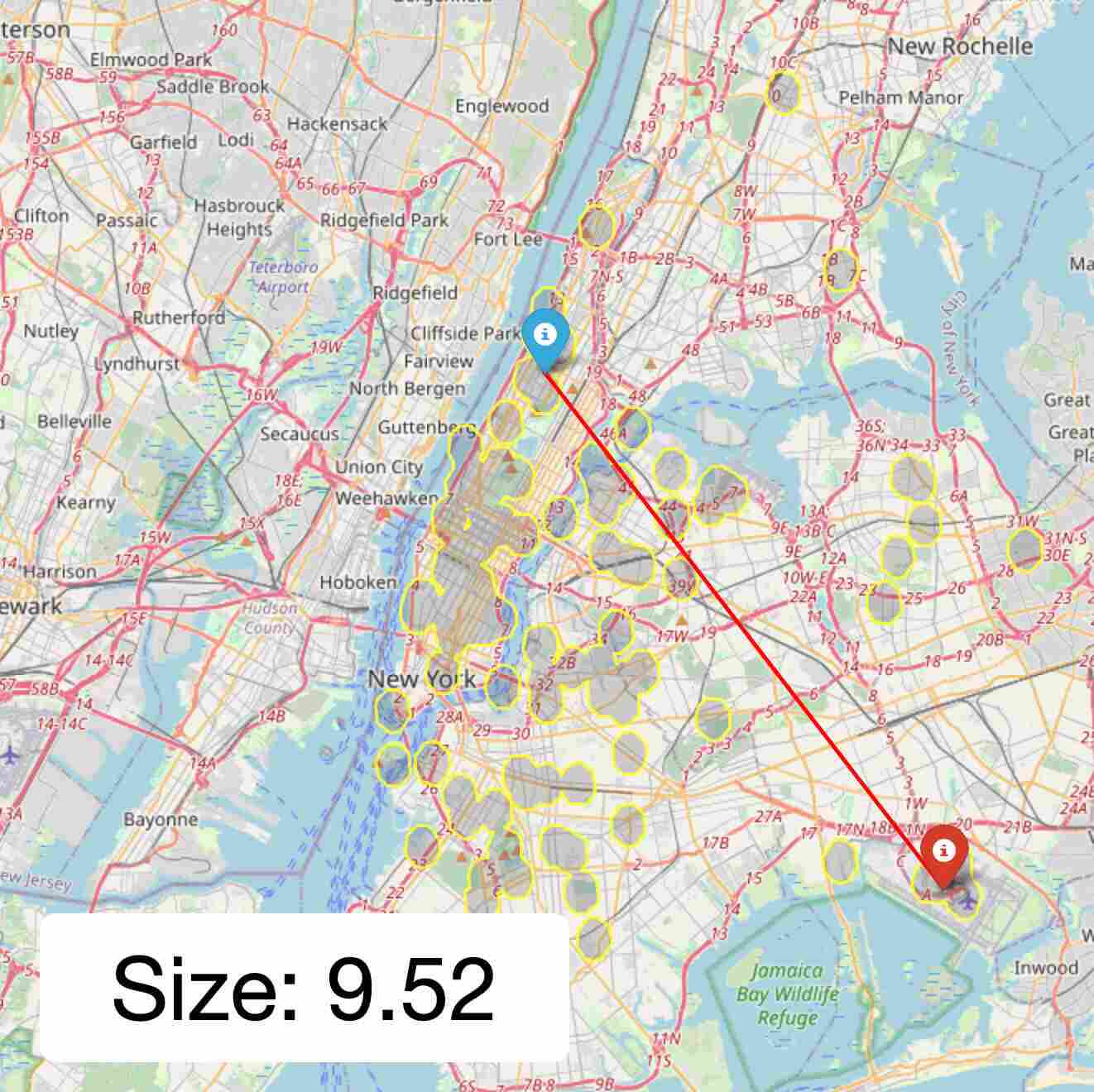}
			\caption{PCP}
		\end{subfigure}
		~
		\begin{subfigure}[b]{0.18\textwidth}
			\includegraphics[width=\textwidth]{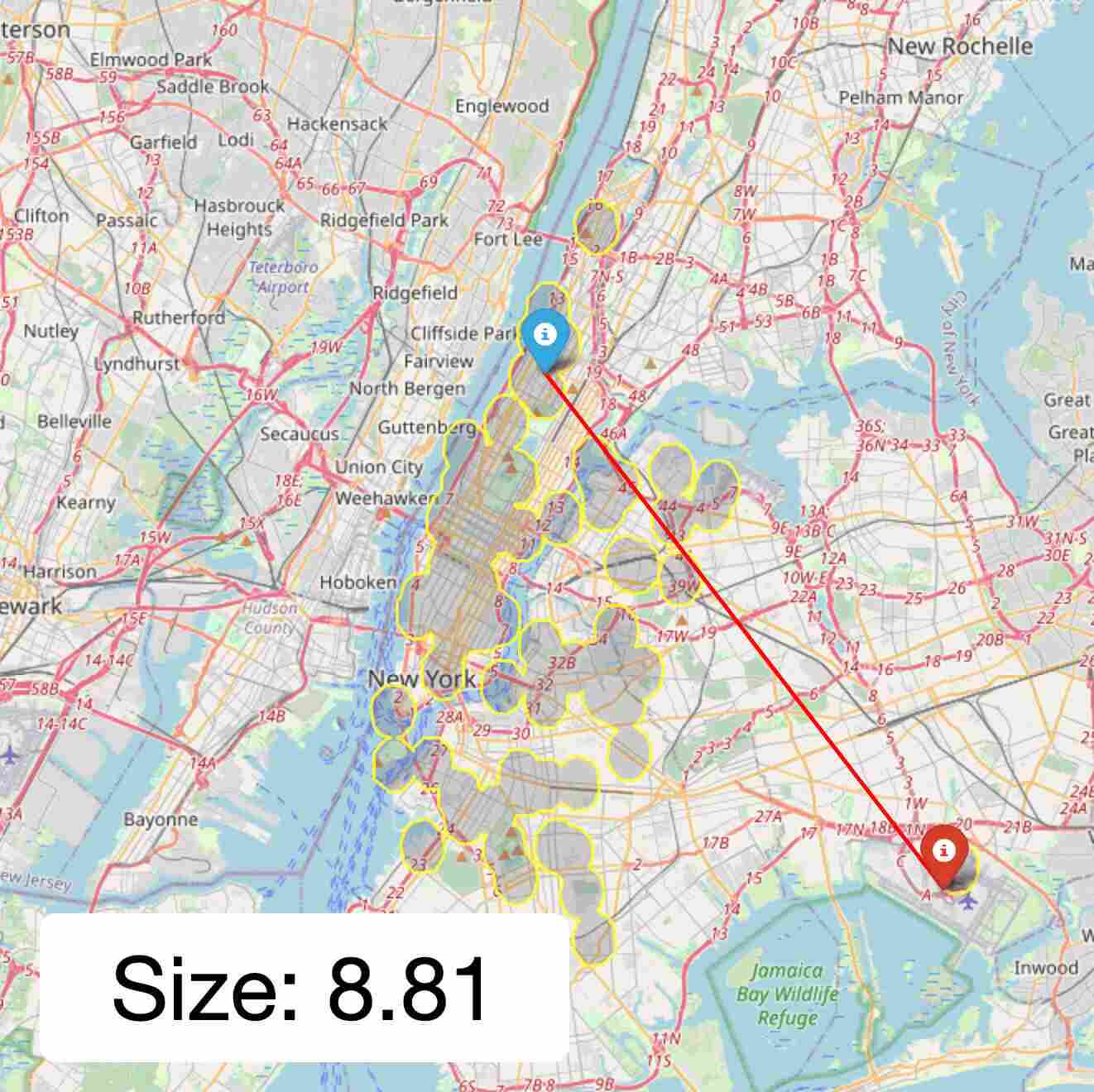}
			\caption{HD-PCP}
		\end{subfigure}
		~
            \begin{subfigure}[b]{0.18\textwidth}
                \includegraphics[width=\textwidth]{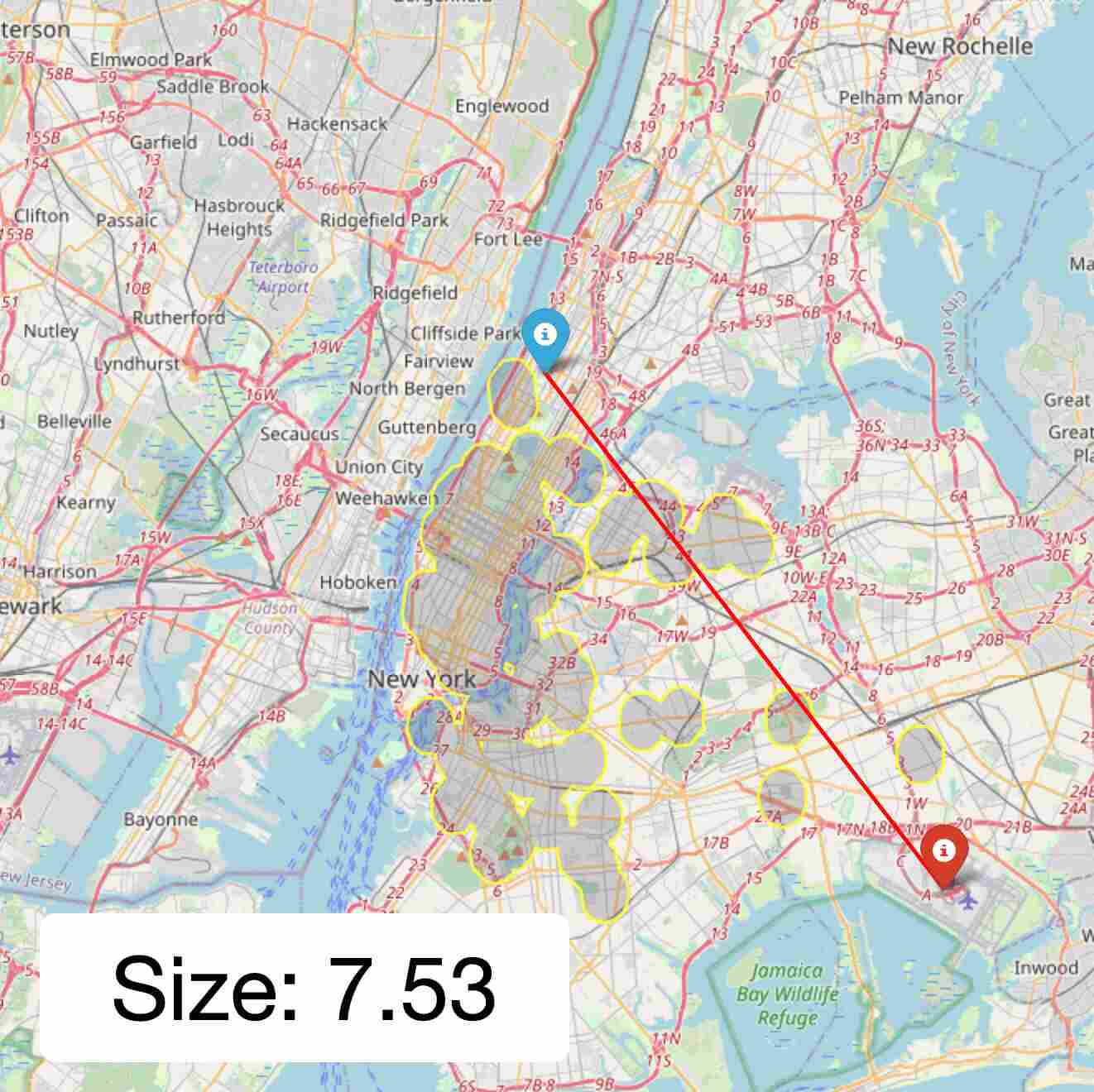}
    			\caption{STDQR}
		\end{subfigure}
		~
		\begin{subfigure}[b]{0.18\textwidth}
			\includegraphics[width=\textwidth]{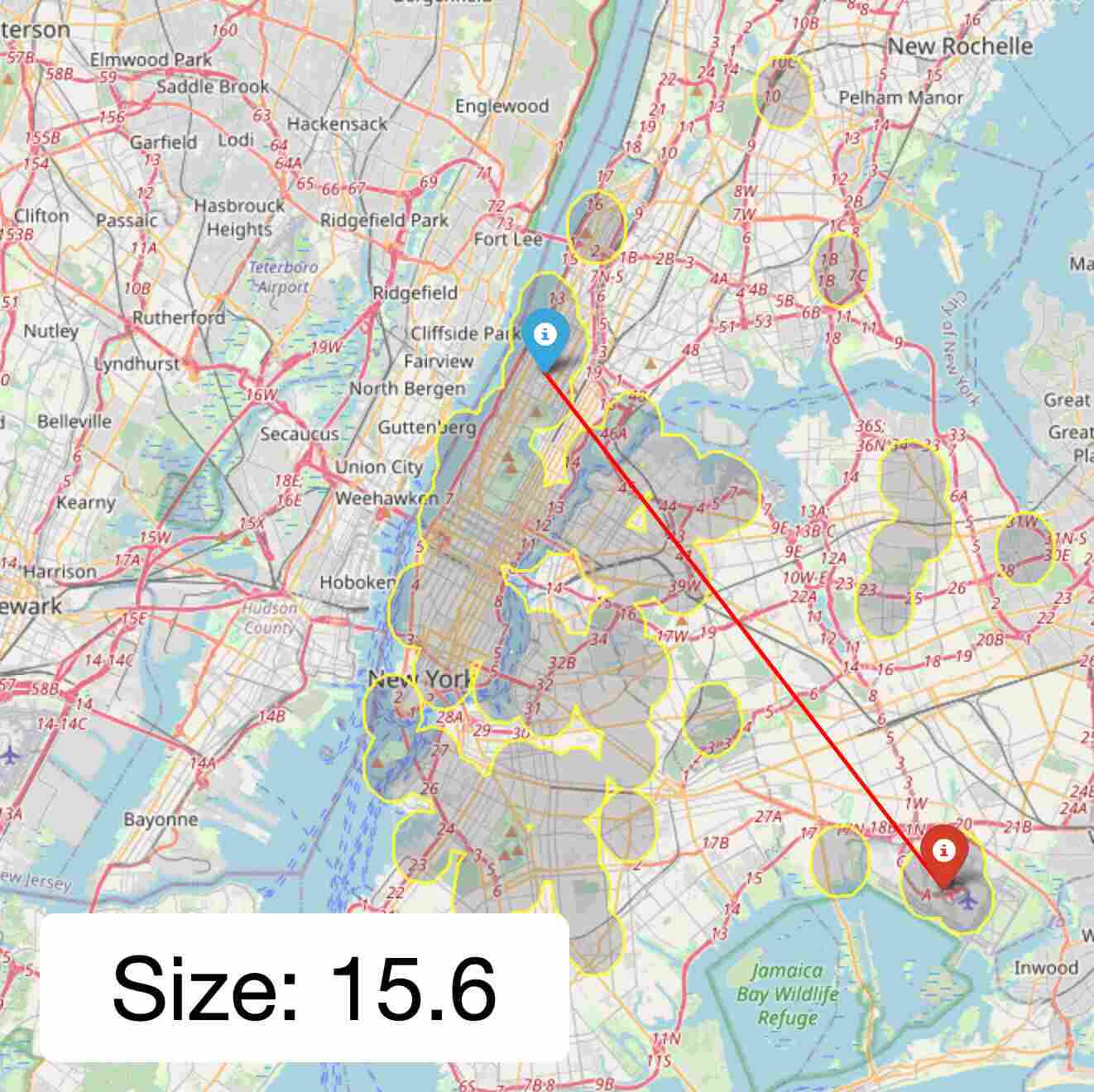}
			\caption{C-PCP}
		\end{subfigure}
		~
		\begin{subfigure}[b]{0.18\textwidth}
			\includegraphics[width=\textwidth]{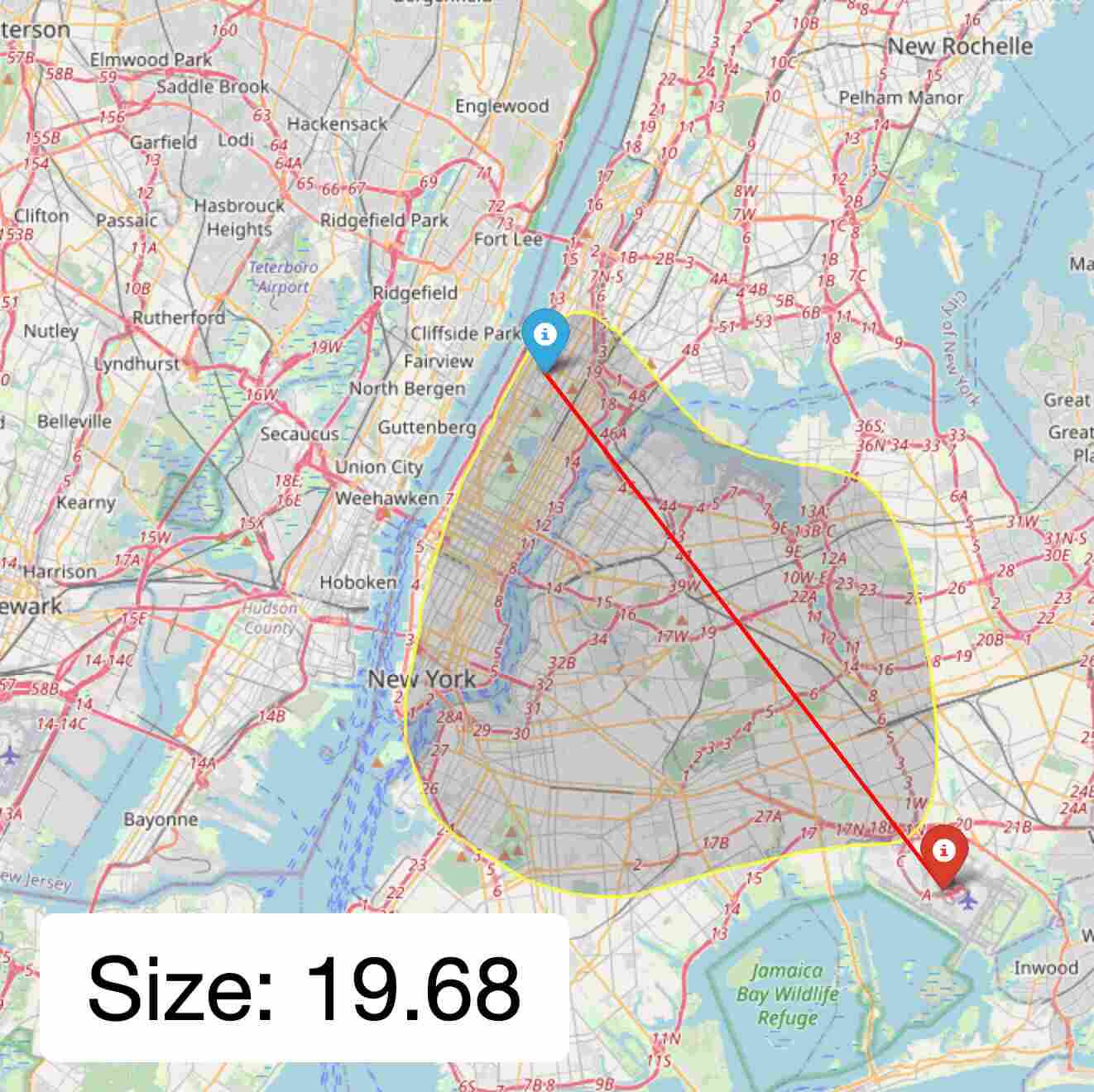}
			\caption{L-CP}
		\end{subfigure}
	\end{minipage}
	
	\caption{Conformal methods applied on the NYC Taxi dataset for an input with high uncertainty.}
	\label{fig:taxi_application/high_uncertainty}
\end{figure}

\subsection{Toy examples}
\label{sec:additional_toy_example}

We define two data-generating processes to evaluate the models compared to a known distribution: a unimodal heteroscedastic distribution and a bimodal heteroscedastic distribution.
The input variable $X \in \R$ is unidimensional ($p = 1$) and the output variable $Y \in \R^2$ is bidimensional ($d = 2$).
The variables $X$ and $Y$ are scaled linearly such that the mean and variances on each dimension are 0 and 1.
The figures are inspired by \cite{Del_Barrio2022-hz}.

\paragraph{Unimodal heteroscedastic process.}

The first process is illustrated in \cref{fig:contours/unimodal_heteroscedastic} in the main text.
The data generating process is as follows:
\begin{align}
    X &\sim \mathcal{U}(0,1),\\
    Y \mid X = x &\sim \frac{1}{k} \sum_{j=1}^{k} \mathcal{N}\bigl((1.3 - x) \boldsymbol{\mu}^{(j)}(x), \,\sigma^2 I_2\bigr),\\
\end{align}
where $k = 200$, $\sigma = 0.2$, $I_2$ is the $2 \times 2$ identity, and, for $j = 1, \dots, k$,
\begin{align}
	\boldsymbol{\mu}_1^{(j)} &= \cos \alpha_j \\
	\boldsymbol{\mu}_2^{(j)} &= (0.5 - \sin \alpha_j) \\
	\alpha_j &= \frac{(j-1)\,\pi}{k-1}
\end{align}

\paragraph{Bimodal heteroscedastic process.}

\cref{fig:contours/bimodal_heteroscedastic}, similar to \cref{fig:contours/unimodal_heteroscedastic} but with a bimodal distribution for the output, is introduced in \cref{sec:illustrative_example}.

The data generating process is as follows:
\begin{align}
    X &\sim \mathcal{U}(0.5, 2), \\
    Y \mid X = x &\sim 0.5 \cdot \mathcal{N}(4, x I_d) + 0.5 \cdot \mathcal{N}(-4, I_d / x).
\end{align}

\begin{figure}[H]
	\centering
	\includegraphics[width=\linewidth]{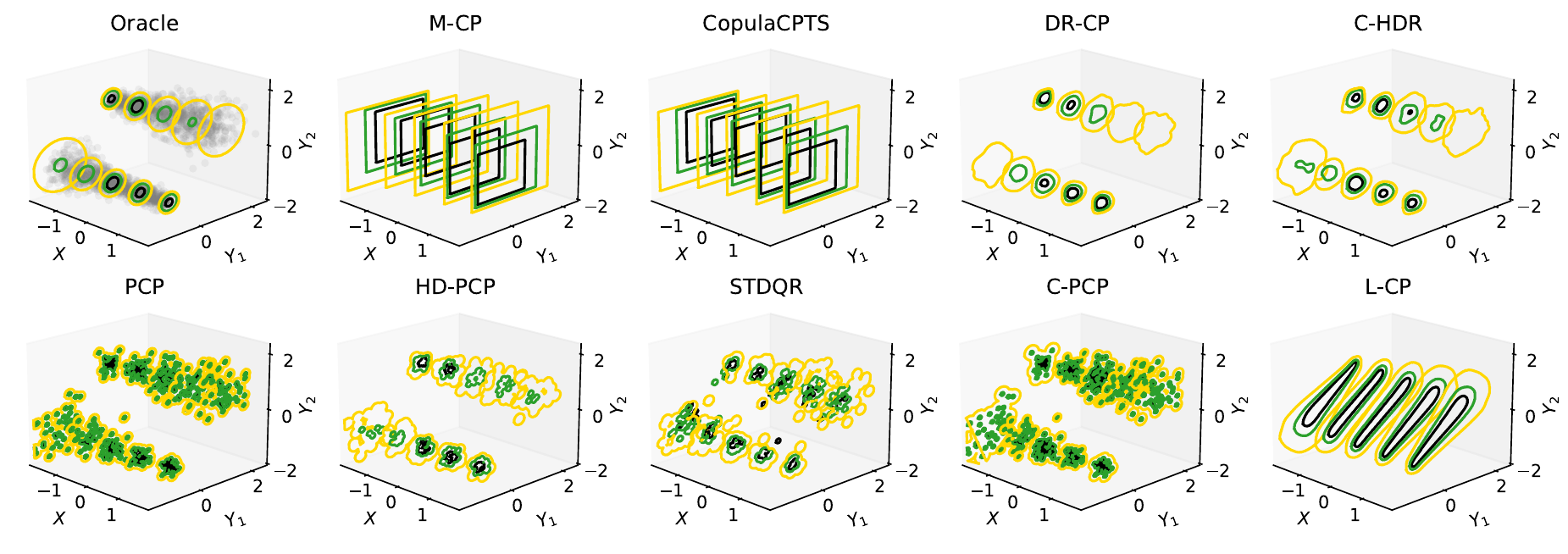}
	\caption{
Examples of prediction regions on a bivariate bimodal dataset, conditional on a unidimensional input.			}
	\label{fig:contours/bimodal_heteroscedastic}
\end{figure}

\section{Proofs}
\label{sec:proofs}

\subsection{Distribution of the marginal coverage conditional on calibration data}
\label{sec:proof_marginal_coverage}

In contrast to \texttt{M-CP}, \texttt{L-CP}, and \texttt{DR-CP}, the methods \texttt{C-HDR}, \texttt{PCP}, \texttt{HD-PCP}, and \texttt{C-PCP} rely on a non-deterministic conformity score. For each calibration and test point, \texttt{C-HDR}, \texttt{PCP}, \texttt{HD-PCP}, and \texttt{C-PCP} require sampling \( K \), \( L \), \( L \), and \( L + K \) points, respectively.

Let \( \mathcal{D}_\text{cal} = \{(X^{(j)}, Y^{(j)})\}_{j \in [|\mathcal{D}_\text{cal}|]} \) represent the calibration dataset and \( (X, Y) \) be the test instance.
Let \( \mathcal{S}_\text{cal} = \{ \mathcal{S}_\text{cal}^{(j)} \}_{j \in [|\mathcal{D}_\text{cal}|]} \) represent samples from the calibration dataset where \( \mathcal{S}_\text{cal}^{(j)} \) is generated based on input $X^{(j)}$ and \( \mathcal{S}_\text{test} \) the samples generated based on input $X$. Despite the added sampling uncertainty, these methods still provide a marginal coverage guarantee:

\begin{equation}
	\mathbb{P}_{X, Y, \mathcal{S}_\text{test}, \mathcal{D}_\text{cal}, \mathcal{S}_\text{cal}}(Y \in \hat{R}(X)) \geq 1 - \alpha.
	\label{eq:coverage_marginal_to_samples}
\end{equation}

Compared to \cref{eq:marginal_coverage}, the probability is additionally on \( \mathcal{S}_\text{cal} \) and \( \mathcal{S}_\text{test} \). This result, specifically for \texttt{PCP} and \texttt{HD-PCP}, was demonstrated by \cite{Wang2023-vn}.

In \cref{lemma:coverage_beta}, we further show that the marginal coverage conditional on the calibration dataset $\D_\text{cal}$ and the samples $\mc{S}_\text{cal}$ follows a beta distribution, using standard arguments. Assuming no ties among the scores, this lemma applies to any conformity score \( s \).

\begin{lemma}
    \label{lemma:coverage_beta}
	Assuming no ties among the scores and i.i.d. inputs, outputs and samples, the distribution of the coverage, conditional on the calibration dataset and its samples, is given by:
	\begin{equation}
		\mathbb{P}(Y \in \hat{R}(X) \mid \mathcal{D}_\text{cal}, \mathcal{S}_\text{cal})
		\sim \text{Beta}(k_\alpha, |\D_\text{cal}| + 1 - k_\alpha),
        \label{eq:coverage_beta}
	\end{equation}
	where \( k_\alpha = \lceil{(1-\alpha)(|\D_\text{cal}|+1)}\rceil \). Moreover, \( \mathbb{P}(Y \in \hat{R}(X)) = \frac{k_\alpha}{|\D_\text{cal}|+1} \geq 1 - \alpha \).
\end{lemma}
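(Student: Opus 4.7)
The plan is to reduce \cref{eq:coverage_beta} to the classical fact that the $k$-th order statistic of $n$ i.i.d.\ uniform random variables follows $\mathrm{Beta}(k, n+1-k)$, via the probability integral transform applied to the calibration scores. The key observation is that under the i.i.d.\ assumption on $(X^{(j)}, Y^{(j)})$ and on the per-input samples $\mathcal{S}_{\text{cal}}^{(j)}$, the calibration scores $s_j := s(X^{(j)}, Y^{(j)}, \mathcal{S}_{\text{cal}}^{(j)})$ are themselves i.i.d.\ and share the same distribution as the test score $S := s(X, Y, \mathcal{S}_{\text{test}})$.

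First, I would fix the CDF $F(t) := \mathbb{P}(S \leq t)$ of the test score, marginalized over $(X,Y,\mathcal{S}_{\text{test}})$. Since $(X,Y,\mathcal{S}_{\text{test}})$ is independent of $(\mathcal{D}_{\text{cal}}, \mathcal{S}_{\text{cal}})$ and $\hat{q}$ is measurable with respect to the latter, one immediately has
\begin{equation*}
\mathbb{P}\bigl(Y \in \hat{R}(X) \,\big|\, \mathcal{D}_{\text{cal}}, \mathcal{S}_{\text{cal}}\bigr) = \mathbb{P}(S \leq \hat{q} \mid \hat{q}) = F(\hat{q}).
\end{equation*}
So the task reduces to determining the distribution of $F(\hat{q})$ under the joint randomness of $(\mathcal{D}_{\text{cal}}, \mathcal{S}_{\text{cal}})$.

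Next, I would apply the probability integral transform to the calibration scores. Because the triples $(X^{(j)}, Y^{(j)}, \mathcal{S}_{\text{cal}}^{(j)})$ are i.i.d.\ and each $s_j$ has CDF $F$, the no-ties assumption ensures $F$ is continuous on the support, so $U_j := F(s_j) \overset{\text{i.i.d.}}{\sim} \mathcal{U}(0,1)$. By the definition of $\hat{q}$ in \cref{eq:q_hat} and the monotonicity of $F$, whenever $k_\alpha \leq |\mathcal{D}_{\text{cal}}|$ we have $F(\hat{q}) = U_{(k_\alpha)}$, the $k_\alpha$-th order statistic of $U_1, \dots, U_{|\mathcal{D}_{\text{cal}}|}$. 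The classical order-statistic formula then yields $F(\hat{q}) \sim \mathrm{Beta}(k_\alpha, |\mathcal{D}_{\text{cal}}|+1-k_\alpha)$, which is \cref{eq:coverage_beta}. The edge case $k_\alpha = |\mathcal{D}_{\text{cal}}| + 1$ (which forces $\hat{q} = \infty$ and coverage exactly $1$) is handled separately but is consistent with the Beta limit since $\mathrm{Beta}(n+1, 0)$ degenerates to a point mass at $1$.

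Finally, the unconditional identity follows by taking expectations and using $\mathbb{E}[\mathrm{Beta}(a,b)] = a/(a+b)$, giving $\mathbb{P}(Y \in \hat{R}(X)) = k_\alpha / (|\mathcal{D}_{\text{cal}}|+1)$; the bound $\geq 1-\alpha$ is then immediate from the definition $k_\alpha = \lceil (|\mathcal{D}_{\text{cal}}|+1)(1-\alpha)\rceil$. The main subtlety I expect is justifying the exchangeability/i.i.d.\ structure of the augmented variables $(X^{(j)}, Y^{(j)}, \mathcal{S}_{\text{cal}}^{(j)})$ when the sampling mechanism depends on $X^{(j)}$ (as for \texttt{C-HDR}, \texttt{PCP}, \texttt{HD-PCP}, \texttt{C-PCP}), and in ensuring that ``no ties'' rules out pathological atoms in the test-sample-induced distribution of the score; both are standard but should be stated explicitly as the hypotheses under which the integral transform argument goes through.
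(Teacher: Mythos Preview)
Your proposal is correct and follows essentially the same approach as the paper's proof: both reduce the conditional coverage to $F(S_{(k_\alpha)})$ via the probability integral transform on the i.i.d.\ scores of the augmented tuples $(X^{(j)}, Y^{(j)}, \mathcal{S}_{\text{cal}}^{(j)})$, and then invoke the Beta distribution of uniform order statistics. Your treatment is slightly more explicit about the edge case $k_\alpha = |\mathcal{D}_{\text{cal}}|+1$ and about where the i.i.d.\ hypothesis on the per-input samples is used, but the argument is the same.
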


\begin{proof}
For the methods \texttt{C-HDR}, \texttt{PCP}, \texttt{HD-PCP}, and \texttt{C-PCP}, the conformity score \( s \) is non-deterministic due to sampling uncertainty. To clarify, we define a deterministic conformity score \( \bar{s}: \mathcal{X} \times \mathcal{Y} \times \mathbb{S} \), where \( \mathbb{S} \) represents the space of samples for a given method.

For \( j = 1, \dots, |\D_\text{cal}| \), let \( S_j = \bar{s}(X^{(j)}, Y^{(j)}, \mathcal{S}_\text{cal}^{(j)}) \) denote the conformity score on the calibration dataset, and let \( S = \bar{s}(X, Y, \mathcal{S}_\text{test}) \) represent the conformity score for the test instance. Since \( \bar{s} \) is deterministic and the tuples $(X^{(1)}, Y^{(1)}, \mathcal{S}_\text{cal}^{(1)}), \dots, (X^{(|\D_\text{cal}|)}, Y^{(|\D_\text{cal}|)}, \mathcal{S}_\text{cal}^{(|\D_\text{cal}|)}), (X, Y, \mathcal{S}_\text{test})$ are i.i.d. random variables, \( S_1, \dots, S_{|\D_\text{cal}|}, S \) are also i.i.d. random variables.

Since \( S_1, \dots, S_{|\D_\text{cal}|}, S \) are indentically distributed, they share the same CDF. Using the probability integral transform, \( F_S(S) \sim U(0,1) \). Thus, \( F_S(S_1), \dots, F_S(S_{|\D_\text{cal}|}) \) correspond to uniform variates \( U_1, \dots, U_{|\D_\text{cal}|} \). Since there are no ties among the scores, \( F_S \) is strictly increasing, and \( F_S(S_{(j)}) = U_{(j)} \) for \( j = 1, \dots, |\D_\text{cal}| \), where \( S_{(j)} \) and \( U_{(j)} \) are the $j$-th order statistics. Hence:

\begin{align}
	\mathbb{P}(Y \in \hat{R}(X) \mid \mathcal{D}_\text{cal}, \mathcal{S}_\text{cal})
	&= \mathbb{P}(S \leq S_{(k_\alpha)} \mid S_1, \dots, S_{|\D_\text{cal}|}) \\
	&= F_S(S_{(k_\alpha)}) \\
	&= U_{(k_\alpha)} \\
	&\sim \text{Beta}(k_\alpha, |\D_\text{cal}| + 1 - k_\alpha).
\end{align}

The final step results from the distribution of uniform order statistics. Taking the expectation of the Beta distribution gives:

\begin{align}
	\mathbb{P}(Y \in \hat{R}(X))
	= \mathbb{E}[\mathbb{P}(Y \in \hat{R}(X) \mid \mathcal{D}_\text{cal}, \mathcal{S}_\text{cal})]
	= \frac{k_\alpha}{|\D_\text{cal}|+1}
	\geq 1 - \alpha.
\end{align}
\end{proof}

\subsection{Proofs of asymptotic conditional coverage}
\label{sec:proofs_conditional_coverage}

\subsubsection{L-CP}

\begin{proposition}
	Assuming $|D_\text{cal}| \to \infty$ and $\hat{Q}(Z; X) \dequal Y | X$, \tt{L-CP} achieves asymptotic conditional coverage.
\end{proposition}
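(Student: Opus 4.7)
My plan is to exploit the fact that, under the two stated assumptions, the conditional distribution of the conformity score $s_\text{L-CP}(X,Y)$ given $X=x$ is \emph{free of} $x$, and then to push the calibrated threshold $\hat q$ through the conditional coverage probability using the assumed convergence of $\hat q$ to a population quantile.

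First, I would use the invertibility of the generative model: for every fixed $x\in\mathcal{X}$, the map $y\mapsto \hat Q^{-1}(y;x)$ is a bijection between $\mathcal{Y}$ and $\mathcal{Z}$. Combining this with the assumption $\hat Q(Z;X)\dequal Y\mid X$ gives $\hat Q^{-1}(Y;x)\mid X=x \dequal Z$, and composing with $d_{\mathcal{Z}}$ (which does not depend on $x$) yields
\[
s_\text{L-CP}(X,Y)\mid X=x \;\dequal\; d_{\mathcal{Z}}(Z) \qquad \text{for every } x\in\mathcal{X},
\]
so the conditional CDF of the conformity score coincides at every $x$ with its marginal CDF.

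Next, I would invoke the second condition from the ACC assumption in \cref{sec:properties}: as $|\mathcal{D}_\text{cal}|\to\infty$, $\hat q$ converges to the $(1-\alpha)$-quantile $q^\star$ of $s_\text{L-CP}(X,Y)$, which by the previous display equals the $(1-\alpha)$-quantile of $d_{\mathcal{Z}}(Z)$. Substituting into the definition of $\hat R_\text{L-CP}$ in \cref{eq:region} and using the $x$-invariance above,
\[
\mathbb{P}(Y\in\hat R_\text{L-CP}(X)\mid X=x) = \mathbb{P}\bigl(d_{\mathcal{Z}}(\hat Q^{-1}(Y;x))\le\hat q \bigm| X=x\bigr) \;\longrightarrow\; \mathbb{P}(d_{\mathcal{Z}}(Z)\le q^\star) \;\ge\; 1-\alpha,
\]
which is exactly asymptotic conditional coverage at $x$.

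The main subtlety I anticipate is handling the joint randomness of $\hat q$ and $(X,Y)$ when taking the limit: since $\hat q$ is a function of $\mathcal{D}_\text{cal}$, turning the convergence $\hat q\to q^\star$ into convergence of the coverage probability requires continuity of the CDF of $d_{\mathcal{Z}}(Z)$ at $q^\star$, which is equivalent to, and implied by, the paper's standing no-ties assumption on conformity scores. This is a routine continuous-mapping / Glivenko--Cantelli style step and should pose no real difficulty once the distributional invariance established in the first step is in hand.
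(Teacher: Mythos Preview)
Your proposal is correct and follows essentially the same route as the paper: both arguments hinge on showing that $s_\text{L-CP}(X,Y)\mid X=x \dequal d_{\mathcal Z}(Z)$ is free of $x$ (via invertibility of $\hat Q(\cdot;x)$ and the oracle assumption), and then identifying the limiting $\hat q$ with the $(1-\alpha)$-quantile of $d_{\mathcal Z}(Z)$. The only cosmetic difference is that the paper first computes the conditional coverage as $F_{d_{\mathcal Z}(Z)}(\hat q)$, marginalizes over $X$ to see that marginal coverage equals the same quantity, and appeals to Glivenko--Cantelli to pin down $\hat q$, whereas you invoke condition~2 of \cref{sec:properties} directly; your remark on continuity/no-ties is a welcome precision that the paper leaves implicit.
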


\begin{proof}

We first show that the conditional coverage of \tt{L-CP} is equal to the CDF of the random variable $d_{\mc{Z}}(Z)$ in $\hat{q}$, i.e., $F_{d_{\mc{Z}}(Z)}(\hat{q})$.
Given $x \in \mc{X}$, we have:
\begin{align}
	&\mb{P}(Y \in \hat{R}_\text{L-CP}(X) \mid X = x) \\
	=&\mb{P}(Y \in \{ \hat{Q}(z; x) : z \in R_{\mc{Z}}(\hat{q}) \} \mid X = x) \\
	=&\mb{P}(\hat{Q}^{-1}(Y; X) \in R_{\mc{Z}}(\hat{q}) \mid X = x) & \text{(Invertibility of $\hat{Q}(\cdot; X)$)}  \\
	=&\mb{P}(Z \in R_{\mc{Z}}(\hat{q})) & \text{($\hat{Q}(Z; X) \dequal Y | X$)} \label{eq:invertibility_step} \\
	=&\mb{P}(d_{\mc{Z}}(Z) \leq \hat{q}) \\
	=& F_{d_{\mc{Z}}(Z)}(\hat{q}).
\end{align}

Marginalizing over $X$, we obtain that the marginal coverage is also equal to $F_{d_{\mc{Z}}(Z)}(\hat{q})$:
\begin{align}
	&\mb{P}(Y \in \hat{R}_\text{L-CP}(X)) \\
	=&\EE{X}{\mb{P}(Y \in \hat{R}_\text{L-CP}(X) \mid X)} \\
	=&\EE{X}{F_{d_{\mc{Z}}(Z)}(\hat{q})} \\
	=&F_{d_{\mc{Z}}(Z)}(\hat{q})
\end{align}

In the limit of $|\D_\text{cal}| \to \infty$, thanks to the Glivenko-Cantelli theorem, $\mb{P}(Y \in \hat{R}_\text{L-CP}(X)) = 1 - \alpha$ and the quantile $\hat{q}$ obtained by SCP is thus $F^{-1}_{d_{\mc{Z}}(Z)}(1 - \alpha)$.

Finally, we obtain that the conditional coverage is equal to $1 - \alpha$:
\begin{align}
	&\mb{P}(Y \in \hat{R}_\text{L-CP}(X) \mid X = x) \\
	=& F_{d_{\mc{Z}}(Z)}(F^{-1}_{d_{\mc{Z}}(Z)}(1 - \alpha)) \\
	=& 1 - \alpha.
\end{align}
\end{proof}

\subsubsection{C-HDR and C-PCP}

\begin{lemma}
	\label{lemma:cond_coverage_CDF}
	Assuming $|D_\text{cal}| \to \infty$, any conformal method with conformity score $s_\text{CDF}$ \cref{eq:CDF_score} achieves asymptotic conditional coverage, independently from the conformity score $s_W$ of the base method.
	With the additional assumption that $K \to \infty$ and $\hat{f} = f$, $s_{\text{ECDF}}$ \cref{eq:empirical_CDF_score} achieves asymptotic conditional coverage.
\end{lemma}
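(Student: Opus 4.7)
The heart of the argument is the probability integral transform: for any fixed $x \in \mathcal{X}$, the conditional random variable $s_W(x, Y) \mid X = x$ has CDF $F_{W \mid X=x}(\cdot)$, so $s_\text{CDF}(X, Y) \mid X = x = F_{W \mid X = x}(s_W(x, Y) \mid X = x) \sim \mathcal{U}(0, 1)$ regardless of the choice of the base score $s_W$ (assuming $F_{W \mid X = x}$ is continuous, which we may assume by a standard tie-breaking argument). Marginalizing over $X$ gives that $s_\text{CDF}(X, Y) \sim \mathcal{U}(0, 1)$ as well, so the distribution of the score is the same conditionally and marginally. This is the key structural property exploited here.

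Given this, the first part follows a pattern analogous to the \texttt{L-CP} proof earlier in the paper. The calibration threshold $\hat{q}$ is the empirical $(1-\alpha)$-quantile of i.i.d. copies of $s_\text{CDF}(X, Y) \sim \mathcal{U}(0,1)$; by Glivenko--Cantelli, as $|\mathcal{D}_\text{cal}| \to \infty$, $\hat{q} \xrightarrow{\text{a.s.}} 1 - \alpha$. Plugging this into the conditional coverage expression gives
\begin{equation}
	\mathbb{P}(Y \in \hat{R}(X) \mid X = x)
	= \mathbb{P}(s_\text{CDF}(X, Y) \leq \hat{q} \mid X = x)
	\longrightarrow \mathbb{P}(U \leq 1 - \alpha) = 1 - \alpha,
\end{equation}
where $U \sim \mathcal{U}(0, 1)$, uniformly in $x$. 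This yields asymptotic conditional coverage for $s_\text{CDF}$.

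For the empirical version $s_{\text{ECDF}}$, the plan is to reduce to the first case by showing $s_{\text{ECDF}}(x, y) \to s_\text{CDF}(x, y)$. Under the assumption $\hat{f} = f$, the Monte Carlo draws $\hat{Y}^{(k)} \sim \hat{F}_{Y \mid x}$ are i.i.d.\ from the true conditional distribution, so $\{s_W(x, \hat{Y}^{(k)})\}_{k=1}^K$ are i.i.d.\ with CDF $F_{W \mid X = x}$. A second application of Glivenko--Cantelli gives $\sup_{t} |\hat{F}_{W \mid X = x}^{(K)}(t) - F_{W \mid X = x}(t)| \xrightarrow{\text{a.s.}} 0$ as $K \to \infty$, i.e.\ $s_\text{ECDF}(x, y) \to s_\text{CDF}(x, y)$. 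Combined with the calibration convergence from the first part, the same computation yields conditional coverage $\to 1 - \alpha$.

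\textbf{Main obstacle.} The subtle point is the joint limit: $|\mathcal{D}_\text{cal}| \to \infty$ governs the concentration of $\hat{q}$, while $K \to \infty$ governs the convergence $s_\text{ECDF} \to s_\text{CDF}$ for both the calibration and test points. Care is needed because the samples used to compute $s_\text{ECDF}$ at each calibration/test input are independent draws, so the resulting score is technically random, and one must verify that the ordering of calibration scores converges to the ordering under $s_\text{CDF}$ uniformly. The cleanest route is to show that for each $x$ the error $|s_\text{ECDF}(x, \cdot) - s_\text{CDF}(x, \cdot)|_\infty$ vanishes almost surely (DKW inequality suffices), then invoke a dominated/continuous mapping argument to conclude that the empirical $(1-\alpha)$-quantile of the ECDF-scores also converges to $1 - \alpha$, at which point the PIT argument of the first part closes the proof.
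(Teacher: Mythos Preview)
Your proposal is correct and follows essentially the same approach as the paper: probability integral transform to get $s_\text{CDF}(X,Y)\mid X=x\sim\mathcal{U}(0,1)$, Glivenko--Cantelli to identify $\hat q\to 1-\alpha$, and then the direct computation of conditional coverage; for $s_\text{ECDF}$ the paper simply invokes the law of large numbers for $s_\text{ECDF}(x,y)\to s_\text{CDF}(x,y)$ under $\hat f=f$, which is the same reduction you propose (your DKW/uniform-convergence variant being a mild strengthening). If anything, you are more careful than the paper about the joint-limit subtlety in the ECDF case, which the paper does not discuss.
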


\begin{proof}
Let $W = s_W(X, Y)$ and consider $x \in \mc{X}$ and $y \in \mc{Y}$.
By the probability integral transform, $s_\text{CDF}(x, Y) = F_{W \mid X = x}(W \mid X = x) \sim \mc{U}(0, 1)$.

Marginalizing over $X$, we obtain:
\begin{align}
	\mb{P}(Y \in \hat{R}_\text{CDF}(X))
	&= \mb{P}(s_\text{CDF}(X, Y) \leq \hat{q}) \\
	&= \EE{X}{\mb{P}(s_\text{CDF}(X, Y) \leq \hat{q} \mid X)} \\
	&= \EE{X}{\mb{P}(U \leq \hat{q})} \\
	&= \EE{X}{\hat{q}} \\
	&= \hat{q},
\end{align}
where $U \sim \mc{U}(0, 1)$.
In the limit of $|\D_\text{cal}| \to \infty$, thanks to the Glivenko-Cantelli theorem, $\mb{P}(Y \in \hat{R}_\text{CDF}(X)) = 1 - \alpha$ and the quantile $\hat{q}$ obtained by SCP is thus $1 - \alpha$.

Finally, we note that:
\begin{align}
	\mb{P}(Y \in \hat{R}_\text{CDF}(X) \mid X = x)
	&= \mb{P}(s_\text{CDF}(X, Y) \leq \hat{q} \mid X = x) \\
	&= \mb{P}(U \leq 1 - \alpha) \\
	&= 1 - \alpha.
\end{align}

Assuming $\hat{f} = f$, observe that, for any $x \in \mc{X}$ and $y \in \mc{Y}$, $s_{\text{ECDF}}(x, y) \to s_\text{CDF}(x, y)$ as $K \to \infty$ by the law of large numbers.
Thus, under these conditions, any conformal method with conformity score $s_{\text{ECDF}}$ achieves conditional coverage.

\end{proof}

\begin{proposition}
	Assuming $|D_\text{cal}| \to \infty$ and $K \to \infty$, both \tt{C-HDR} and \tt{C-PCP} with the oracle base predictor $\hat{f} = f$ achieve conditional coverage.
\end{proposition}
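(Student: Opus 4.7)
The plan is to reduce both statements to Lemma~\ref{lemma:cond_coverage_CDF}, by exhibiting each of \tt{C-HDR} and \tt{C-PCP} as an instance of $s_{\text{ECDF}}$ applied to an appropriate base conformity score $s_W$, and then verify that the assumptions of the lemma ($|D_\text{cal}|\to\infty$, $K\to\infty$, $\hat{f}=f$) transfer correctly to this setting.

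First, for \tt{C-HDR}, I would rewrite $\text{HPD}_{\hat{f}}(y\mid x)$ in the CDF form already used in \cref{sec:CDF_based}. Setting $s_W = s_\text{DR-CP}$ and $W = s_W(X,Y) = -\hat{f}(Y\mid X)$, one has
\[
    \text{HPD}_{\hat{f}}(y\mid x) = \mathbb{P}\bigl(\hat{f}(\hat{Y}\mid x) \geq \hat{f}(y\mid x)\mid X=x\bigr) = F_{W\mid X=x}(s_W(x,y)\mid X=x),
\]
so the oracle version of the \tt{C-HDR} score equals $s_\text{CDF}$ with base $s_\text{DR-CP}$. The practical score used in the paper replaces this probability by its Monte Carlo estimate using $K$ samples from $\hat{f}(\cdot\mid x)$, which is exactly $s_{\text{ECDF}}$ with base score $s_\text{DR-CP}$. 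Under $\hat{f}=f$ these samples are drawn from the true conditional, so the law of large numbers applies and $s_{\text{ECDF}}(x,y)\to s_\text{CDF}(x,y)$ pointwise as $K\to\infty$. Lemma~\ref{lemma:cond_coverage_CDF} then yields asymptotic conditional coverage for \tt{C-HDR}.

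Second, for \tt{C-PCP}, the conformity score is by construction $s_{\text{ECDF}}$ instantiated with $s_W = s_\text{PCP}$, where the $L$ inner samples $\tilde{Y}^{(l)}$ and the $K$ outer samples $\hat{Y}^{(k)}$ in \cref{eq:empirical_CDF_score} are all drawn from $\hat{F}_{Y\mid x}$. When $\hat{f} = f$ (equivalently $\hat{F}_{Y\mid x} = F_{Y\mid x}$), all samples come from the true conditional distribution, and the same direct appeal to Lemma~\ref{lemma:cond_coverage_CDF} gives asymptotic conditional coverage for \tt{C-PCP}.

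The main subtlety, and what I would be most careful about, is the interplay of the two randomness sources in each score. For \tt{C-PCP} the base score $s_\text{PCP}$ is itself random through the inner sample $\{\tilde{Y}^{(l)}\}_{l\in[L]}$, so one must treat these samples as part of the augmented calibration/test data (as in \cref{sec:proof_marginal_coverage}) so that the probability integral transform argument in Lemma~\ref{lemma:cond_coverage_CDF} still produces a $\mc{U}(0,1)$ distribution for $s_\text{CDF}(X,Y)\mid X=x$. Analogously, for \tt{C-HDR} the LLN convergence $s_{\text{ECDF}}\to s_\text{CDF}$ must be promoted to a statement strong enough that the empirical quantile $\hat{q}$ on the calibration scores converges to the oracle $1-\alpha$ quantile; this follows by combining the pointwise convergence with the Glivenko--Cantelli step already used in the proof of Lemma~\ref{lemma:cond_coverage_CDF}.
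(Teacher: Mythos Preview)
Your proposal is correct and follows essentially the same approach as the paper: both reduce the claim to Lemma~\ref{lemma:cond_coverage_CDF} by taking $s_W = s_\text{DR-CP}$ for \tt{C-HDR} and $s_W = s_\text{PCP}$ for \tt{C-PCP}. The paper's proof is a single sentence invoking the lemma, whereas you spell out the identification of each score with $s_{\text{ECDF}}$ and flag the randomness subtleties (the inner samples in $s_\text{PCP}$ and the LLN-to-quantile passage); these extra remarks go beyond what the paper states but do not change the argument.
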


\begin{proof}
	The proof is direct by \cref{lemma:cond_coverage_CDF} with $s_W(x, y) = s_\text{DR-CP}(x, y)$ for C-HDR and $s_W(x, y) = s_\text{PCP}(x, y)$ for C-PCP.
\end{proof}

\subsubsection{M-CP}
\label{sec:proofs_conditional_coverage_M-CP}

Consider \tt{M-CP} with exact quantile estimates $\hat{l}_i(x) = Q_{Y_i}(\alpha_l \mid x)$ and $\hat{u}_i(x) = Q_{Y_i}(\alpha_u \mid x)$ where $Q_{Y_i}(\alpha \mid x)$ is the quantile function of $Y_i$ conditional to $X = x$ evaluated in $\alpha$.
This section introduces two propositions where \tt{M-CP} requests two different nominal coverage levels $\alpha_u - \alpha_l$, namely $\sqrt[d]{1 - \alpha}$ and $1 - \alpha$.
The propositions show that \tt{M-CP} can achieve conditional coverage under two contrasting scenarios: independence or total dependence between the dimensions of the output.

\begin{proposition}
Assuming $Y_1, \dots, Y_d$ are conditionally independent given $X$, \tt{M-CP} achieves conditional coverage if $\alpha_u - \alpha_l = \sqrt[d]{1 - \alpha}$.
\end{proposition}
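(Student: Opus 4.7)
The plan is to combine the oracle-quantile assumption, the conditional-independence hypothesis, and the standing asymptotic assumption that $\hat{q}$ converges to the $(1-\alpha)$-quantile of $s_{\text{M-CP}}(X,Y)$ to show that this asymptotic threshold is $0$, which then immediately gives pointwise $(1-\alpha)$ coverage.

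\textbf{Step 1: reduce the score on each coordinate.} With oracle quantiles $\hat{l}_i(x)=Q_{Y_i}(\alpha_l\mid x)$ and $\hat{u}_i(x)=Q_{Y_i}(\alpha_u\mid x)$, the CQR score of coordinate $i$ satisfies $s_i(x,Y_i)\le 0$ if and only if $Y_i\in[\hat{l}_i(x),\hat{u}_i(x)]$. By definition of the conditional quantile function,
\begin{equation*}
\mathbb{P}(s_i(X,Y_i)\le 0\mid X=x)=\alpha_u-\alpha_l
\end{equation*}
for every $x\in\mathcal{X}$ and every $i\in[d]$.

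\textbf{Step 2: apply conditional independence.} Since $s_{\text{M-CP}}(x,y)=\max_{i\in[d]}s_i(x,y_i)$, the event $\{s_{\text{M-CP}}(X,Y)\le 0\}$ is the intersection of the coordinatewise events $\{s_i(X,Y_i)\le 0\}$. Under the hypothesis that $Y_1,\dots,Y_d$ are conditionally independent given $X$, the events $\{s_i(X,Y_i)\le 0\}$ are conditionally independent given $X=x$, so
\begin{equation*}
\mathbb{P}(s_{\text{M-CP}}(X,Y)\le 0\mid X=x)=\prod_{i=1}^{d}(\alpha_u-\alpha_l)=(\alpha_u-\alpha_l)^d=1-\alpha,
\end{equation*}
by the choice $\alpha_u-\alpha_l=\sqrt[d]{1-\alpha}$.

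\textbf{Step 3: identify the asymptotic threshold.} Marginalizing Step 2 over $X$ yields $\mathbb{P}(s_{\text{M-CP}}(X,Y)\le 0)=1-\alpha$. Under assumption~2 of \cref{sec:properties}, $\hat{q}$ converges to the $(1-\alpha)$-quantile of $s_{\text{M-CP}}(X,Y)$; the previous identity shows that $0$ is such a quantile. Hence asymptotically $\hat{q}=0$, and the prediction region reduces to the oracle hyperrectangle $\hat{R}_{\text{M-CP}}(x)=\bigtimes_{i=1}^{d}[\hat{l}_i(x),\hat{u}_i(x)]$. Applying Steps 1--2 once more gives $\mathbb{P}(Y\in\hat{R}_{\text{M-CP}}(X)\mid X=x)=1-\alpha$ for every $x$, i.e.\ asymptotic conditional coverage.

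\textbf{Main obstacle.} The only delicate point is Step 3: because the distribution of $s_{\text{M-CP}}(X,Y)$ can have an atom at $0$ (if some $Y_i\mid X$ is degenerate, for instance), the $(1-\alpha)$-quantile need not be uniquely defined. However, the standing asymptotic convergence assumption bypasses this by pinning down which quantile $\hat{q}$ converges to, and one only needs that $0$ is a valid choice, which follows directly from $\mathbb{P}(s_{\text{M-CP}}(X,Y)\le 0)=1-\alpha$. Everything else is a one-line computation using independence and the probability integral transform.
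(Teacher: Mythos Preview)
Your proof is correct and follows essentially the same approach as the paper: compute the coordinatewise coverage at threshold $0$, use conditional independence to get the joint conditional coverage equal to $1-\alpha$, marginalize to identify $\hat{q}=0$ asymptotically, and conclude. Your added remark about a possible atom at $0$ is a nice extra observation of a subtlety the paper glosses over.
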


\begin{proof}
For any $x \in \mc{X}$ and $i \in [d]$, we first establish that the $\sqrt[d]{1 - \alpha}$th quantile of the distribution of $s_i(X, Y_i)$ given $X = x$ equals 0:
\begin{align}
	\mb{P}(s_i(X, Y_i) \leq 0 \mid X = x)
	&= \mb{P}(\max\{l_i(X) - Y, Y - u_i(X)\} \leq 0 \mid X = x) \\
	&= \mb{P}(l_i(X) \leq Y \land Y \leq u_i(X) \mid X = x) \\
	&= 1 - \mb{P}(l_i(X) > Y \lor Y > u_i(X) \mid X = x) \\
	&= 1 - \mb{P}(l_i(X) > Y \mid X = x) - \mb{P}(Y > u_i(X) \mid X = x) \\
	&= 1 - \alpha_l - (1  - \alpha_u) \\
	&= \alpha_u - \alpha_l \label{eq:quantile_score_cqr_general} \\
	&= \sqrt[d]{1 - \alpha} \label{eq:quantile_score_cqr_ind}.
\end{align}

Using \cref{eq:quantile_score_cqr_ind}, we show that the $1 - \alpha$th quantile of the distribution of $s(X, Y)$ given $X = x$ is 0:
\begin{align}
	\mb{P}(s_\text{M-CP}(X, Y) \leq 0 \mid X = x)
	&= \mb{P}(s_i(X, Y_i) \leq 0, \forall i \in [d] \mid X = x) \\
	&= \mb{P}(s_1(X, Y_1) \leq 0 \land \dots \land s_d(X, Y_d) \leq 0 \mid X = x) \\
	&= \mb{P}(s_1(X, Y_1) \leq 0 \mid X = x) \dots \mb{P}(s_d(X, Y_d) \leq 0 \mid X = x) \label{eq:cond_ind_step} \\
	&= \sqrt[d]{1 - \alpha}^d \\
	&= 1 - \alpha, \label{eq:quantile_score_simultaneous}
\end{align}
where \cref{eq:cond_ind_step} is obtained by conditional independence of $Y_1, \dots, Y_d$ given $X$.
Marginalizing over $X$, we obtain that the $1 - \alpha$th quantile of $s(X, Y)$ is 0:
\begin{align}
	\mb{P}(s_\text{M-CP}(X, Y) \leq 0)
	&= \EE{X}{\mb{P}(s_\text{M-CP}(X, Y) \leq 0 \mid X)} \\
	&= \EE{X}{1 - \alpha} \\
	&= 1 - \alpha \label{eq:quantile_score}.
\end{align}

In the limit of $|\D_\text{cal}| \to \infty$, thanks to the Glivenko-Cantelli theorem, $\mb{P}(Y \in \hat{R}_\text{M-CP}(X)) = 1 - \alpha$ and the quantile $\hat{q}$ obtained by SCP is thus $0$.

Finally, using \cref{eq:quantile_score_simultaneous} and $\hat{q} = 0$, we obtain that \tt{M-CP} achieves conditional coverage:
\begin{align}
	\mb{P}(Y \in \hat{R}_\text{M-CP}(X) \mid X = x)
	= \mb{P}(s_\text{M-CP}(X, Y) \leq \hat{q} \mid X = x)
	= 1 - \alpha.
\end{align}
\end{proof}

\begin{proposition}
Assuming $Y_1 | X \asequal \dots \asequal Y_d | X$, \tt{M-CP} achieves conditional coverage if $\alpha_u - \alpha_l = 1 - \alpha$.
\end{proposition}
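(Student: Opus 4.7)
The plan is to follow the same template as the independence proposition, but replace the product decomposition (step \cref{eq:cond_ind_step}) by an almost sure identification of the $d$ per-coordinate scores. The key observation is that under $Y_1 \mid X \asequal \cdots \asequal Y_d \mid X$, the quantile functions $Q_{Y_i}(\cdot \mid x)$ coincide across $i \in [d]$, so $\hat{l}_i(x) = Q_{Y_i}(\alpha_l \mid x)$ and $\hat{u}_i(x) = Q_{Y_i}(\alpha_u \mid x)$ do not depend on $i$. Combined with $Y_1 \asequal \cdots \asequal Y_d$ given $X$, this yields $s_1(X, Y_1) \asequal \cdots \asequal s_d(X, Y_d)$ conditionally on $X$, hence $s_\text{M-CP}(X, Y) = \max_{i \in [d]} s_i(X, Y_i) \asequal s_1(X, Y_1)$ given $X$.

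Given that reduction, I would reuse equation \eqref{eq:quantile_score_cqr_general} to conclude that, for every $x \in \mathcal{X}$,
\begin{equation*}
\mathbb{P}(s_\text{M-CP}(X, Y) \leq 0 \mid X = x) = \mathbb{P}(s_1(X, Y_1) \leq 0 \mid X = x) = \alpha_u - \alpha_l = 1 - \alpha.
\end{equation*}
Marginalizing over $X$ gives $\mathbb{P}(s_\text{M-CP}(X, Y) \leq 0) = 1 - \alpha$, so the $(1-\alpha)$-quantile of $s_\text{M-CP}(X, Y)$ equals $0$. By the Glivenko--Cantelli theorem, as $|\mathcal{D}_\text{cal}| \to \infty$, the empirical quantile $\hat{q}$ produced by SCP converges to $0$.

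Plugging $\hat{q} = 0$ back into the conditional coverage expression yields
\begin{equation*}
\mathbb{P}(Y \in \hat{R}_\text{M-CP}(X) \mid X = x) = \mathbb{P}(s_\text{M-CP}(X, Y) \leq \hat{q} \mid X = x) = 1 - \alpha,
\end{equation*}
which is the desired asymptotic conditional coverage. The only subtle step is justifying that almost sure equality $Y_1 \mid X \asequal \cdots \asequal Y_d \mid X$ transfers to almost sure equality of the scores; this follows because the quantile functions are measurable transformations of the (common) conditional distribution and the score is a deterministic function of $Y_i$ and those quantiles. No extra care regarding the $\max$ is needed because the maximum of almost surely equal quantities is almost surely equal to any one of them, so no union-bound slack or dependence modeling is required—this is exactly the opposite extreme of the conditional-independence regime.
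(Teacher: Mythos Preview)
Your proposal is correct and follows essentially the same approach as the paper: both argue that $Y_1 \mid X \asequal \cdots \asequal Y_d \mid X$ forces the per-coordinate quantile bounds $\hat{l}_i(x), \hat{u}_i(x)$ to coincide across $i$, hence $s_1(X,Y_1) \asequal \cdots \asequal s_d(X,Y_d)$ given $X$, reducing the $\max$ in $s_\text{M-CP}$ to a single coordinate; from there both invoke \eqref{eq:quantile_score_cqr_general}, marginalize, apply Glivenko--Cantelli to identify $\hat{q}=0$, and read off conditional coverage.
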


\begin{proof}
Let $x \in \mc{X}$. Using \cref{eq:quantile_score_cqr_general}, we first show that the $1 - \alpha$th conditional quantile of the distribution of $s_i(X, Y_i)$, for any $i \in [d]$, is 0:
\begin{align}
	\mb{P}(s_i(X, Y_i) \leq 0 \mid X = x)
	&= \alpha_u - \alpha_l \\
	&= 1 - \alpha. \label{eq:quantile_score_cqr_dep}
\end{align}

Using \cref{eq:quantile_score_cqr_dep}, we show that the $1 - \alpha$th quantile of the distribution of $s(X, Y)$ given $X$ is 0:
\begin{align}
	\mb{P}(s(X, Y) \leq 0 \mid X = x)
	&= \mb{P}(s_i(X, Y_i) \leq 0, \forall i \in [d] \mid X = x) \\
	&= \mb{P}(s_1(X, Y_1) \leq 0 \land \dots \land s_d(X, Y_d) \leq 0 \mid X = x) \\
	&= \mb{P}(s_1(X, Y_1) \leq 0 \mid X = x) \label{eq:cond_dep_step} \\
	&= 1 - \alpha \label{eq:quantile_score_simultaneous_dep},
\end{align}
where \cref{eq:cond_dep_step} is due to $Y_1 | X \asequal \dots \asequal Y_d | X$, which implies that, conditional to $X = x$, $l_1(X) = \dots = l_d(X)$ and $u_1(X) = \dots = u_d(X)$ and thus $s_1(X, Y_1) = \dots = s_d(X, Y_d)$.
Using \cref{eq:quantile_score}, we obtain that $\hat{q} = 0$, 
Finally, using \cref{eq:quantile_score_simultaneous_dep}, we obtain that \tt{M-CP} achieves conditional coverage:
\begin{align}
	\mb{P}(Y \in \hat{R}(X) \mid X = x)
	= \mb{P}(s(X, Y) \leq 0 \mid X = x)
	= 1 - \alpha.
\end{align}
\end{proof}

\subsection{Connection between sample-based and density-based methods}
\label{sec:proofs_connection}

This section proves the connections between sample-based and density-based methods as introduced in \cref{sec:connections}.
We start by restating a known lemma of conformal prediction.

\begin{lemma}
	\label{lemma:increasing_score}
	Consider a conformal prediction method with conformity score $s$.
	If $g: \R \to \R$ is a strictly increasing function, then the method with conformity score $g \circ s$ will produce the same prediction regions.
\end{lemma}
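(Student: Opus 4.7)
The plan is to show that strict monotonicity of $g$ transports the calibration threshold and the region-defining inequality in a compatible way, leaving the prediction region unchanged at every $x \in \mc{X}$. The argument reduces to two elementary observations: strictly increasing maps commute with the order statistics of a finite sample (hence with empirical quantiles), and they preserve sublevel sets.

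First I would expose the two calibration samples side by side: under score $s$ they are $s_i := s(X^{(i)}, Y^{(i)})$ for $i \in [|\D_\text{cal}|]$, and under $g \circ s$ they are $s_i^{(g)} := g(s_i)$. Because $g$ is strictly increasing on $\R$, it preserves the ordering of any finite tuple, so the order statistics satisfy $(s^{(g)})_{(k)} = g(s_{(k)})$ for every rank $k$. Extending $g$ to $\bar{\R}$ by $g(+\infty) := +\infty$ (consistent with strict monotonicity), the same identity carries over to the augmented multiset $\{s_i\} \cup \{\infty\}$ appearing in \cref{eq:q_hat}. Reading off the $k_\alpha$-th order statistic then yields $\hat{q}_g = g(\hat{q})$, where $\hat{q}_g$ denotes the calibration threshold produced by the transformed method.

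Next, I would substitute into the region definition in \cref{eq:region}. For any $x \in \mc{X}$,
\begin{align*}
\hat{R}_g(x) &= \{y \in \mc{Y} : g(s(x, y)) \leq g(\hat{q})\} \\
             &= \{y \in \mc{Y} : s(x, y) \leq \hat{q}\} \;=\; \hat{R}(x),
\end{align*}
where the middle step uses the fact that strict monotonicity of $g$ gives $g(a) \leq g(b)$ if and only if $a \leq b$. This establishes the equality of prediction regions.

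There is no substantive obstacle; the only subtlety lies in the handling of the sentinel $\infty$ inside \cref{eq:q_hat}. It is selected as the empirical quantile exactly when $k_\alpha = |\D_\text{cal}|+1$, and in that degenerate regime both methods trivially return $\hat{R}(x) = \mc{Y}$. Thus the lemma is essentially a restatement of the familiar fact that strictly monotone transformations commute with empirical quantiles and preserve sublevel sets.
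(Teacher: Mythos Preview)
Your proof is correct and follows essentially the same approach as the paper: both arguments use that a strictly increasing $g$ commutes with the empirical quantile in \cref{eq:q_hat} and preserves the sublevel-set inequality in \cref{eq:region}. Your version is slightly more explicit about order statistics and the $\infty$ sentinel, but the underlying reasoning is identical.
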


\begin{proof}
	For any $x \in \mc{X}$, consider the prediction region created with $s$ as in \cref{sec:SCP}:
	\begin{equation}
		\hat{R}(x) = \left\{y \in \mc{Y}: s(x, y) \leq \text{Quantile}\left(\{s_i\}_{i \in [|\mathcal{D}_\text{cal}|]} \cup \{\infty\}; k_\alpha\right)\right\}.
	\end{equation}
	Since $g$ is strictly increasing,
	\begin{align}
		\hat{R}(x) &= \left\{y \in \mc{Y}: g(s(x, y)) \leq g\left(\text{Quantile}\left(\{s_i\}_{i \in [|\mathcal{D}_\text{cal}|]} \cup \{\infty\}; k_\alpha\right)\right)\right\} \\
		&= \left\{y \in \mc{Y}: g(s(x, y)) \leq \text{Quantile}\left(\{g(s_i)\}_{i \in [|\mathcal{D}_\text{cal}|]} \cup \{\infty\}; k_\alpha\right)\right\}. \label{eq:transformed_conformity_score}
	\end{align}
	Since \cref{eq:transformed_conformity_score} corresponds to the prediction region with conformity score $g \circ s$, this shows that the two methods create the same regions.
\end{proof}

\specialcasesamplebased*

\begin{proof}
	In the following proof, we note $a \uparrow b$ to signify that there exists a strictly increasing function $g$ such that $a = g(b)$.
	Consider \tt{DR-CP} with $\hat{f} = \hat{f}_\text{max}$. We have:
	\begin{align}
		s_\text{DR-CP}(x, y) &= -\hat{f}_\text{max}(y \mid x) \\
		&\uparrow -\max_{l \in [L]} f_{\mb{S}}(y; \tilde{Y}^{(l)}) \quad &&\text{$\left(\hat{f}_\text{max}(y \mid x) = \max_{l \in [L]} f_{\mb{S}}(y; \tilde{Y}^{(l)}) / C \right)$} \\
		&= \min_{l \in [L]} -f_{\mb{S}}(y; \tilde{Y}^{(l)}) \\
		&\uparrow \min_{l \in [L]} \norm{y - \tilde{Y}^{(l)}} \quad &&\text{$\big(f_{\mb{S}}(y; \tilde{Y}^{(l)})$ has spherical level sets centered at $\tilde{Y}^{(l)}\big)$} \\
		&= s_\text{PCP}(x, y). \label{eq:PCP_score_eq}
	\end{align}
	We obtain the equivalence between the two methods by \cref{lemma:increasing_score}.
	The proof for HD-PCP follows the same arguments.
	
	We now consider \tt{C-HDR} with $\hat{f} = \hat{f}_\text{max}$. We have:
	\begin{align}
		\label{eq:C_HDR_score_eq}
		s_\text{C-HDR}(x, y) &= \frac{1}{K} \sum_{k \in [K]} \indicator(\hat{f}_\text{max}(\hat{Y}^{(k)} \mid x) \geq \hat{f}_\text{max}(y \mid x)) \quad \text{where $\hat{Y}^{(k)} \sim \hat{f}(\cdot \mid x), k \in [K]$}.
	\end{align}
	
	Developing the inequality for $k \in [K]$, we obtain:
	\begin{align}
		&\hat{f}_\text{max}(\hat{Y}^{(k)} \mid x) \geq \hat{f}_\text{max}(y \mid x) \\
		\iff & \max_{l \in [L]} f_{\mb{S}}(\hat{Y}^{(k)}; \hat{Y}^{(l)}) \geq \max_{l \in [L]} f_{\mb{S}}(y; \hat{Y}^{(l)})  \quad &&\text{$\left(\hat{f}_\text{max}(y \mid x) = \max_{l \in [L]} f_{\mb{S}}(y; \tilde{Y}^{(l)}) / C \right)$} \\
		\iff & \min_{l \in [L]} -f_{\mb{S}}(\hat{Y}^{(k)}; \hat{Y}^{(l)}) \leq \min_{l \in [L]} -f_{\mb{S}}(y; \hat{Y}^{(l)}) \\
		\iff & \min_{l \in [L]} \norm{\hat{Y}^{(k)} - \tilde{Y}^{(l)}} \leq \min_{l \in [L]} \norm{y - \tilde{Y}^{(l)}}. \quad &&\text{$\big(f_{\mb{S}}(y; \tilde{Y}^{(l)})$ has spherical level sets centered at $\tilde{Y}^{(l)}\big)$} \\ \label{eq:cp2_pcp_eq}
	\end{align}
	Noting that \cref{eq:C_HDR_score_eq} with \cref{eq:cp2_pcp_eq} corresponds to the conformity score of C-PCP, we obtain the equivalence.
\end{proof}

\section{Experimental setup}
\label{sec:experimental_setup}

This section describes our experimental setup in more details.
Computations were performed based on 2 workstations, one with with 2 A6000 GPUs and 64 CPU threads, and one with 2 A5000 GPUs and 64 CPU threads, running for 48 hours.

\subsection{Datasets}
\label{sec:study_datasets}

We consider a total of 13 datasets that have been used in previous studies. Since our focus is on multivariate prediction regions, we select only datasets with an output that is at least two-dimensional. Specifically, we include 6 datasets from \cite{Feldman2023-cc}, 4 datasets from \cite{Tsoumakas2011-wf} (MULAN benchmark), 1 dataset from \cite{Wang2023-vn}, 1 datasets from \cite{Del_Barrio2022-hz}, and 1 dataset from \cite{Camehl2024-vw}. 

Each dataset is split into training, validation, calibration, and test sets with 2048 points reserved for calibration. The remaining data is split into 55\% for training, 15\% for validation and 30\% for testing. The preprocessing follows the setup described in \cite{Grinsztajn2022-nu}. \cref{table:datasets} provides the detailed characteristics of each dataset.

\begin{table}[H]
	\fontsize{9pt}{10.5pt}
	\selectfont
	\caption{Characteristics of each dataset considered in this study.}
	\label{table:datasets}
	\centering
	\begin{tabular}{llrrr}
\toprule
 &  & Nb instances & Nb features $p$ & Nb targets $d$ \\
Source & Dataset &  &  &  \\
\midrule
Camehl & households & 7207 & 4 & 4 \\
\multirow[t]{4}{*}{Mulan} & scm20d & 8966 & 60 & 16 \\
 & rf1 & 9005 & 64 & 8 \\
 & rf2 & 9005 & 64 & 8 \\
 & scm1d & 9803 & 279 & 16 \\
\multirow[t]{6}{*}{Feldman} & meps\_21 & 15656 & 137 & 2 \\
 & meps\_19 & 15785 & 137 & 2 \\
 & meps\_20 & 17541 & 137 & 2 \\
 & house & 21613 & 14 & 2 \\
 & bio & 45730 & 8 & 2 \\
 & blog\_data & 50000 & 55 & 2 \\
Del Barrio & calcofi & 50000 & 1 & 2 \\
Wang & taxi & 50000 & 4 & 2 \\
\bottomrule
\end{tabular}

\end{table}

\subsection{Base predictors}
\label{sec:study_base_predictors}

We consider multiple base predictors and focus on MQF$^2$ for our main experiments (\cref{sec:study}).

\paragraph{MQF$^2$.}

The Multivariate Quantile Function Forecaster \citep[MQF$^2$,][]{Kan2022-xl} is a normalizing flow that is directly compatible with most of the methods presented since it is invertible, has an explicit density function, and can be sampled from. \texttt{M-CP}, \texttt{CopulaCPTS} and \texttt{STDQR} require small adaptations from the original methods, as discussed below. The quantile function \( \hat{Q} \) and distribution function \( \hat{Q}^{-1} \) of MQF$^2$ exhibit cyclical monotonicity, meaning they are the gradient of a convex function \citep{Hallin2021-vt}.

The main idea behind MQF$^2$ is to interpret Convex Potential Flows \citep{Huang2020-md} as multivariate (vector) quantile functions, in the sense that the representation property \cref{eq:representation_property} and cyclical monotonicity property \cref{eq:cyclical_monotonicity_property} are satisfied \citep{Carlier2016-pj}:
\begin{align}
	Y = \hat{Q}(Z; x) &\qquad \forall x \in \mc{X} \text{ where } Z \sim \mc{U}(0, 1)^d, \label{eq:representation_property} \\
	\left(\hat{Q}(z_1; x) - \hat{Q}(z_2; x)\right)^T (z_1 - z_2) \geq 0 &\qquad \forall x \in \mc{X}, z_1, z_2 \in \mc{Z} \label{eq:cyclical_monotonicity_property}.
\end{align}

When $d = 1$, this reduces to the classical univariate quantile function.
In practice, we follow \cite{Kan2022-xl} and use a quantile vector that follows a normal distribution $Z \sim \mc{N}(0, I)$, allowing better training.

The underlying model of MQF$^2$ is a partially input-convex neural network \citep[PINN,][]{Amos2017-zw} with two hidden layers, each containing 30 units. Increasing the number of parameters did not significantly improve performance, which is partly due to the efficiency of Convex Potential Flows compared to other normalizing flows \citep{Huang2020-md}. While hyperparameter tuning for each dataset could enhance performance, it is not the primary focus of this paper.

MQF$^2$ is trained using maximum likelihood estimation with early stopping, with a patience of 15 epochs, where validation loss is measured every two epochs.

\paragraph{Distributional Random Forests.}

Distributional Random Forest \citep{Cevid2022-ev} is a model built upon the Random Forest algorithm, which adaptively identifies the relevant training data points for any given test point.
More specifically, given a test point $x \in \mc{X}$, Distributional Random Forest outputs a weight $w(x^{(i)} \mid x)$ for each training point $x^{(i)}$ with $x^{(i)} \in \D_\text{train}$.
This approach enables accurate estimation of any quantity of interest conditional on $x \in \mc{X}$. In our experiments, we estimate the conditional distribution $Y | X$ as a Gaussian mixture, with each component centered on a training point and weighted by the Distributional Random Forest.

The density function at $y \in \mc{Y}$ given $x \in \mc{X}$ is expressed as:
\[
\hat{f}(y \mid x) = \sum_{i=1}^{|\D_\text{train}|} w(x^{(i)} \mid x) \cdot \mathcal{N}(y \mid y^{(i)}, \sigma I_d),
\]
where $\sigma$ is tuned by minimizing the NLL on a grid search.
For the Distributional Random Forest, the minimum node size is set to 15, the forest consists of 2000 trees, and the splitting criterion is the maximum mean discrepancy (MMD).

Since this method does not operate in a latent space, we do not consider \tt{L-CP} in combination with this base predictor. CD diagrams for this predictor are presented in \cref{sec:results_distribution_random_forests}.

\paragraph{Multivariate Gaussian Mixture Model parameterized by a hypernetwork.}

As another base predictor, we consider a multivariate Gaussian Mixture Model parameterized by a hypernetwork. The hypernetwork is a multilayer perceptron (MLP) that outputs the parameters of a mixture of $M$ multivariate Gaussian distributions. Given $x \in \mc{X}$, for each mixture component $m \in [M]$, the hypernetwork outputs the logit $z_m(x)$ (for the categorical distribution over the mixture components), the mean $\mu_m(x)$ (component location), and the lower triangular Cholesky factor $L_m(x)$ (representing the scale of the covariance matrix).

The mixture weights $\pi_m(x)$ are obtained by applying the softmax function to the logits $z_m(x)$, ensuring they sum to 1. The covariance matrices $\Sigma_m(x)$ for each component are constructed by taking the product $L_m(x) L_m(x)^\top$, guaranteeing that they are positive semi-definite.

The density function evaluated in $y \in \mc{Y}$ conditional to $x \in \mc{X}$ is given by:
\[
\hat{f}(y \mid x) = \sum_{m=1}^{M} \pi_m(x) \cdot \mathcal{N}(y \mid \mu_m(x), \Sigma_m(x)).
\]
The model is trained using maximum likelihood estimation with $M = 10$.

Similarly to Distributional Random Forests, this method does not operate in a latent space, and thus we do not consider \tt{L-CP}. CD diagrams for this predictor are presented in \cref{sec:results_gaussian_mixture}.

\subsection{Adaptation of conformal methods into a common framework.}

To ensure a fair comparison among conformal methods, we apply the calibration step using the same base predictors. Only \texttt{M-CP}, \texttt{CopulaCPTS}, and \texttt{STDQR} require slight modifications from their original formulations.

For \texttt{M-CP} and \texttt{CopulaCPTS}, direct estimation of marginal distributions for each output \( Y_i, i \in [d] \) is infeasible with MQF$^2$. Instead, we estimate the lower and upper quantiles by first sampling \( \{\hat{Y}^{(l)}\}_{l \in [L]} \) from \( \hat{f}(\cdot \mid x) \) given \( x \in \mathcal{X} \), and then computing the empirical quantiles \( \hat{Y}_i^{\left(\lfloor L \frac{\alpha}{2} \rfloor \right)} \) and \( \hat{Y}_i^{\left(\lfloor L (1 - \frac{\alpha}{2}) \rfloor \right)} \). Sampling time is not accounted in time computations for these methods. While a more computationally efficient base predictor could be used, this approach ensures a direct comparison with other conformal methods by maintaining consistency in the base predictor.

For \texttt{STDQR}, we modify the original method by replacing the conditional variational autoencoder (CVAE) with a normalizing flow. Following recommendations for future work from \cite{Feldman2023-cc}, we exploit the property that the output is normally distributed in the latent space and replace the base predictor by a normalizing flow. This adaptation leverages the assumption that the output is normally distributed in the latent space, allowing for an exact inverse transformation and eliminating a potential source of noise. To construct a region \( R_\mathcal{Z} \) with coverage \( 1 - \alpha \) in the latent space, we select the \( 1 - \alpha \) proportion of samples closest to the origin, ensuring correct coverage without the need for directional quantile regression. The calibration procedure remains unchanged.

\subsection{Metrics}
\label{sec:study_metrics}

\paragraph{Marginal coverage.} Marginal coverage is measured using 
\[
\text{MC} = \frac{1}{|\D_\text{test}|} \sum_{(x, y) \in \D_\text{test}} \indicator(y \in \hat{R}(x)).
\]

\paragraph{Region size.} We report the mean region size 
\[
\text{Mean Size} = \frac{1}{|\D_\text{test}|} \sum_{(x, y) \in \D_\text{test}} |\hat{R}(x)|.
\]
To avoid large regions disproportionately affecting the result, we also report the median of the region sizes
\[
\text{Median Size} = \text{Quantile}(\{|\hat{R}(x)|\}_{(x, y) \in \D_\text{test}}; 0.5)
\]

Computing the size of the region is challenging in high dimensions. Hence, we propose an unbiased estimator of the region size using importance sampling:
\begin{equation}
	|\hat{R}(x)|
	= \int_{\mc{Y}} \indicator(y \in \hat{R}(x)) dy 
	= \EE{\hat{Y} \sim \hat{f}(x)}{\frac{\indicator(\hat{Y} \in \hat{R}(x))}{\hat{f}(\hat{Y} \mid x)}} 
	\approx \frac{1}{K} \sum_{k=1}^K \frac{\indicator(\hat{Y}^{(k)} \in \hat{R}(x))}{\hat{f}(\hat{Y}^{(k)} \mid x)},
\end{equation}
where \( \hat{Y}^{(k)} \sim \hat{f}(\cdot \mid x) \), \( k \in [K] \).
This estimator is compatible with all base predictors in \cref{sec:study_base_predictors} since it is both possible to sample from their predictive distribution and evaluate the probability density function.
In \cref{sec:estimator_region_size}, we discuss the efficiency of this estimator.

\paragraph{Conditional coverage.} To ensure a robust evaluation of conditional coverage, we consider three different conditional coverage metrics, detailed in \cref{sec:conditional_coverage_metrics}. The Worst Slab Coverage \citep[WSC,][]{Cauchois2021-mj} groups inputs into "slabs" and evaluates the worst obtained coverage. The coverage error conditional to \( X \) (CEC-X) partitions the input space \( \mc{X} \) and evaluates coverage on each subset. The coverage error conditional to \( V = \hat{f}(\hat{Y} \mid X) \), where \( \hat{Y} \sim \hat{f}(\cdot \mid X) \), \citep[CEC-V,][]{Izbicki2022-ru,Dheur2024-lm}, creates a partition based on the distribution of \( V \), which is more robust to high-dimensional inputs.

\subsection{Multi-Model, Multi-Dataset Comparison}
\label{sec:CD_diagrams}

In order to determine whether there are significant differences in model performance, we first apply the Friedman test \citep{Friedman1940-vi}. Following the recommendations of \citet{Benavoli2016-mh}, we then conduct a pairwise post-hoc analysis using the Wilcoxon signed-rank test \citep{Wilcoxon1945-cp}, coupled with Holm's alpha correction \citep{Holm1979-rk} to adjust for multiple comparisons.

The results are visualized using critical difference (CD) diagrams \citep{Demsar2006-ed}. In these diagrams, models are ranked, with a lower rank (positioned further to the right) indicating better performance. A thick horizontal line connects models whose performances are not statistically different at the 0.05 significance level.

For MC and WSC, the CD diagrams report \( |\text{MC} - (1 - \alpha)| \) and \( |\text{WSC} - (1 - \alpha)| \), both of which should be minimized.

\subsection{Metrics of Conditional Coverage}
\label{sec:conditional_coverage_metrics}

\paragraph{Worst Slab Coverage.} 

Introduced in \cite{Cauchois2021-mj}, the \textit{Worst Slab Coverage} (WSC) metric quantifies the minimal coverage over all possible slabs in \( \mathbb{R}^d \), where each slab contains at least a fraction \( \delta \) of the total mass, with \( 0 < \delta \leq 1 \). For a given vector \( v \in \mathbb{R}^d \), the WSC associated with \( v \), denoted as \( \text{WSC}_v \), is defined by:

\begin{align}
	\text{WSC}_v &= \inf_{a < b} \left\{
	\hat{\mathbb{P}}_{\mathcal{D}_{\text{test}}}\left(y_i \in \hat{R}(x_i) \mid a \leq v^\intercal x_i \leq b\right) \ \text{s.t.}\ \hat{\mathbb{P}}_{\mathcal{D}_{\text{test}}}(a \leq v^\intercal x_i \leq b) \geq \delta \right\},
	\label{eq:wsc} 
\end{align}

where \( a, b \in \mathbb{R} \). This metric assesses conditional coverage by focusing on inputs \( x_i \) that lie within a slab defined by \( v \), using the inner product \( v^\intercal x_i \) to measure similarity.

To estimate the worst-case slab, we follow the method from \cite{Cauchois2021-mj}, uniformly sampling 1,000 vectors \( v_j \) from the unit sphere \( \mathbb{S}^{d-1} \) and calculating:

\begin{equation}
	\text{WSC} = \min_{v_j \in \mathbb{S}^{d-1}} \text{WSC}_{v_j}.
\end{equation}

To mitigate overfitting on the test dataset, we partition the test set into two subsets, \( \mathcal{D}_{\text{test}} = \mathcal{D}_{\text{test}}^{(1)} \cup \mathcal{D}_{\text{test}}^{(2)} \), as in \cite{Romano2020-ed, Sesia2021-tn}. We identify the worst combination of \( a \), \( b \), and \( v \) on \( \mathcal{D}_{\text{test}}^{(1)} \) by minimizing the WSC metric with \( \delta = 0.2 \), and then evaluate conditional coverage on the separate subset \( \mathcal{D}_{\text{test}}^{(2)} \).

\paragraph{CEC-X.}

CEC-X approximates conditional coverage by partitioning the input space \( X \in \mc{X} \subseteq \R^p \). We apply the \( k \)-means++ clustering algorithm on the inputs \( X^{(i)} \) in the validation dataset \( \mc{D}_\text{val} \), creating a partition \( \mc{A} = A_1 \cup \dots \cup A_J \) over \( \mc{X} \). The \textit{Coverage Error Conditional to \( X \)} is defined as:

\begin{align}
	\text{CEC-X} = \frac{1}{|\D_{\text{test}}|} \sum_{i=1}^{|\D_{\text{test}}|} \sum_{j=1}^J \left(\hat{\mathbb{P}}_{\mathcal{D}_{\text{test}}}\left( y^{(i)} \in \hat{R}(x^{(i)}) ~\middle|~ x^{(i)} \in A_j\right) - (1 - \alpha)\right)^2.
	\label{eq:partition}
\end{align}

\paragraph{CEC-V.}

CEC-V is similar to CEC-X, but the conditioning is on the distribution of \( \log V = \log \hat{f}(\hat{Y} \mid X) \), where \( \hat{Y} \sim \hat{f}(\cdot \mid X) \). Unlike CEC-X, CEC-V is more robust to high-dimensional inputs. This approach originates from the CD-split$^+$ method \citep{Izbicki2022-ru} and has been adapted to multivariate outputs in \cite{Dheur2024-lm}.

In practice, given an input \( x \), a new feature \( v_x \) is created. First, samples \( v_i \) from \( V \mid X = x \) are generated by sampling \( y_1, \dots, y_m \sim \hat{f}(\cdot \mid x) \) and evaluating \( v_i = \hat{f}(y_i \mid x) \). The resulting vector \( v_x = (v_{(1)}, \dots, v_{(m)}) \) consists of the order statistics \( v_{(i)} \) from \( v_1, \dots, v_m \).

The \( k \)-means++ clustering algorithm is applied on the vectors \( \log v_{X^{(i)}} \) in the validation dataset \( \mc{D}_\text{val} \), and a partition \( \mc{A}_V = A_1 \cup \dots \cup A_J \) over \( \R^m \) is obtained. The \textit{Coverage Error Conditional to the distribution of \( V \)} is then computed according to \cref{eq:partition}, using the partition \( \mc{A}_V \).

\cite{Dheur2024-lm} notes that the distance function corresponding to this partitioning approach is the 2-Wasserstein distance with respect to the distribution of \( V \).

\subsection{Estimator for the region size}
\label{sec:estimator_region_size}

In this section, we discuss the efficiency of the region size estimator introduced in \cref{sec:study_metrics}. This estimator is based on a density estimator \( \hat{f}(\cdot \mid x) \) and a sample \( \hat{Y}^{(k)}, k \in [K] \), drawn i.i.d. from the conditional distribution \( Y \mid X = x \) for any \( x \in \mc{X} \). Specifically, the estimator is given by:
\[
	\hat{V}(x) = \frac{1}{K} \sum_{k=1}^K \frac{\indicator(\hat{Y}^{(k)} \in \hat{R}(x))}{\hat{f}(\hat{Y}^{(k)} \mid x)}.
\]

While the estimator is unbiased, i.e., $\mb{E}[\hat{V}(x)] = |\hat{R}(x)|$, we want to study its variance.  Let \( I = \indicator(\hat{Y} \in \hat{R}(x)) \) represent the indicator that a sample \( \hat{Y} \) lies within the prediction region \( \hat{R}(x) \), and let \( \rho = \mathbb{P}(\hat{Y} \in \hat{R}(x)) \) denote the coverage probability obtained from the samples based on our density estimator. Using the law of total variance, we obtain the following expression for the variance of $\hat{V}(x)$:
\begin{align*}
	\V{\hat{V}(x)}
	&= \frac{1}{K} \V{\frac{I}{\hat{f}(\hat{Y} \mid x)}} \\
	&= \frac{1}{K} \left( \E{\CVV{}{\frac{I}{\hat{f}(\hat{Y} \mid x)}}{I}} + \V{\CEE{}{\frac{I}{\hat{f}(\hat{Y} \mid x)}}{I}} \right) \\
	&= \frac{1}{K} \left( \rho \V{\frac{1}{\hat{f}(\hat{Y} \mid x)}} + \rho (1 - \rho) \E{\frac{1}{\hat{f}(\hat{Y} \mid x)}}^2 \right).
\end{align*}

Assuming that the density estimate corresponds to the true density, i.e. $\hat{f}(\cdot \mid x) = f_{Y|x}(\cdot \mid x)$ and that $\hat{R}$ achieves conditional coverage, then $\rho = 1 - \alpha$, and we obtain:
\begin{align*}
	\V{\hat{V}(x)}
	&= \frac{1}{K} \left( (1 - \alpha) \V{\frac{1}{f_{Y|x}(Y \mid x)}} + \alpha (1 - \alpha) \E{\frac{1}{f_{Y|x}(Y \mid x)}}^2 \right).
\end{align*}

This indicates that the variance of our estimator only depends on the variance and expectation of the random variable $\frac{1}{f(Y | x)}$. In this case, the variance does not directly depend on the output dimension $d$.

\begin{figure}[t]
    \centering
    \includegraphics[width=\linewidth]{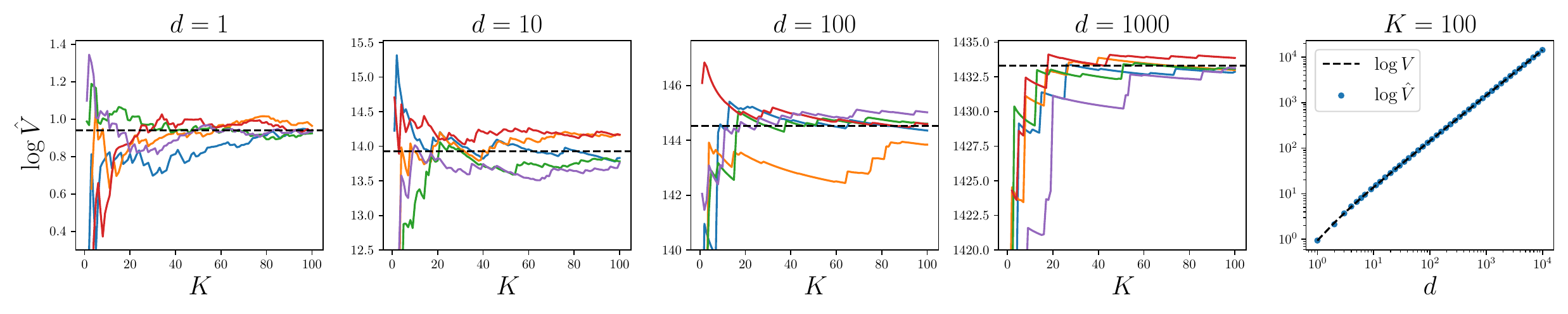}
    \caption{
    Panels 1 to 4: Trajectories of the log volume estimator with increasing $K$ compared to the true log volume (dashed line) for different output dimensions $d$.
    Panel 5: Log volume estimator with $K = 100$ compared to the true log volume (dashed line).}
    \label{fig:volume_estimation}
\end{figure}

\cref{fig:volume_estimation} shows how the estimator behaves in a scenario with a specific density estimator and prediction region with varying output dimension $d$ and an 80\% coverage level. Since there is no dependence on \( X \), we abbreviate the notation as follows: \( \hat{R} = \hat{R}(x) \), \( \hat{f}(y) = \hat{f}(y \mid x) \), and \( \hat{V} = \hat{V}(x) \) for any \( x \in \mc{X} \). The density estimator is a standard normal distribution $\hat{f}(y) = \mc{N}(y; 0, I_d)$ and the prediction region is a ball $\hat{R} = \left\{ y \in \mc{Y}: \norm{y} \leq F^{-1}_{\chi^2_d}(1 - \alpha) \right\}$, where $\chi^2_d$ is the chi-squared distribution with $d$ degrees of freedom and $F^{-1}_{\chi^2_d}$ is its quantile function.
It can be shown that $\mb{P}_{\hat{Y} \sim \hat{f}(\cdot)}(\hat{Y} \in \hat{R}) = 1 - \alpha$.
In this case, the volume $V$ of $\hat{R}$ can be computed exactly.

Each of the first four panels in \cref{fig:volume_estimation} shows five trajectories for \( \log \hat{V} \) as \( K \) increases from 1 to 100. The true volume, \( \log V \), of the prediction region is indicated by a dashed line. We observe that the estimator converges within a reasonable range of the true volume for varying output dimensions \( d \). The last panel illustrates the value of \( \log \hat{V} \) as a function of \( d \), with \( \log V \) again marked by a dashed line. From this, we observe that the estimator remains close to the true volume across different output dimensions \( d \).

\section{Additional results}
\label{sec:additional_results}

This section presents additional results for MQF$^2$ (\cref{sec:results_MQF2}), Distributional Random Forests (\cref{sec:results_distribution_random_forests}) and the Multivariate Gaussian Mixture Model (\cref{sec:results_gaussian_mixture}).
The experimental setup is described in \cref{sec:experimental_setup}.

\subsection{MQF$^2$}
\label{sec:results_MQF2}

\cref{pointplot/DRF-KDE/all/multiple} presents the marginal coverage and median region size across datasets of increasing size for MQF$^2$. In Panel 1, all methods except CopulaCPTS attain precise marginal coverage. This is expected since these methods follow the SCP algorithm (\cref{sec:SCP}) and their marginal coverage conditional on the calibration dataset and samples from the calibration dataset follows a Beta distribution whose parameters only depend on the size of the calibration dataset (\cref{sec:proof_marginal_coverage}). While CopulaCPTS attains marginal coverage, the larger variance in its marginal coverage arises because it does not follow the SCP algorithm.

In Panel 2, the median region size is normalized between 0 and 1 for each dataset in order to facilitate comparison. We observe that \tt{C-HDR} often obtains the smallest median region size. The performance of the other methods can vary highly across datasets for the median region size and is better visualized in a CD diagram (see \cref{fig:cd_diagrams/MQF2}).

\begin{figure}[H]
	\centering
	\includegraphics[width=\linewidth]{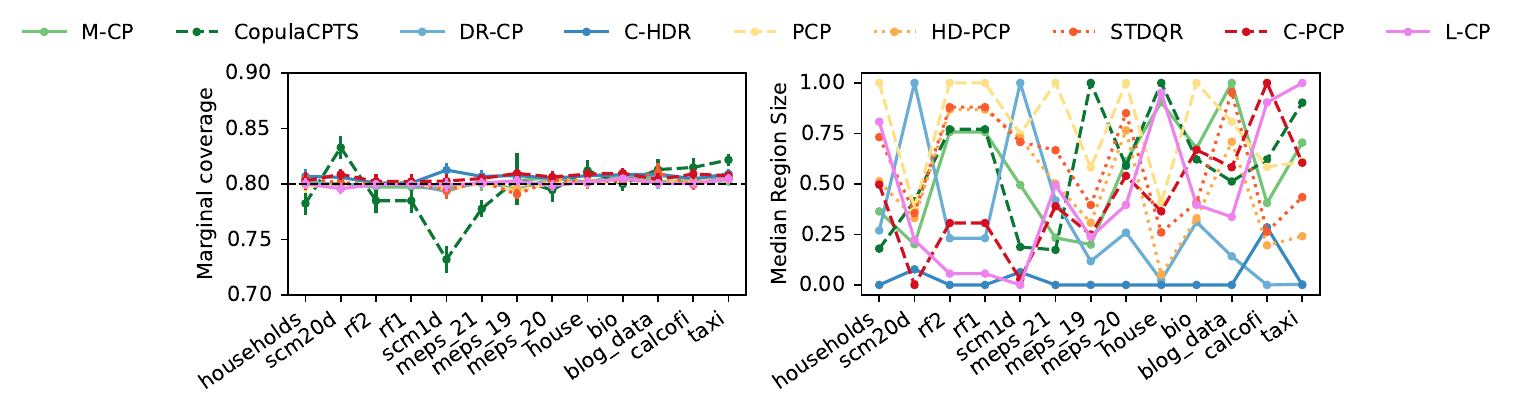}
	\caption{Marginal coverage and median region size with the base predictor MQF$^2$ across datasets sorted by size.}
	\label{pointplot/DRF-KDE/all/multiple}
\end{figure}

\cref{fig:cd_diagrams/MQF2} presents critical difference diagrams for three conditional coverage metrics (CEC-$X$, CEC-$Z$ and WSC), the mean region size and median region size, the total calibration and test time. Results are consistent with the results from the main text.

\begin{figure}[H]
	\centering
	\includegraphics[width=0.32\linewidth]{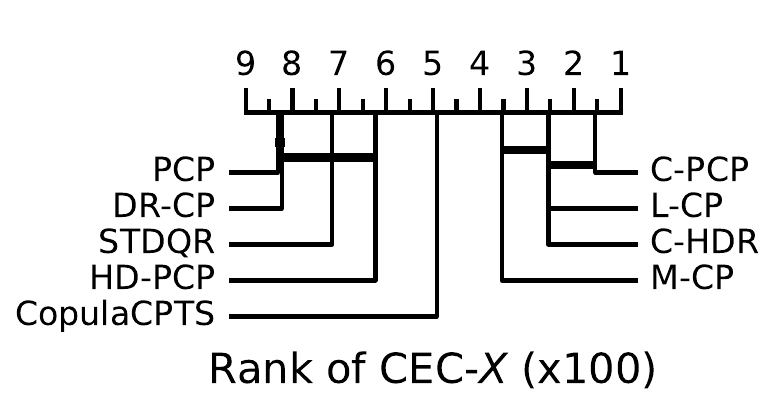}
	\includegraphics[width=0.32\linewidth]{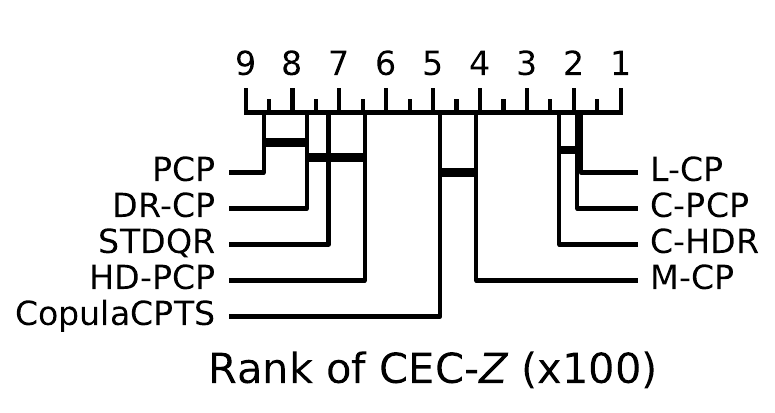}
	\includegraphics[width=0.32\linewidth]{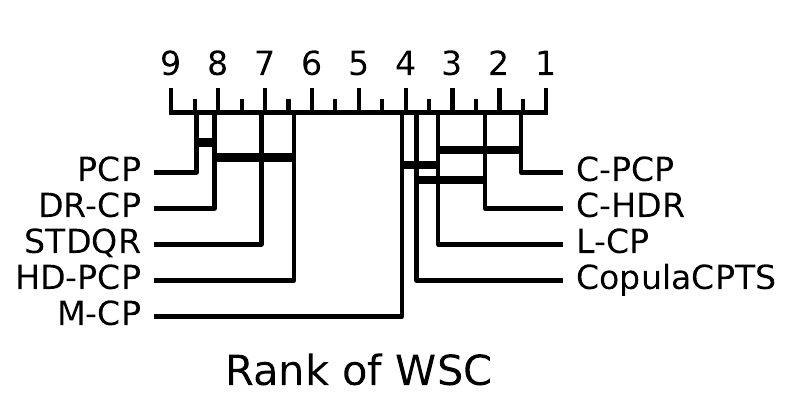}
	\includegraphics[width=0.32\linewidth]{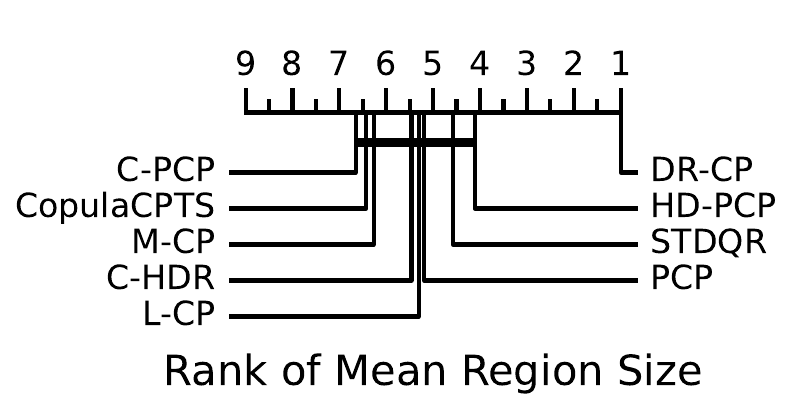}
	\includegraphics[width=0.32\linewidth]{images/cd_diagrams/MQF2/median_region_size.pdf}
	\includegraphics[width=0.32\linewidth]{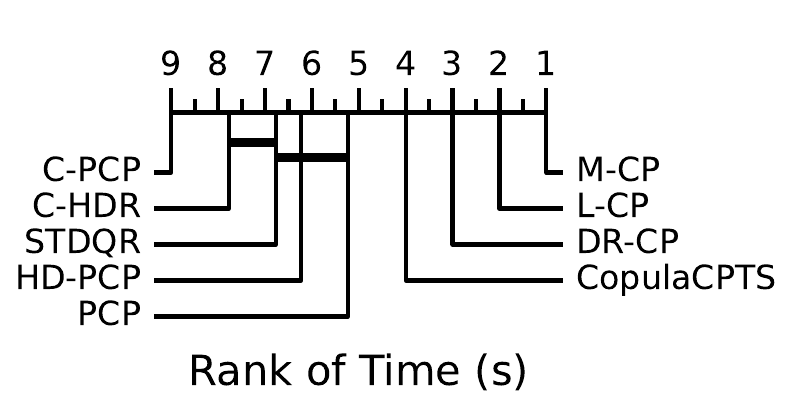}
	\vspace{-0.1cm}
	\caption{CD diagrams with the base predictor MQF$^2$ with 10 runs per dataset and method.}
	\label{fig:cd_diagrams/MQF2}
\end{figure}

\subsection{Distributional Random Forests}
\label{sec:results_distribution_random_forests}

\cref{fig:pointplot/DRF-KDE/all/multiple} presents additional results for the base predictor Distributional Random Forests. Since this model does not rely on a latent space, results for \tt{STDQR} and \tt{L-CP} are not included.  

In terms of conditional coverage, the results align with those of MQF$^2$, with \tt{C-PCP} and \tt{C-HDR} outperforming \tt{DR-CP}, \tt{PCP}, and \tt{HD-PCP}. Notably, \tt{M-CP} achieves competitive conditional coverage, suggesting it pairs well with \tt{DRF-KDE}.  
Similar to MQF$^2$, all methods except for CopulaCPTS attain precise marginal coverage.

The median region size is normalized to a [0,1] range for each dataset to facilitate comparison. We observe that \tt{C-HDR} generally achieves the smallest median region size, followed by \tt{DR-CP}. The test time is the lowest for \texttt{M-CP} and \texttt{CopulaCPTS} while \texttt{C-PCP} and \texttt{C-HDR} obtain the highest computation times.

\begin{figure}[H]
	\centering
	\includegraphics[width=\linewidth]{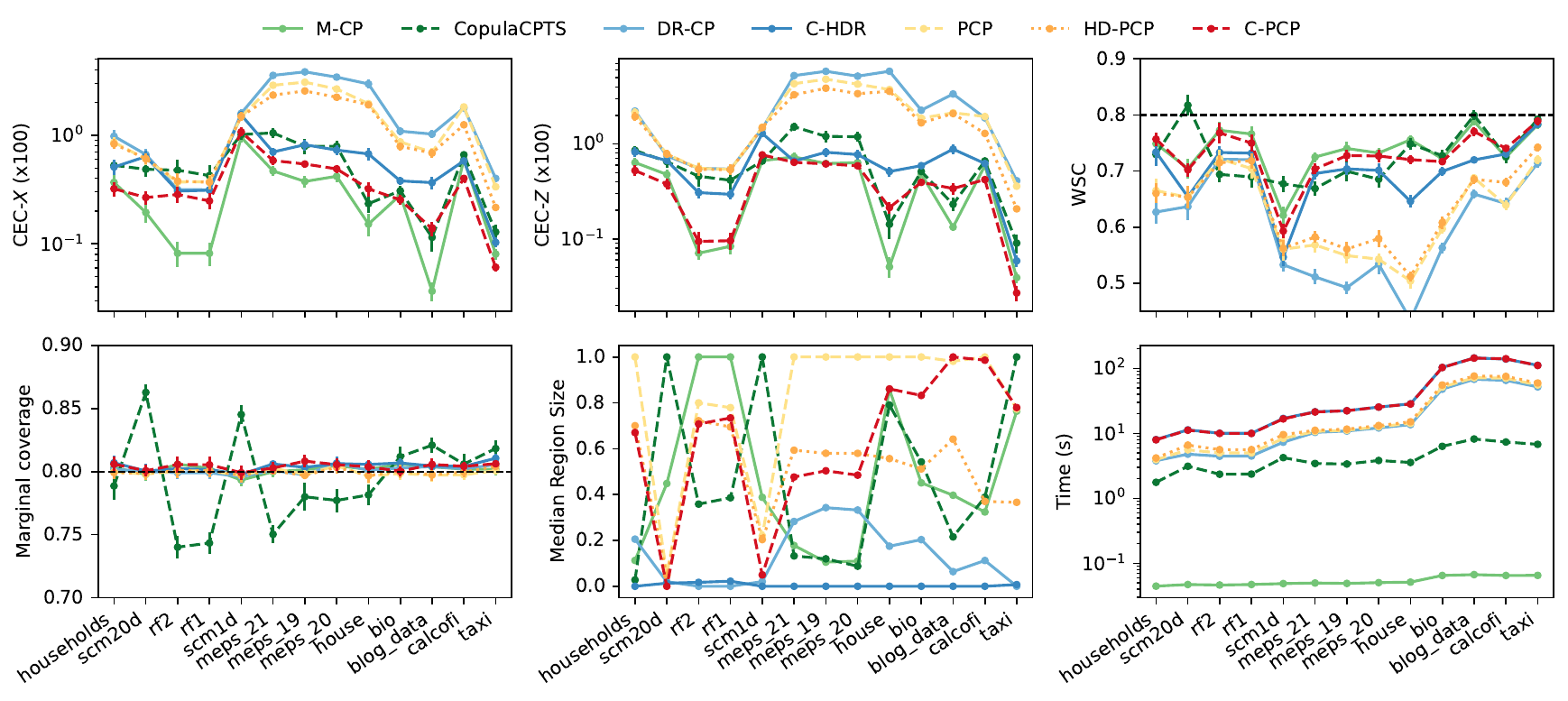}
	\caption{Conditional coverage metrics with the base predictor Distributional Random Forests across datasets sorted by size.}
	\label{fig:pointplot/DRF-KDE/all/multiple}
\end{figure}

\cref{fig:cd_diagrams/DRF-KDE} shows CD diagrams obtained with Distributional Random Forests as the base predictor. The results are consistent with \cref{fig:pointplot/DRF-KDE/all/multiple}.

\begin{figure}[H]
	\centering
	\includegraphics[width=0.32\linewidth]{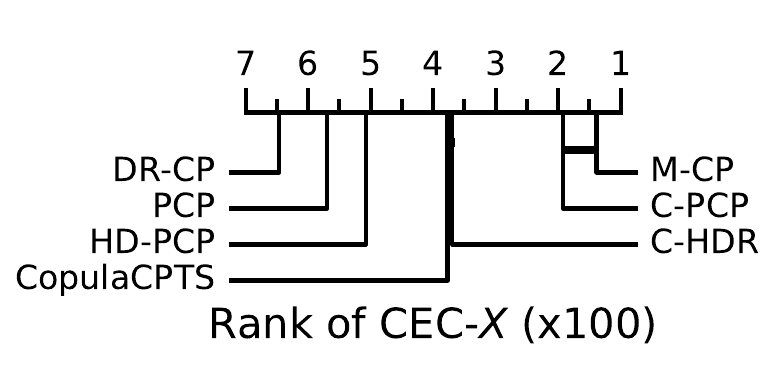}
	\includegraphics[width=0.32\linewidth]{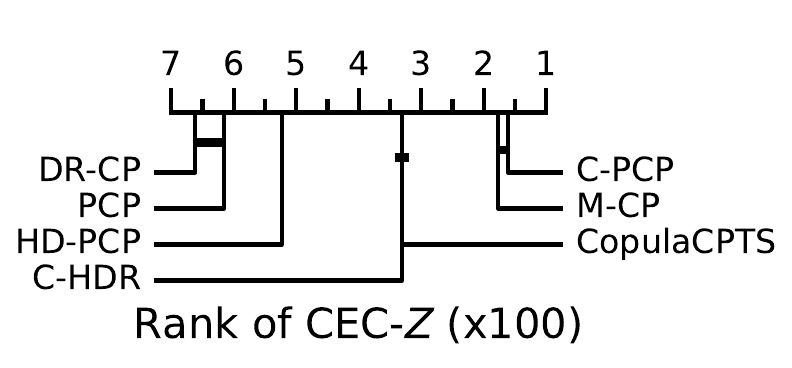}
	\includegraphics[width=0.32\linewidth]{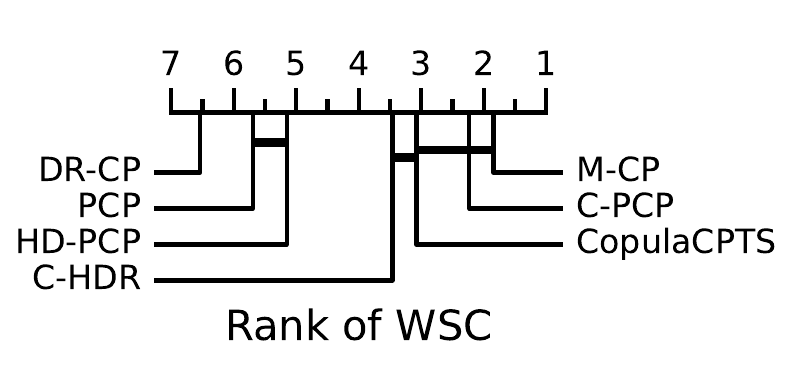}
	\includegraphics[width=0.32\linewidth]{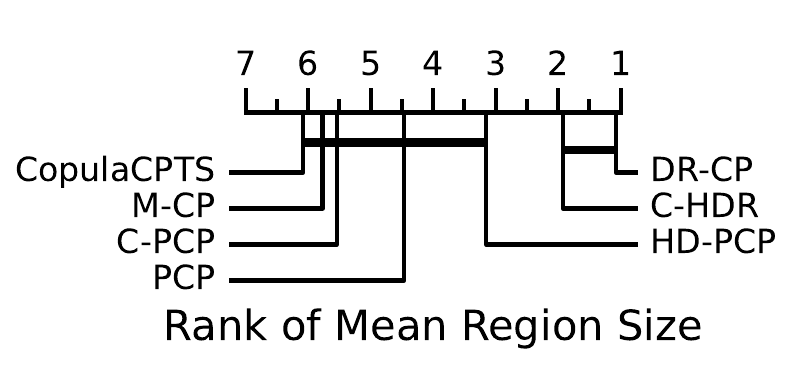}
	\includegraphics[width=0.32\linewidth]{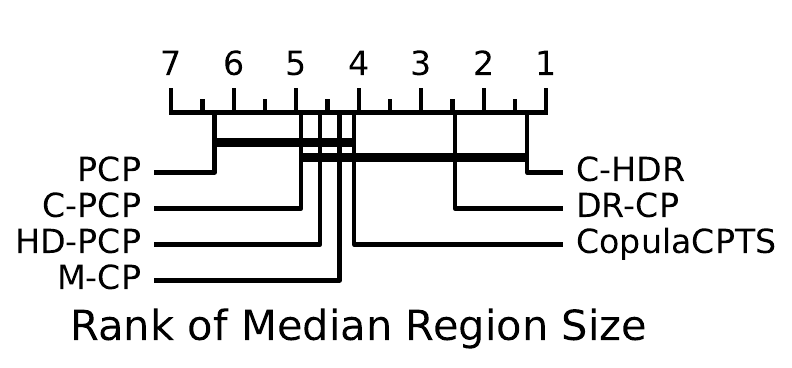}
	\includegraphics[width=0.32\linewidth]{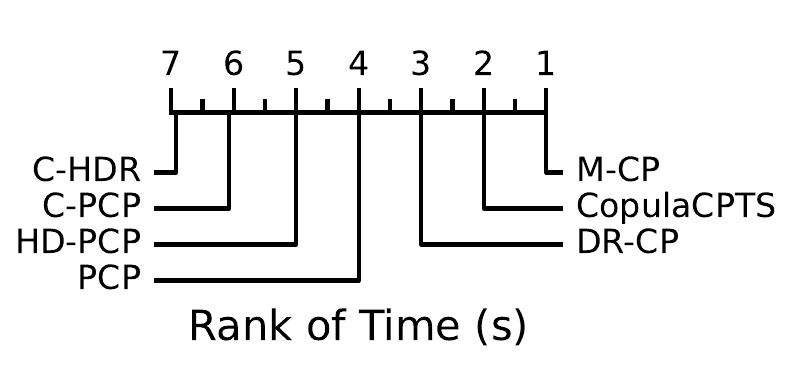}
	\vspace{-0.1cm}
	\caption{CD diagrams with the base predictor Distributional Random Forests based on 10 runs per dataset and method.}
	\label{fig:cd_diagrams/DRF-KDE}
\end{figure}

\subsection{Multivariate Gaussian Mixture Model}
\label{sec:results_gaussian_mixture}

\cref{fig:pointplot/Mixture-10/all/multiple} presents additional results for the base predictor Multivariate Gaussian Mixture Model. Similarly to Distributional Random Forests, this model does not rely on a latent space and thus results for \tt{STDQR} and \tt{L-CP} are not included.  

The conditional coverage also aligns with MQF$^2$, \tt{C-PCP} and \tt{C-HDR} outperforming \tt{DR-CP}, \tt{PCP}, and \tt{HD-PCP}. \tt{M-CP} and \tt{CopulaCPTS} achieving intermediate conditional coverage.
As expected, marginal coverage is precise for all methods except CopulaCPTS.

\tt{C-HDR} often obtains the smallest median region size, while \tt{DR-CP} consistently attains the best mean region size.

\begin{figure}[H]
	\centering
	\includegraphics[width=\linewidth]{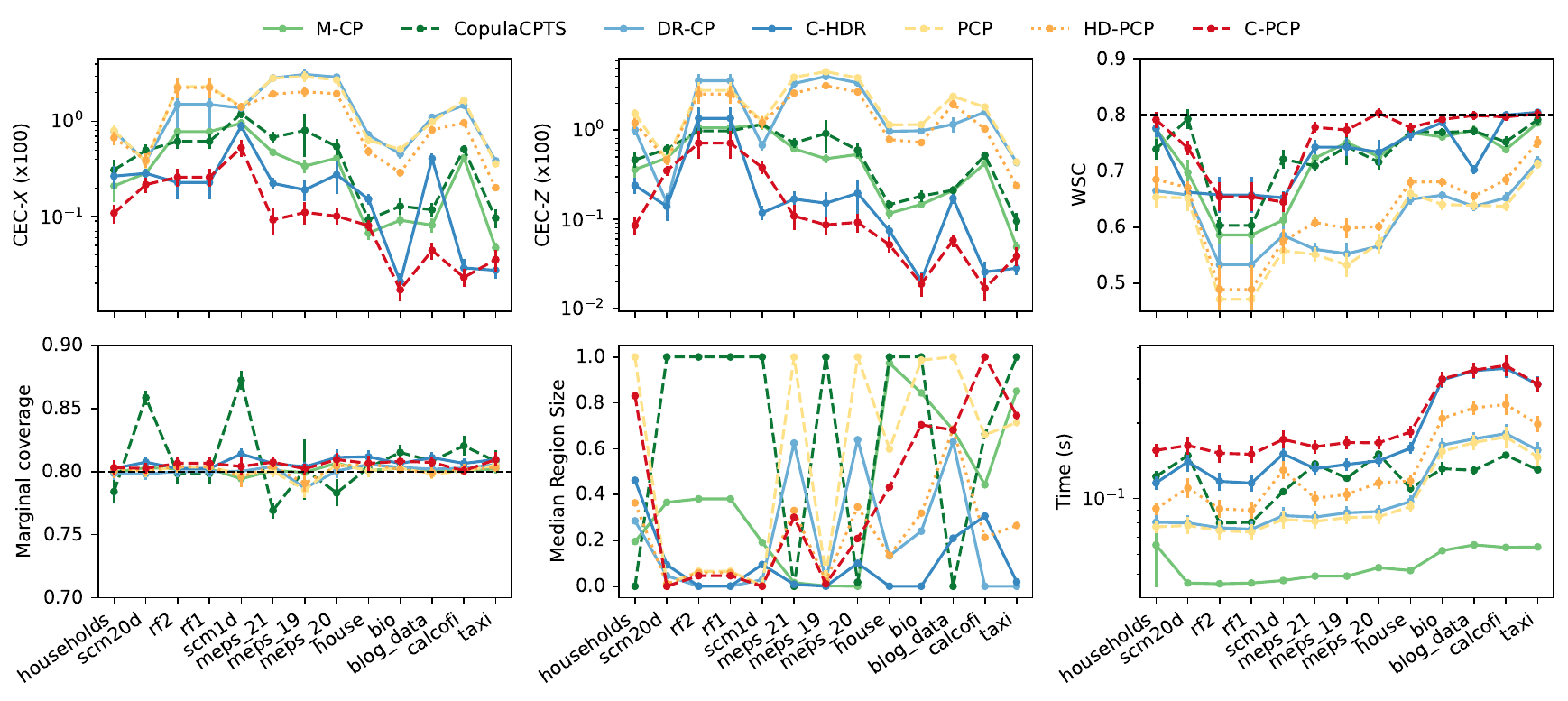}
	\caption{}
	\label{fig:pointplot/Mixture-10/all/multiple}
\end{figure}

CD diagrams in \cref{fig:cd_diagrams/Mixture-10} are consistent with \cref{fig:pointplot/Mixture-10/all/multiple}.

\begin{figure}[H]
	\centering
	\includegraphics[width=0.32\linewidth]{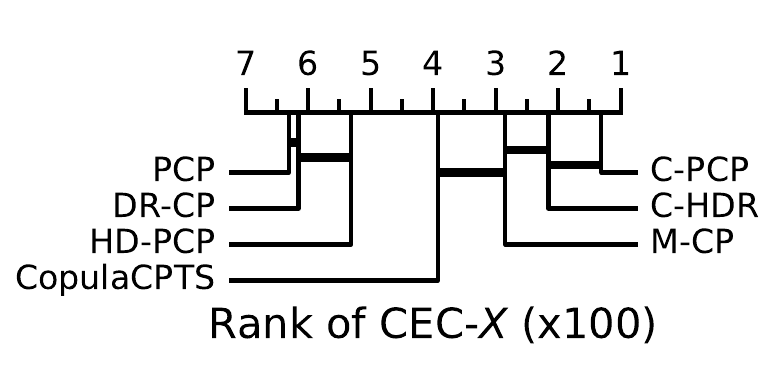}
	\includegraphics[width=0.32\linewidth]{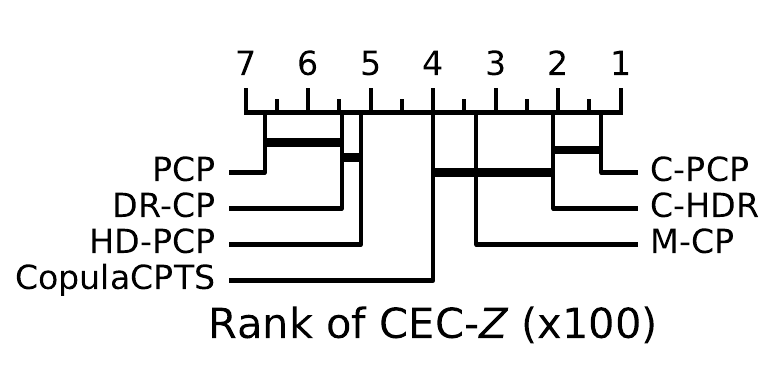}
	\includegraphics[width=0.32\linewidth]{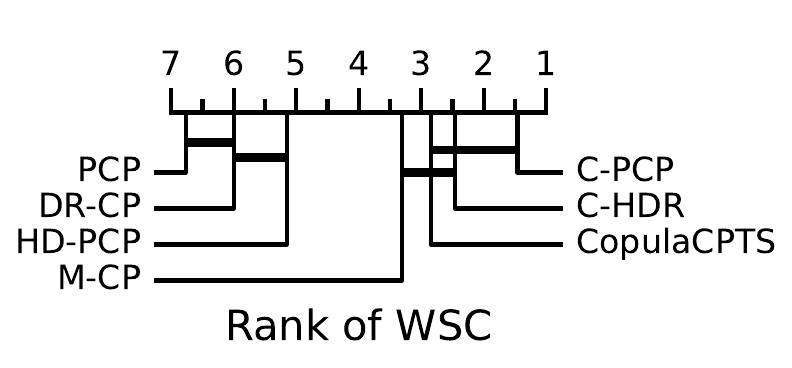}
	\includegraphics[width=0.32\linewidth]{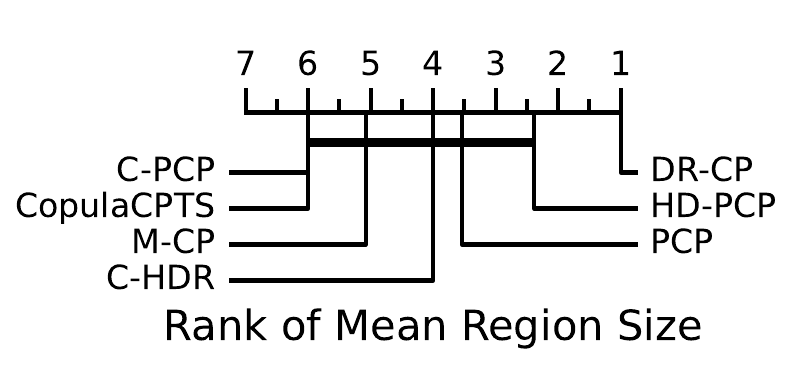}
	\includegraphics[width=0.32\linewidth]{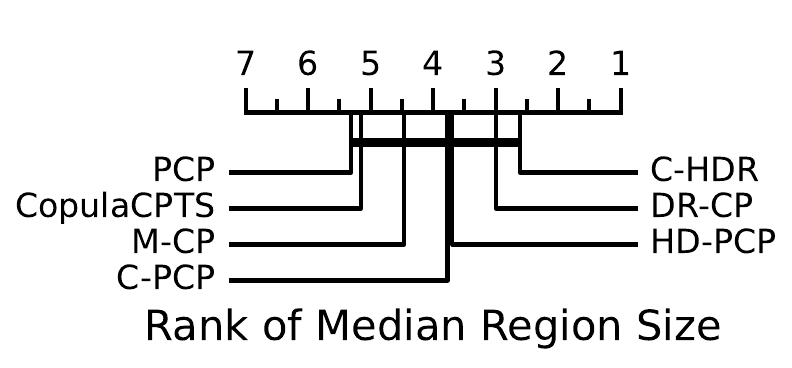}
	\includegraphics[width=0.32\linewidth]{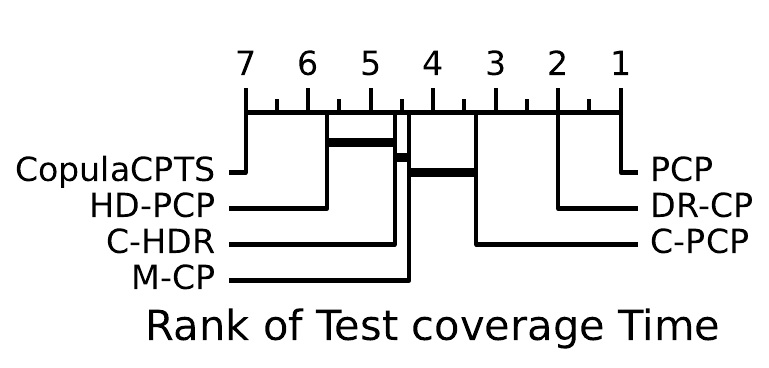}
	\vspace{-0.1cm}
	\caption{CD diagrams based on Multivariate Gaussian Mixture Model parameterized by a hypernetwork with $M = 10$ and 10 runs per dataset and method.}
	\label{fig:cd_diagrams/Mixture-10}
\end{figure}

\subsection{Impact of the number of samples $K$}

\cref{fig:n_samples_reg_line/C-PCP,fig:n_samples_reg_line/C-HDR} illustrate how conditional coverage, marginal coverage and region size change as a function of \(K\) on all datasets. For a better comparison among datasets, the metrics CEC-$X$, CEC-$Z$, the median region size and the mean region size are normalized between 0 and 1, with results averaged over 10 runs.
Furthermore, the red line indicates a linear regression fit, allowing to see the trend.

Conditional coverage metrics decreasing with $K$ indicate that conditional coverage tends to improve with an increasing number of samples. This is expected since an increasing number of Monte-Carlo samples allows a better estimation of the CDF of the scores in \cref{eq:empirical_CDF_score}.
Marginal validity is obtained with any $K$. However, small sizes of $K$ will lead to more duplicated conformity scores and thus a possibility of overcoverage.
Median region sizes and mean region sizes also tend to decrease with $K$ as the CDF approximation improves.

    \begin{figure}[H]
        \centering
        \includegraphics[width=0.94\textwidth]{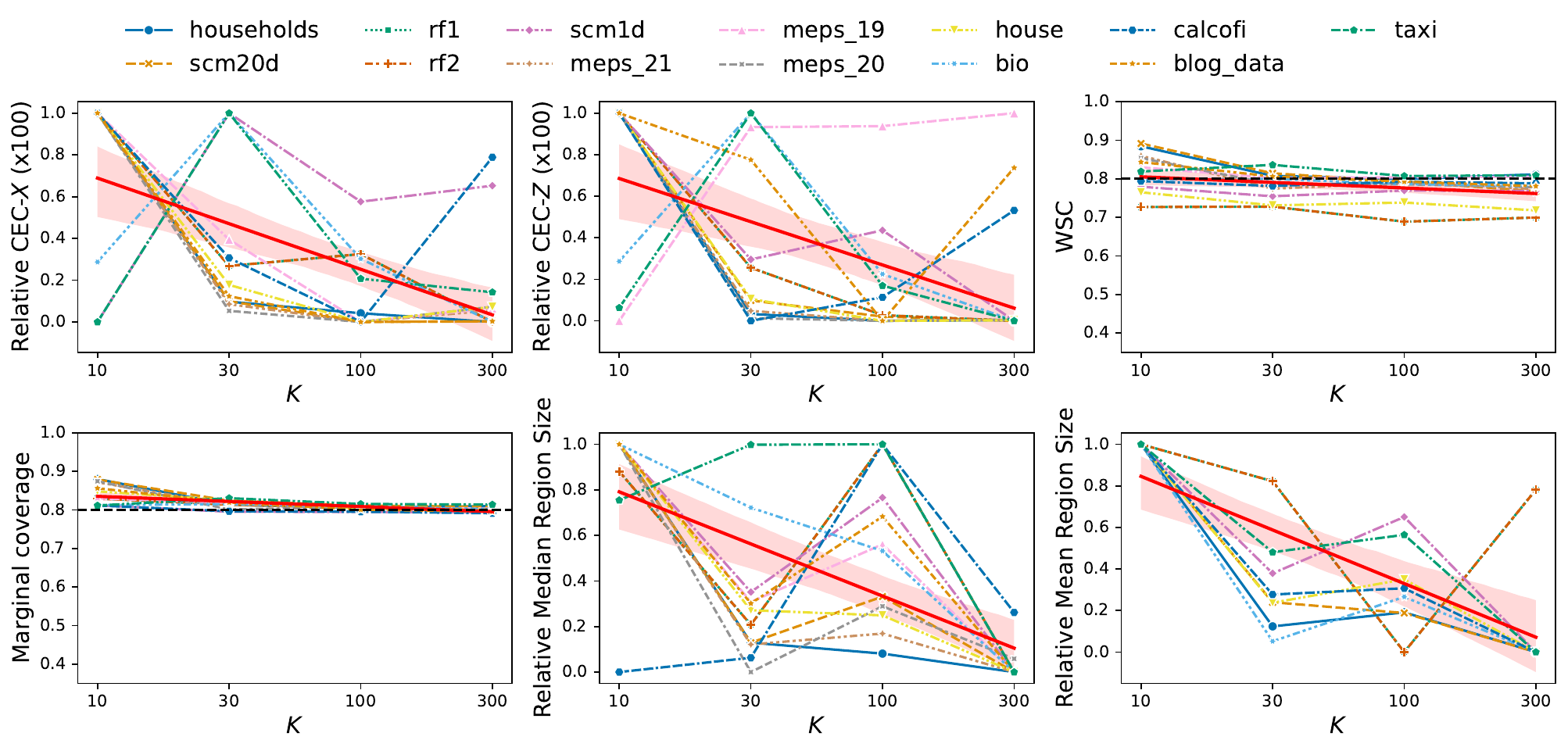}
		\caption{Evolution of conditional coverage, marginal coverage and region sizes of \texttt{C-PCP} as a function of the number of samples \(K\) using the base predictor MQF\(^2\). The metrics CEC-\(X\), and CEC-\(Z\) should be minimized, while the marginal coverage and WSC should approach \(1 - \alpha\) (indicated by the dashed black line). The red line, obtained by linear regression, indicates the general trend.}
        \label{fig:n_samples_reg_line/C-PCP}
    \end{figure}
    
    \begin{figure}[H]
        \centering
        \includegraphics[width=0.94\textwidth]{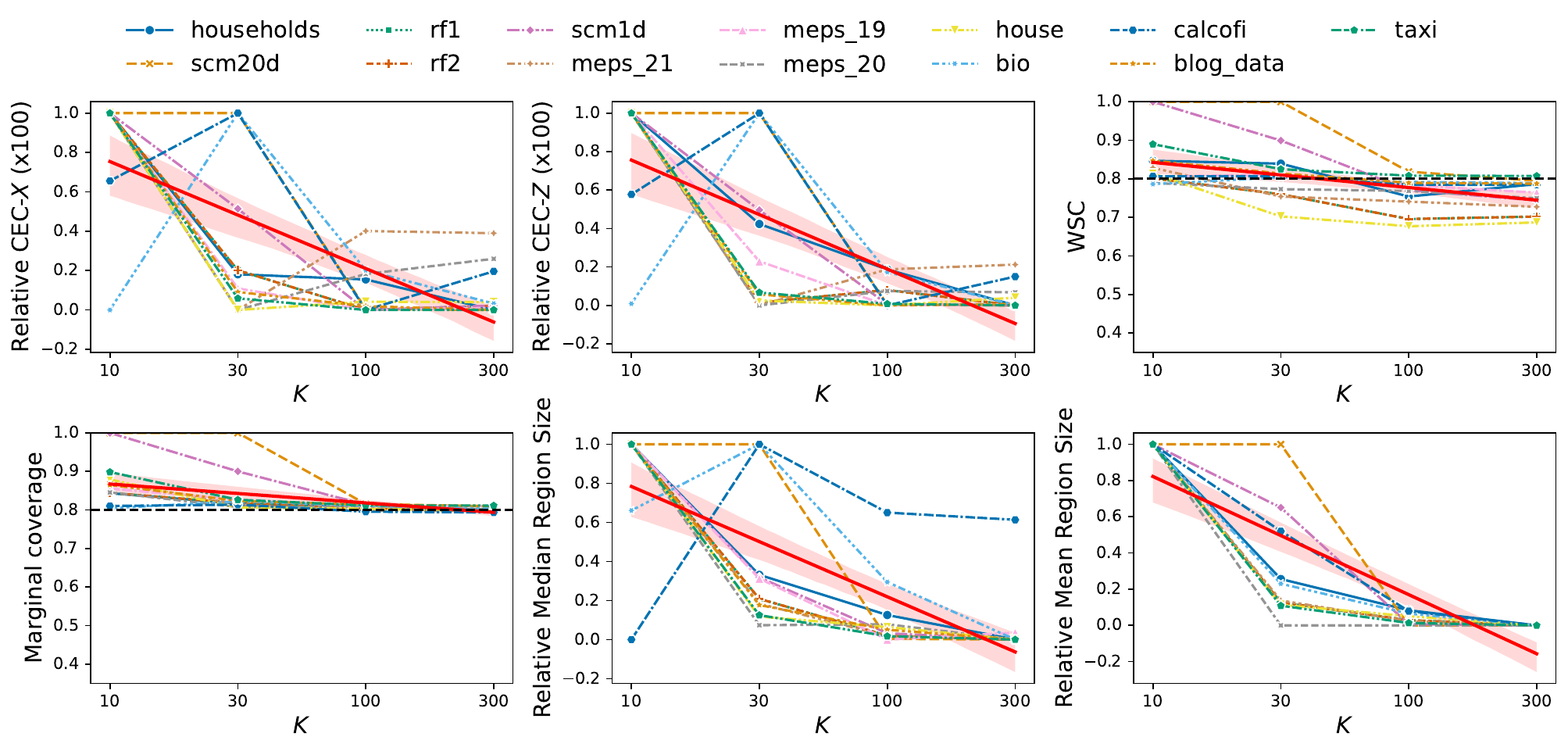}
        \caption{Reproduction of \cref{fig:n_samples_reg_line/C-PCP} for \texttt{C-HDR}.}
        \label{fig:n_samples_reg_line/C-HDR}
    \end{figure}

\section{Comparison between C-PCP and CP$^2$-PCP}
\label{sec:comparison_C_PCP_CP2_PCP}

In this section, we compare our proposed method, \texttt{C-PCP}, introduced in the main paper, with the CP\(^2\)-PCP method recently proposed by \cite{Plassier2024-ex}.
More generally, we also compare the methods from the CP$^2$ framework of \cite{Plassier2024-ex} with our class of CDF-based conformity scores (\cref{sec:CDF_based} in the main text).
In \cref{sec:motivation_CP_2_PCP}, we present the more general CP\(^2\) framework using our own notation for clarity, with CP\(^2\)-PCP as a particular case of CP\(^2\). In \cref{sec:CP2_properties}, we discuss the asymptotic properties of CP$^2$ and show the asymptotic equivalence with CDF-based methods.
In \cref{sec:CP2_relationship}, we discuss the relationship between CDF-based and CP$^2$-based methods.

\subsection{The CP$^2$ method}
\label{sec:motivation_CP_2_PCP}

Let us define a family of non-decreasing nested regions $\{\mc{R}(x; t)\}_{t \in \R}$ such that $\bigcap_{t \in \R} \mc{R}(x; t) = \emptyset$, $\bigcup_{t \in \R} \mc{R}(x; t) = \mc{Y}$, and $\bigcap_{t' > t} \mc{R}(x; t') = \mc{R}(x; t)$. Without loss of generality, these nested regions are expressed in terms of a conformity score $s_W(x, y) \in \R$ as follows:
\begin{equation}
	\mc{R}(x; t) = \{ y \in \mc{Y}: s_W(x, y) \leq t \},
	\label{eq:nested_sets}
\end{equation}
where $s_W(x, y)$ is continuous in $y$.

As the next step, we introduce a family of transformation functions $f_\tau(\lambda): \R \to \R$ parameterized by $\tau \in \R$. It is assumed that for any $\tau$, the function $\lambda \mapsto f_\tau(\lambda)$ is increasing and bijective.
Let $\varphi \in \R$ be a constant (e.g. $\varphi = 1$). We also define the function $g_\varphi(\tau) = f_\tau(\varphi)$ and assume that $\tau \mapsto g_\varphi(\tau)$ is increasing and bijective.

As a first step towards defining CP$^2$, we construct a prediction region assuming knowledge of the conditional distribution $F_{Y|X}$. For a given input $x \in \mc{X}$, the prediction region is defined as:
\begin{equation}
	\bar{R}_\text{CP$^2$}(x) = \mc{R}(x, f_{\tau_x}(\varphi)), \label{eq:rcp2}
\end{equation}
where
\begin{equation}
	\tau_x = \inf\left\{\tau: \mb{P}\left(Y \in \mc{R}(X, f_\tau(\varphi)) \mid X = x\right) \geq 1 - \alpha\right\} \label{eq:CP2_conditional_coverage}
\end{equation}
implies that $\bar{R}_\text{CP$^2$}(x)$ guarantees conditional coverage given $x$. Furthermore, using \eqref{eq:nested_sets} and defining the random variable $W = s_W(X, Y)$, we can equivalently express \eqref{eq:CP2_conditional_coverage} as 
\begin{align}
	\tau_x &= \inf\left\{\tau: \mb{P}\left(s_W(X, Y) \leq f_\tau(\varphi) \mid X=x\right) \geq 1 - \alpha\right\} \\
	&= \inf\left\{\tau: \mb{P}\left(g^{-1}_\varphi(s_W(X, Y)) \leq \tau \mid X=x\right) \geq 1 - \alpha\right\} \label{eq:quantile_CP_2_score_def} \\
 &= Q_{g^{-1}_\varphi(W)}(1 - \alpha \mid X = x) \\
	&= g^{-1}_\varphi(Q_W(1 - \alpha \mid X = x)), \label{eq:outer_quantile}
\end{align}
where we used that $g_\varphi$ is increasing and bijective, with $g^{-1}_\varphi(f_\tau(\varphi)) = \tau$. In other words, $\tau_x$ is the $1-\alpha$ quantile of $g^{-1}_\varphi(W)$.

However, in practice, \( \tau_x \) cannot be computed directly since the true conditional distribution \( F_{Y|x} \) is unknown. Instead, it can be estimated using a sample \( \hat{Y}^{(k)}, k \in [K] \), drawn from the estimated conditional distribution \( \hat{F}_{Y|x} \). If $\hat{Q}_W(1 - \alpha \mid X = x)$ is the \( 1 - \alpha \) quantile of the empirical distribution $\frac{1}{K} \sum_{k \in [K]} \delta_{s_W(x, \hat{Y}^{(k)})}$, we can compute
\begin{equation}
	\hat{\tau}_x = g^{-1}_\varphi(\hat{Q}_W(1 - \alpha \mid X = x)). \label{eq:tau_x_hat}
\end{equation}

It should be noted that this estimated prediction region loses the exact conditional and marginal coverage properties due to the reliance on the estimated conditional distribution. The following shows how conformal prediction can restore some coverage properties.

From \eqref{eq:nested_sets}, using \eqref{eq:rcp2}, we can write
\begin{align}
	\bar{R}_\text{CP$^2$}(x)
	&= \left\{ y \in \mc{Y}: s_W(x, y) \leq f_{\tau_x}(\varphi) \right\} \\
	&= \left\{ y \in \mc{Y}: f^{-1}_{\tau_x}(s_W(x, y)) \leq \varphi \right\}, \label{eq:invertibility_f_step}
\end{align}
where we used the invertibility of \( f_{\tau} \) for any \( \tau \in \R \).

Inspired by \eqref{eq:invertibility_f_step}, \cite{Plassier2024-ex} defined the following conformity score:
\begin{equation}
	s_\text{CP$^2$}(x, y) = f^{-1}_{\hat{\tau}_x}(s_W(x, y)), \label{eq:CP_2_score}
\end{equation}
for which the corresponding prediction region $\hat{R}_\text{CP$^2$}$ is given by
\begin{equation}
	\hat{R}_\text{CP$^2$}(x) = \Set{y \in \mc{Y}: s_\text{CP$^2$}(x, y) \leq \hat{q}},
\end{equation}
where we used \cref{eq:region} from the main text.

As an example, taking $f_\tau(\lambda) = \tau \lambda$ and $\varphi = 1$, the conformity score becomes:
\begin{align}
	&s_{\text{CP$^2$}}(x, y) = s_W(x, y) / \hat{\tau}_x
	 \label{eq:empirical_CP_2_score},
\end{align}
where $\hat{\tau}_x$ is defined in \eqref{eq:tau_x_hat}. Finally, we obtain CP$^2$-PCP simply by replacing \( s_W \) with \( s_\text{PCP} \) in \eqref{eq:empirical_CP_2_score}.

\subsection{Asymptotic properties}
\label{sec:CP2_properties}

\subsubsection{Asymptotic equivalence of prediction regions}

In the following, we prove that the prediction regions generated by CP\(^2\) (for any \( f_\tau \) and \( \varphi \)) and CDF-based methods are identical in the oracle setting, asymptotically, as \( |\mathcal{D}_\text{cal}| \to \infty \). Specifically, for any \( x \in \mathcal{X} \), both methods select the same threshold \( t_{1 - \alpha} = Q_W(1 - \alpha \mid X = x) \) for the prediction region \( \mathcal{R}(x; t_{1 - \alpha}) \), which ensures a coverage level of \( 1 - \alpha \).

\begin{proposition}
Provided that the assumptions in \cref{sec:motivation_CP_2_PCP} hold, for any \( x \in \mathcal{X} \), the prediction regions \( \bar{R}_\text{CP$^2$}(x) \) (for any choice of \( f_\tau \) and \( \varphi \)) and \( \hat{R}_\text{CDF}(x) \) are equivalent.
\end{proposition}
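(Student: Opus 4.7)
The strategy is to show that, in the oracle/asymptotic setting, both regions reduce to the common level set $\{ y \in \mc{Y} : s_W(x, y) \leq Q_W(1 - \alpha \mid X = x) \}$. This set depends neither on the transformation family $\{f_\tau\}$ and anchor $\varphi$ appearing on the CP$^2$ side, nor on the pointwise values of $F_{W \mid X = x}$ appearing on the CDF side, which is exactly the content of the proposition.

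First, for $\bar{R}_\text{CP$^2$}(x)$, I would unwind the definition $\bar{R}_\text{CP$^2$}(x) = \mc{R}(x, f_{\tau_x}(\varphi))$ using \eqref{eq:outer_quantile}, which gives $\tau_x = g_\varphi^{-1}(Q_W(1 - \alpha \mid X = x))$. Applying $g_\varphi$ and invoking the identity $g_\varphi(\tau) = f_\tau(\varphi)$ together with the invertibility of $g_\varphi$, the threshold collapses to $f_{\tau_x}(\varphi) = Q_W(1 - \alpha \mid X = x)$, so the dependence on $\{f_\tau\}$ and $\varphi$ cancels. Combining with \eqref{eq:nested_sets} yields the target form.

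Second, for $\hat{R}_\text{CDF}(x)$, I would take the asymptotic $|\mathcal{D}_\text{cal}| \to \infty$ limit. The probability integral transform gives $s_\text{CDF}(X, Y) \mid X \sim \mc{U}(0, 1)$, hence $s_\text{CDF}(X, Y) \sim \mc{U}(0, 1)$ marginally, and by Glivenko--Cantelli (the same argument used in \cref{lemma:cond_coverage_CDF}) the SCP threshold $\hat{q}$ converges to $1 - \alpha$. The asymptotic region becomes $\{ y : F_{W \mid X = x}(s_W(x, y)) \leq 1 - \alpha \}$, and inverting the conditional CDF of $W$ at level $1-\alpha$ reproduces the common expression from the first step.

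The main obstacle is the inversion of $F_{W \mid X = x}$ when this CDF has flat plateaus: the sets $\{w : F_{W \mid X = x}(w) \leq 1 - \alpha\}$ and $\{w : w \leq Q_W(1 - \alpha \mid X = x)\}$ can then disagree on a $\mathbb{P}_{W \mid X = x}$-null set, so the two prediction regions might differ on a $\mathbb{P}_{Y \mid X = x}$-null subset of $\mc{Y}$. Under the standard continuity assumption on the conditional distribution of $s_W(X,Y)$ given $X = x$ (already implicit in the paper's use of the probability integral transform), $F_{W \mid X = x}$ is continuous, the generalized inverse coincides with the usual inverse on the relevant range, and set equality is exact; without that assumption, equivalence still holds up to a null set, which is sufficient for coverage and expected-size statements. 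All remaining steps are routine bookkeeping.
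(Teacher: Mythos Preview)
Your proposal is correct and follows essentially the same route as the paper: reduce both regions to the common level set $\{y : s_W(x,y) \leq Q_W(1-\alpha \mid X=x)\}$, handling the CP$^2$ side via the bijectivity of $g_\varphi$ and \eqref{eq:outer_quantile}, and the CDF side via the PIT/Glivenko--Cantelli argument giving $\hat q = 1-\alpha$. Your extra remark on flat plateaus of $F_{W\mid X=x}$ is a welcome technical caveat the paper leaves implicit, but it does not change the approach.
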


\begin{proof}

Using the fact that \( g^{-1}_\varphi(f_\tau(\varphi)) = \tau \) for any \( \tau \in \R \) and that \( g_\varphi \) is increasing and bijective, we can write:
\begin{align}
	\bar{R}_\text{CP$^2$}(x)
	&= \{ y \in \mc{Y}: s_W(x, y) \leq f_{\tau_x}(\varphi) \} \\
	&= \{ y \in \mc{Y}: g^{-1}_\varphi(s_W(x, y)) \leq \tau_x \} \\
	&= \{ y \in \mc{Y}: g^{-1}_\varphi(s_W(x, y)) \leq g^{-1}_\varphi(Q_W(1 - \alpha \mid X = x)) \} \\
	&= \{ y \in \mc{Y}: s_W(x, y) \leq Q_W(1 - \alpha \mid X = x) \}.
\end{align}

Let \( \bar{R}_\text{CDF}(x) \) denote the prediction region obtained using the conformity score \( s_\text{CDF} \) as \( |\D_\text{cal}| \to \infty \). As shown in \cref{sec:CDF_based}, \( s_\text{CDF}(X, Y) \sim \mc{U}(0, 1) \), which implies \( \hat{q} = 1 - \alpha \). Therefore:
\begin{align}
	\bar{R}_\text{CDF}(x)
	&= \{ y \in \mc{Y}: s_\text{CDF}(x, y) \leq 1 - \alpha \} \\
	&= \{ y \in \mc{Y}: F_{W|x}(s_W(x, y) \mid X = x) \leq 1 - \alpha \} \\
	&= \{ y \in \mc{Y}: s_W(x, y) \leq Q_W(1 - \alpha \mid X = x) \}.
\end{align}

This shows that \( \bar{R}_\text{CP$^2$}(x) = \bar{R}_\text{CDF}(x) \) and that the threshold \( t_{1 - \alpha} = Q_W(1 - \alpha \mid X = x) \) is identical for both methods.

\end{proof}

\subsubsection{Asymptotic conditional coverage}

\begin{proposition}
Provided that the assumptions in \cref{sec:properties} of the main text hold, specifically that \( \hat{F}_{Y|x} = F_{Y|x} \) for all \( x \in \mathcal{X} \), and \( |\mathcal{D}_\text{cal}| \to \infty \), CP\(^2\) achieves asymptotic conditional coverage as \( K \to \infty \).
\end{proposition}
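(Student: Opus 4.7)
The plan is to show that, in the oracle setting, the CP$^2$ conformity score becomes (asymptotically in $K$) a \emph{pivotal} quantity: its conditional distribution given $X=x$ does not depend on $x$, and moreover its $(1-\alpha)$-quantile equals the fixed constant $\varphi$. Combined with $|\mathcal{D}_\text{cal}| \to \infty$ (so that $\hat{q}$ converges to the $(1-\alpha)$-quantile of the marginal score distribution), this delivers conditional coverage by the same argument used in \cref{lemma:cond_coverage_CDF} for CDF-based scores.

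First, I would use the oracle assumption $\hat{F}_{Y|x} = F_{Y|x}$ together with $K \to \infty$ to argue that $\hat{Q}_W(1-\alpha \mid X=x) \to Q_W(1-\alpha \mid X=x)$ almost surely for every $x \in \mathcal{X}$ (Glivenko--Cantelli applied to the empirical distribution of $s_W(x, \hat{Y}^{(k)})$, assuming $F_{W|X=x}$ is continuous so that the $(1-\alpha)$-quantile is a continuity point). Since $g^{-1}_\varphi$ is increasing and bijective, continuity gives $\hat{\tau}_x \to \tau_x$ as defined in \eqref{eq:outer_quantile}. Because $f^{-1}_\tau$ is continuous in $\tau$ (on the relevant interval) and strictly increasing in its argument, this yields the pointwise limit
\begin{equation*}
    s_{\text{CP}^2}(x, y) = f^{-1}_{\hat{\tau}_x}(s_W(x, y)) \;\longrightarrow\; f^{-1}_{\tau_x}(s_W(x, y)).
\end{equation*}

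Second, I would invoke the defining property of $\tau_x$ from \eqref{eq:CP2_conditional_coverage}: for every $x \in \mathcal{X}$,
\begin{equation*}
    \mathbb{P}\bigl(s_W(X, Y) \leq f_{\tau_x}(\varphi) \,\big|\, X = x\bigr) = 1 - \alpha,
\end{equation*}
which, after applying the strictly increasing function $f^{-1}_{\tau_x}$ to both sides of the inequality, is equivalent to
\begin{equation*}
    \mathbb{P}\bigl(f^{-1}_{\tau_x}(s_W(X, Y)) \leq \varphi \,\big|\, X = x\bigr) = 1 - \alpha.
\end{equation*}
Thus $\varphi$ is the $(1-\alpha)$-conditional quantile of the limiting CP$^2$ score for every $x$. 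Marginalising over $X$ via the tower property gives $\mathbb{P}(s_{\text{CP}^2}(X, Y) \leq \varphi) = 1 - \alpha$, so $\varphi$ is also the marginal $(1-\alpha)$-quantile.

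Third, since $|\mathcal{D}_\text{cal}| \to \infty$, the standard conformal argument (Glivenko--Cantelli on the calibration scores) gives $\hat{q} \to \varphi$. Putting the pieces together,
\begin{equation*}
    \mathbb{P}(Y \in \hat{R}_{\text{CP}^2}(X) \mid X = x)
    = \mathbb{P}(s_{\text{CP}^2}(X, Y) \leq \hat{q} \mid X = x)
    \longrightarrow \mathbb{P}(f^{-1}_{\tau_x}(s_W(X, Y)) \leq \varphi \mid X = x)
    = 1 - \alpha,
\end{equation*}
which is exactly asymptotic conditional coverage.

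The main obstacle I anticipate is the joint limit handling: the score $s_{\text{CP}^2}$ depends on $x$ through the \emph{random} quantity $\hat{\tau}_x$, and $\hat{q}$ is itself a quantile computed from calibration scores that use these random $\hat{\tau}_{X^{(j)}}$. A clean argument should separate the two limits (first let $K \to \infty$, freezing $s_{\text{CP}^2}$ at its oracle limit $f^{-1}_{\tau_x} \circ s_W$, and only then let $|\mathcal{D}_\text{cal}| \to \infty$), and rely on continuity of $\tau \mapsto f^{-1}_\tau(\cdot)$ plus continuity of $F_{W|X=x}$ to interchange limits with the quantile operation. Under these regularity conditions the proof reduces to the same pivotal-score template as \cref{lemma:cond_coverage_CDF}, reflecting the asymptotic equivalence between CP$^2$ and CDF-based scores noted in \cref{sec:CP2_properties}.
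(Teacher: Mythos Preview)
Your proposal is correct and follows essentially the same route as the paper: establish $\hat{\tau}_x \to \tau_x$ under the oracle assumption and $K\to\infty$, use the defining property of $\tau_x$ to show the limiting score has $\varphi$ as its $(1-\alpha)$-quantile both conditionally and marginally, then let $|\mathcal{D}_\text{cal}|\to\infty$ to conclude $\hat{q}=\varphi$ and hence conditional coverage. You are somewhat more explicit than the paper about the convergence machinery (Glivenko--Cantelli, continuity of $f^{-1}_\tau$ in $\tau$) and you flag the joint-limit subtlety that the paper simply sidesteps, but the argument is the same pivotal-score template.
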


\begin{proof}
Under these assumptions, we have \( \hat{Q}_W(\cdot \mid X = x) = Q_W(\cdot \mid X = x) \), which implies \( \hat{\tau}_x = \tau_x \) for all \( x \in \mathcal{X} \). Hence, the prediction region for CP\(^2\) is given by:
\[
	\bar{R}_\text{CP$^2$}(x) = \{ y \in \mathcal{Y}: s_\text{CP$^2$}(x, y) \leq \varphi \}.
\]

Since this prediction region provides conditional coverage, it also ensures marginal coverage:
	\begin{align}
		\mathbb{P}(Y \in \bar{R}_\text{CP$^2$}(X))
        &= \mathbb{P}(s_\text{CP$^2$}(X, Y) \leq \varphi) \\
		&= \mathbb{E}_X \left[ \mathbb{P}(s_\text{CP$^2$}(X, Y) \leq \varphi \mid X) \right] \\
		&= \mathbb{E}_X \left[ 1 - \alpha \right] \\
		&= 1 - \alpha.
	\end{align}

Since \( \hat{q} \) is the \( 1 - \alpha \) quantile of \( s_\text{CP$^2$}(X, Y) \), and as \( |\mathcal{D}_\text{cal}| \to \infty \), we have \( \hat{q} = \varphi \) by definition. Therefore, since \( \bar{R}_\text{CP$^2$}(x) \) achieves conditional coverage (see \cref{eq:CP2_conditional_coverage}), the region \( \hat{R}_\text{CP$^2$}(x) \) also achieves asymptotic conditional coverage:
	\begin{align}
		\mathbb{P}(Y \in \hat{R}_\text{CP$^2$}(X) \mid X = x) 
		&= \mathbb{P}(s_\text{CP$^2$}(X, Y) \leq \hat{q} \mid X = x) \\
		&= \mathbb{P}(s_\text{CP$^2$}(X, Y) \leq \varphi \mid X = x) \\
		&\geq 1 - \alpha.
	\end{align}

\end{proof}

\subsection{Relationship between CDF-based and CP$^2$-based methods}
\label{sec:CP2_relationship}

We observe that the difference between the conformity scores $s_{\text{ECDF}}$ and $s_{\text{CP$^2$}}$ with the same base conformity score $s_W$ lies in the way they transform $s_W(x, y)$ to obtain asymptotic conditional coverage. Both methods rely on a sample $\{\hat{Y}^{(k)} \}_{k=1}^K$ where $\hat{Y}^{(k)} \sim \hat{F}_{Y|x}$.

Recall that the conformity scores $s_{\text{ECDF}}$ and $s_{\text{CP$^2$}}$ are given by
\begin{align}
	s_{\text{ECDF}}(x, y) &= \frac{1}{K} \sum_{k \in [K]} \mathbb{I}\left(s_W(x, \hat{Y}^{(k)}) \leq s_W(x, y)\right) = \hat{F}_{W|x}(s_W(x, y)), \\
	s_{\text{CP$^2$}}(x, y) &= f^{-1}_{\hat{\tau}_x}(s_W(x, y))
	\text{ where } \hat{\tau}_x = g^{-1}_\varphi(\hat{Q}_W(1 - \alpha \mid X = x)).
\end{align}

A natural question is whether there exists $f_\tau$ and $\varphi$ (with the assumptions introduced in \cref{sec:motivation_CP_2_PCP}) such that these two methods produce the same regions.
Given $x \in \mc{X}$, when $K$ is finite, we observe that the conformity score $s_{\text{ECDF}}(x, \cdot)$ is discontinuous and is thus necessarily different from the conformity score $s_{\text{CP$^2$}}(x, \cdot)$, which is continuous.

Thus, we turn our attention to a setting where $K \to \infty$ and $s_{\text{ECDF}}(x, \cdot)$ becomes continuous. In \cref{prop:eq_location_family}, we show that, in the particular case where the conditional distributions \( \{ \hat{F}_{W|x} \}_{x \in \mc{X}} \) belong to a location family, there exists an \( f_\tau \) and \( \varphi \) such that the two methods are equivalent.

However, the proof is not easily generalizable to a location-scale family. Further development of existing classes of conformal methods with asymptotic conditional coverage and their intersections is a promising avenue for research. Future work could also focus on identifying scenarios where a particular conformity score minimizes region sizes while still achieving asymptotic conditional coverage.

\begin{proposition}

 	Consider a scenario where all conditional distributions \( \{ \hat{F}_{W|x} \}_{x \in \mc{X}} \) belong to a location family, i.e.,
	\begin{equation}
		\hat{F}_{W|x}(w) = F(w - \hat{\mu}_x) \text{ and } \hat{Q}_{W|x}(\alpha) = F^{-1}(\alpha) + \hat{\mu}_x,
	\end{equation}
	for some continuous and strictly increasing base CDF \( F \) and location parameter \( \hat{\mu}_x \). The conformity scores $s_{\text{ECDF}}$ and $s_{\text{CP$^2$}}$ lead to the same prediction regions as $K \to \infty$.

    \label{prop:eq_location_family}
\end{proposition}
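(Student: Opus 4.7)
The plan is to exhibit a concrete choice of $f_\tau$ and $\varphi$ in the CP$^2$ framework, show that under the location-family assumption the resulting conformity score $s_{\text{CP$^2$}}$ and the limit of $s_{\text{ECDF}}$ agree up to a strictly increasing transformation that does not depend on $x$, and then invoke \cref{lemma:increasing_score} to conclude that the two prediction regions coincide. The natural choice is the pure location shift $f_\tau(\lambda) = \lambda + \tau$ with any fixed $\varphi \in \R$: both the monotonicity and the bijectivity hypotheses from \cref{sec:motivation_CP_2_PCP} are immediate, and moreover $g_\varphi(\tau) = f_\tau(\varphi) = \varphi + \tau$, so $g_\varphi^{-1}(w) = w - \varphi$.

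First I would pin down the limit of $s_{\text{ECDF}}$: conditionally on $x$, the values $\{s_W(x, \hat{Y}^{(k)})\}_{k \in [K]}$ are i.i.d.\ with CDF $\hat{F}_{W|x}$, so by the Glivenko--Cantelli theorem $s_{\text{ECDF}}(x, y) \to \hat{F}_{W|x}(s_W(x, y))$ almost surely as $K \to \infty$, and the location-family assumption simplifies the right-hand side to $F\bigl(s_W(x, y) - \hat{\mu}_x\bigr)$. Next, using $\hat{Q}_W(1-\alpha \mid X = x) = F^{-1}(1-\alpha) + \hat{\mu}_x$, I would compute $\hat{\tau}_x = F^{-1}(1-\alpha) + \hat{\mu}_x - \varphi$, so that
\[
  s_{\text{CP$^2$}}(x, y) = f_{\hat{\tau}_x}^{-1}\bigl(s_W(x, y)\bigr) = \bigl(s_W(x, y) - \hat{\mu}_x\bigr) + C,
\]
where $C := \varphi - F^{-1}(1-\alpha)$ does not depend on $(x, y)$. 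Setting $u := s_W(x, y) - \hat{\mu}_x$ then gives $s_{\text{CP$^2$}}(x, y) = u + C$ and $s_{\text{ECDF}}(x, y) \to F(u)$, hence $s_{\text{ECDF}}(x, y) = g\bigl(s_{\text{CP$^2$}}(x, y)\bigr)$ with $g(t) := F(t - C)$ strictly increasing as a composition of strictly increasing maps, and \cref{lemma:increasing_score} yields the desired equality of the prediction regions.

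The hard part is conceptual rather than computational: one must align the parametric family $\{f_\tau\}$ with the parametric family of conditionals $\{\hat{F}_{W|x}\}$ so that their $x$-dependent pieces cancel out. A pure location shift matches a location family exactly, which is why the argument goes through here; it does not obviously extend to a location-scale family, since a single one-parameter family $\{f_\tau\}$ cannot simultaneously absorb both shift and scale variation of $\hat{F}_{W|x}$ while leaving only an $x$-independent monotone link between the two conformity scores. This is consistent with the paper's own remark that the result does not easily generalize beyond location families.
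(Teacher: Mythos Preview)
Your proof is correct but takes a different route from the paper's. The paper chooses $f_\tau(\lambda) = F^{-1}(\lambda) + \tau$ with $\varphi = 1-\alpha$, which yields $\hat\tau_x = \hat\mu_x$ and hence $s_{\text{CP}^2}(x,y) = F\bigl(s_W(x,y)-\hat\mu_x\bigr) = \hat F_{W|x}(s_W(x,y))$ \emph{exactly}; no appeal to \cref{lemma:increasing_score} is needed because the two scores coincide as functions. You instead pick the simpler shift family $f_\tau(\lambda)=\lambda+\tau$, obtain $s_{\text{CP}^2}(x,y) = \bigl(s_W(x,y)-\hat\mu_x\bigr)+C$ with an $x$-independent constant $C$, and then close the gap to $F\bigl(s_W(x,y)-\hat\mu_x\bigr)$ via the strictly increasing link $g(t)=F(t-C)$ and \cref{lemma:increasing_score}. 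Both arguments correctly exploit that the only $x$-dependence in $\hat F_{W|x}$ is an additive shift, so that it can be absorbed into $\hat\tau_x$; the paper's choice folds $F$ into $f_\tau$ to get literal equality of scores, whereas your choice keeps $f_\tau$ elementary (and its bijectivity assumptions trivially verifiable) at the cost of an extra, but standard, invariance step. Your closing remark about the failure to extend to location--scale families matches the paper's discussion.
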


\begin{proof}

    We will show that there is a transformation function $f_\tau$ with the assumptions above such that, for any $x \in \mathcal{X}$ and $w \in \R$,
    \begin{align}
        &f^{-1}_{\hat{\tau}_x}(w) = \hat{F}_{W|x}(w \mid X = x) \\
        &\text{where $\hat{\tau}_x = g^{-1}_\varphi(\hat{Q}_{W|x}(1 - \alpha \mid X = x))$}.
    \end{align}

	Define the transformation function \( f_\tau \) as:
	\begin{equation}
		f_\tau(\lambda) = F^{-1}(\lambda) + \tau,
	\end{equation}
	where \( \tau > 0 \), and define $\varphi = 1 - \alpha$.
 
	The inverse transformations are:
	\begin{equation}
		f^{-1}_\tau(\lambda) = F(\lambda - \tau),
	\end{equation}
 and 
	\begin{equation}
		g^{-1}_\varphi(w) = w - F^{-1}(\varphi).
	\end{equation}
	
	Now, compute
	\begin{equation}
		\hat{\tau}_x = F^{-1}(1 - \alpha) + \hat{\mu}_x - F^{-1}(\varphi) = \hat{\mu}_x.
	\end{equation}

	Finally, we obtain
	\begin{align}
		f^{-1}_{\hat{\tau}_x}(w)
		= F(w - \hat{\tau}_x)
		= F(w - \hat{\mu}_x)
		= \hat{F}_{W|x}(w).
	\end{align}
	
\end{proof}

\section{Results on an image dataset}
\label{sec:results_cifar_10}

To better understand the behavior of prediction regions in high-dimensional spaces, we apply conformal methods to the CIFAR-10 dataset \citep{Krizhevsky2014-qr}, which consists of 32x32 RGB images, each labeled with one of 10 possible classes. We train a generative model conditioned on the image label, where $\mc{Y} = [0, 1]^{3 \times 32 \times 32}$ ($d = 3072$) represents the image space, and $\mc{X} = \{ 0, \dots, 9 \}$ ($p = 1$) represents the labels. The training, calibration, and test datasets contain 50,000, 1,500, and 1,500 images, respectively. As noted in \cite{Angelopoulos2021-rc}, this calibration dataset size is sufficient to ensure good marginal coverage.

Our generative model is a conditional Glow model \citep{Kingma2018-hp} based on the implementation from \cite{Stimper2022-fz} using a 3-level multi-scale architecture with 32 blocks per level. Like MQF$^2$ (\cref{sec:study_base_predictors}), this generative model is a normalizing flow and directly compatible with all methods presented, except \tt{M-CP}. For a direct comparison with \tt{M-CP}, we compute quantiles based on samples from the generative model as in \cref{sec:study_base_predictors}.

The latent space of the conditional Glow model, due to its multi-scale architecture, consists of three subspaces: $\mc{Z} = \mc{Z}_1 \times \mc{Z}_2 \times \mc{Z}_3$, where $\mc{Z}_1 = \R^{48 \times 4 \times 4}$, $\mc{Z}_2 = \R^{12 \times 8 \times 8}$, and $\mc{Z}_3 = \R^{6 \times 16 \times 16}$. As the distance function $d_{\mc{Z}}$ in the latent space, we use the maximum norm across the three spaces to penalize high norms in any of them: $d_{\mc{Z}}(z) = \max\{ \norm{z_1}, \norm{z_2}, \norm{z_3} \}$, where $z = z_1 \times z_2 \times z_3$.

\cref{table:glow_cifar10} presents the metrics introduced in \cref{sec:study_metrics}. All methods achieve marginal coverage despite the high dimensionality of $\mc{Y}$, which is expected as the marginal coverage distribution conditional on the calibration dataset is independent of $d$ (\cref{sec:proof_marginal_coverage}). Despite using the logarithm of the geometric mean, we observe that the G. Size metric has a high magnitude. \tt{DR-CP} achieves the smallest G. Size, closely followed by \tt{C-HDR}. The mean region size is not reported because it diminishes to zero, as machine precision cannot represent such small values.

Regarding conditional coverage, as in other experiments, \tt{L-CP}, \tt{C-HDR}, \tt{C-PCP}, and \tt{M-CP} exhibit the smallest CEC-$X$ and CEC-$Z$ values, indicating superior conditional coverage. The \tt{WSC} metric supports similar conclusions, with \tt{DR-CP} and \tt{PCP} being the least calibrated.

\begin{table}[H]
	\fontsize{9.5pt}{10.3pt}
	\selectfont
	\centering
    \caption{Results obtained with a conditional Glow model on CIFAR-10 with $1 - \alpha = 0.8$.}
	\label{table:glow_cifar10}
	\begin{tabular}{llllllll}
	\toprule
	& & \textbf{MC} & \textbf{G. Size} & \textbf{CEC-$X$} & \textbf{CEC-$Z$} & \textbf{WSC} \\
	\textbf{Dataset} & \textbf{Method} &  &  &  & ($\times 100$) & ($\times 100$) &  \\

    \midrule
    \multirow[t]{7}{*}{CIFAR-10} & M-CP & $\text{0.782}$ & $\text{-7.47e+03}$ & $\text{0.252}$ & $\text{0.142}$ & $\text{0.741}$ \\
    & DR-CP & $\text{0.787}$ & \bfseries $\text{-9.09e+03}$ & $\text{0.516}$ & $\text{0.451}$ & $\text{0.730}$ \\
    & C-HDR & $\text{0.794}$ & $\text{-9.07e+03}$ & $\text{0.280}$ & \bfseries $\text{0.0151}$ & $\text{0.761}$ \\
    & PCP & $\text{0.791}$ & $\text{-7.47e+03}$ & $\text{0.476}$ & $\text{0.305}$ & $\text{0.736}$ \\
    & HD-PCP & $\text{0.791}$ & $\text{-7.47e+03}$ & $\text{0.523}$ & $\text{0.293}$ & $\text{0.780}$ \\
    & C-PCP & $\text{0.815}$ & $\text{-7.45e+03}$ & $\text{0.116}$ & $\text{0.0642}$ & \bfseries $\text{0.792}$ \\
    & L-CP & \bfseries $\text{0.806}$ & $\text{-7.50e+03}$ & \bfseries $\text{0.0716}$ & $\text{0.134}$ & $\text{0.818}$ \\

\bottomrule
\end{tabular}

\end{table}

\section{Full results}
\label{sec:full_results}

\cref{table:mqf2_real_1,table:mqf2_real_2} show the full results obtained with the setup described in \cref{sec:study}. Each metric is the mean over 10 independent runs. The standard error of the mean is indicated as an index.

\begin{table}[H]
	\fontsize{7.5pt}{8.3pt}
	\selectfont
	\caption{Full results obtained with the setup described in \cref{sec:study} (Part 1).}
	\label{table:mqf2_real_1}
	\centering
	\begin{tabular}{lllllllll}
	\toprule
	& & \textbf{MC} & \textbf{Median Size} & \textbf{CEC-$X$} & \textbf{CEC-$Z$} & \textbf{WSC} & \textbf{Test time} \\
	\textbf{Dataset} & \textbf{Method} &  &  & ($\times 100$) & ($\times 100$) & &  \\

    \midrule\multirow[t]{9}{*}{households} & M-CP & $\text{0.801}_{\text{0.0057}}$ & $\text{14.1}_{\text{0.53}}$ & $\text{0.323}_{\text{0.074}}$ & $\text{0.348}_{\text{0.031}}$ & $\text{0.784}_{\text{0.0095}}$ & $\text{2.44}_{\text{0.21}}$ \\
     & CopulaCPTS & $\text{0.784}_{\text{0.010}}$ & $\text{12.5}_{\text{0.96}}$ & $\text{0.516}_{\text{0.064}}$ & $\text{0.636}_{\text{0.062}}$ & $\text{0.747}_{\text{0.018}}$ & $\text{5.66}_{\text{0.52}}$ \\
     & DR-CP & $\text{0.804}_{\text{0.0049}}$ & $\text{13.3}_{\text{0.32}}$ & $\text{0.955}_{\text{0.11}}$ & $\text{1.83}_{\text{0.15}}$ & $\text{0.668}_{\text{0.014}}$ & $\text{0.0734}_{\text{0.0038}}$ \\
     & C-HDR & $\text{0.806}_{\text{0.0059}}$ & \bfseries $\text{10.4}_{\text{0.35}}$ & $\text{0.207}_{\text{0.043}}$ & $\text{0.156}_{\text{0.021}}$ & $\text{0.796}_{\text{0.011}}$ & $\text{2.57}_{\text{0.21}}$ \\
     & PCP & \bfseries $\text{0.800}_{\text{0.0048}}$ & $\text{20.6}_{\text{0.41}}$ & $\text{1.02}_{\text{0.079}}$ & $\text{2.23}_{\text{0.11}}$ & $\text{0.635}_{\text{0.016}}$ & $\text{2.37}_{\text{0.20}}$ \\
     & HD-PCP & $\text{0.802}_{\text{0.0044}}$ & $\text{15.7}_{\text{0.44}}$ & $\text{0.741}_{\text{0.094}}$ & $\text{1.30}_{\text{0.072}}$ & $\text{0.715}_{\text{0.013}}$ & $\text{2.55}_{\text{0.21}}$ \\
     & STDQR & $\text{0.806}_{\text{0.0053}}$ & $\text{17.9}_{\text{0.45}}$ & $\text{0.893}_{\text{0.076}}$ & $\text{1.90}_{\text{0.080}}$ & $\text{0.688}_{\text{0.017}}$ & $\text{8.68}_{\text{0.84}}$ \\
     & C-PCP & $\text{0.804}_{\text{0.0073}}$ & $\text{15.3}_{\text{0.80}}$ & \bfseries $\text{0.188}_{\text{0.050}}$ & $\text{0.126}_{\text{0.028}}$ & \bfseries $\text{0.802}_{\text{0.0067}}$ & $\text{4.75}_{\text{0.41}}$ \\
     & L-CP & $\text{0.800}_{\text{0.0037}}$ & $\text{18.2}_{\text{0.82}}$ & $\text{0.194}_{\text{0.043}}$ & \bfseries $\text{0.120}_{\text{0.020}}$ & $\text{0.793}_{\text{0.014}}$ & \bfseries $\text{0.0227}_{\text{0.0015}}$ \\
    \midrule\multirow[t]{9}{*}{scm20d} & M-CP & $\text{0.800}_{\text{0.0045}}$ & $\text{60.5}_{\text{8.6}}$ & \bfseries $\text{0.0702}_{\text{0.013}}$ & $\text{0.911}_{\text{0.074}}$ & $\text{0.772}_{\text{0.0099}}$ & $\text{3.24}_{\text{0.25}}$ \\
     & CopulaCPTS & $\text{0.826}_{\text{0.0080}}$ & $\text{88.7}_{\text{1.5e+01}}$ & $\text{0.160}_{\text{0.038}}$ & $\text{0.862}_{\text{0.050}}$ & $\text{0.790}_{\text{0.011}}$ & $\text{6.11}_{\text{0.39}}$ \\
     & DR-CP & $\text{0.798}_{\text{0.0053}}$ & $\text{2.18e+02}_{\text{2.4e+01}}$ & $\text{0.408}_{\text{0.047}}$ & $\text{2.64}_{\text{0.14}}$ & $\text{0.690}_{\text{0.020}}$ & $\text{0.262}_{\text{0.014}}$ \\
     & C-HDR & $\text{0.806}_{\text{0.0068}}$ & $\text{42.6}_{\text{1.0e+01}}$ & $\text{0.181}_{\text{0.021}}$ & $\text{0.108}_{\text{0.021}}$ & \bfseries $\text{0.800}_{\text{0.0078}}$ & $\text{3.87}_{\text{0.28}}$ \\
     & PCP & $\text{0.798}_{\text{0.0055}}$ & $\text{92.7}_{\text{9.7}}$ & $\text{0.550}_{\text{0.039}}$ & $\text{5.32}_{\text{0.23}}$ & $\text{0.617}_{\text{0.012}}$ & $\text{3.18}_{\text{0.23}}$ \\
     & HD-PCP & \bfseries $\text{0.800}_{\text{0.0048}}$ & $\text{86.0}_{\text{9.5}}$ & $\text{0.461}_{\text{0.028}}$ & $\text{4.78}_{\text{0.23}}$ & $\text{0.669}_{\text{0.013}}$ & $\text{3.65}_{\text{0.26}}$ \\
     & STDQR & $\text{0.802}_{\text{0.0044}}$ & $\text{87.7}_{\text{8.3}}$ & $\text{0.491}_{\text{0.043}}$ & $\text{4.82}_{\text{0.16}}$ & $\text{0.628}_{\text{0.014}}$ & $\text{8.31}_{\text{0.33}}$ \\
     & C-PCP & $\text{0.807}_{\text{0.0040}}$ & \bfseries $\text{23.4}_{\text{3.1}}$ & $\text{0.120}_{\text{0.020}}$ & $\text{0.0908}_{\text{0.016}}$ & $\text{0.787}_{\text{0.0082}}$ & $\text{6.37}_{\text{0.49}}$ \\
     & L-CP & $\text{0.795}_{\text{0.0041}}$ & $\text{70.6}_{\text{1.3e+01}}$ & $\text{0.187}_{\text{0.038}}$ & \bfseries $\text{0.0896}_{\text{0.023}}$ & $\text{0.787}_{\text{0.0062}}$ & \bfseries $\text{0.0276}_{\text{0.0017}}$ \\
    \midrule\multirow[t]{9}{*}{rf2} & M-CP & $\text{0.795}_{\text{0.0051}}$ & $\text{0.00683}_{\text{0.0032}}$ & $\text{0.223}_{\text{0.033}}$ & $\text{1.08}_{\text{0.15}}$ & $\text{0.659}_{\text{0.021}}$ & $\text{8.06}_{\text{1.0}}$ \\
     & CopulaCPTS & $\text{0.777}_{\text{0.0098}}$ & $\text{0.00694}_{\text{0.0041}}$ & $\text{0.330}_{\text{0.051}}$ & $\text{1.32}_{\text{0.16}}$ & $\text{0.620}_{\text{0.020}}$ & $\text{16.3}_{\text{2.7}}$ \\
     & DR-CP & $\text{0.798}_{\text{0.0034}}$ & $\text{0.00268}_{\text{0.0012}}$ & $\text{1.09}_{\text{0.24}}$ & $\text{5.98}_{\text{0.75}}$ & $\text{0.516}_{\text{0.039}}$ & $\text{0.151}_{\text{0.0053}}$ \\
     & C-HDR & $\text{0.801}_{\text{0.0034}}$ & \bfseries $\text{0.000862}_{\text{0.00038}}$ & \bfseries $\text{0.0769}_{\text{0.0083}}$ & $\text{0.254}_{\text{0.054}}$ & $\text{0.727}_{\text{0.021}}$ & $\text{8.40}_{\text{1.0}}$ \\
     & PCP & $\text{0.801}_{\text{0.0019}}$ & $\text{0.00875}_{\text{0.0043}}$ & $\text{1.01}_{\text{0.22}}$ & $\text{6.19}_{\text{0.57}}$ & $\text{0.514}_{\text{0.031}}$ & $\text{8.06}_{\text{1.0}}$ \\
     & HD-PCP & $\text{0.799}_{\text{0.0028}}$ & $\text{0.00771}_{\text{0.0036}}$ & $\text{0.911}_{\text{0.21}}$ & $\text{5.85}_{\text{0.58}}$ & $\text{0.550}_{\text{0.035}}$ & $\text{8.35}_{\text{1.0}}$ \\
     & STDQR & $\text{0.798}_{\text{0.0035}}$ & $\text{0.00780}_{\text{0.0036}}$ & $\text{0.926}_{\text{0.21}}$ & $\text{5.94}_{\text{0.59}}$ & $\text{0.555}_{\text{0.030}}$ & $\text{22.9}_{\text{4.6}}$ \\
     & C-PCP & $\text{0.803}_{\text{0.0058}}$ & $\text{0.00328}_{\text{0.0014}}$ & $\text{0.102}_{\text{0.018}}$ & \bfseries $\text{0.201}_{\text{0.022}}$ & $\text{0.726}_{\text{0.021}}$ & $\text{16.1}_{\text{2.0}}$ \\
     & L-CP & \bfseries $\text{0.800}_{\text{0.0029}}$ & $\text{0.00130}_{\text{0.00057}}$ & $\text{0.0797}_{\text{0.0079}}$ & $\text{0.262}_{\text{0.045}}$ & \bfseries $\text{0.732}_{\text{0.011}}$ & \bfseries $\text{0.0272}_{\text{0.0014}}$ \\
    \midrule\multirow[t]{9}{*}{rf1} & M-CP & $\text{0.795}_{\text{0.0051}}$ & $\text{0.00683}_{\text{0.0032}}$ & $\text{0.223}_{\text{0.033}}$ & $\text{1.08}_{\text{0.15}}$ & $\text{0.659}_{\text{0.021}}$ & $\text{8.21}_{\text{1.1}}$ \\
     & CopulaCPTS & $\text{0.777}_{\text{0.0098}}$ & $\text{0.00694}_{\text{0.0041}}$ & $\text{0.330}_{\text{0.051}}$ & $\text{1.32}_{\text{0.16}}$ & $\text{0.620}_{\text{0.020}}$ & $\text{16.6}_{\text{2.8}}$ \\
     & DR-CP & $\text{0.798}_{\text{0.0034}}$ & $\text{0.00268}_{\text{0.0012}}$ & $\text{1.09}_{\text{0.24}}$ & $\text{5.98}_{\text{0.75}}$ & $\text{0.516}_{\text{0.039}}$ & $\text{0.151}_{\text{0.0066}}$ \\
     & C-HDR & $\text{0.801}_{\text{0.0034}}$ & \bfseries $\text{0.000862}_{\text{0.00038}}$ & \bfseries $\text{0.0769}_{\text{0.0083}}$ & $\text{0.254}_{\text{0.054}}$ & $\text{0.727}_{\text{0.021}}$ & $\text{8.56}_{\text{1.1}}$ \\
     & PCP & $\text{0.801}_{\text{0.0019}}$ & $\text{0.00875}_{\text{0.0043}}$ & $\text{1.01}_{\text{0.22}}$ & $\text{6.19}_{\text{0.57}}$ & $\text{0.514}_{\text{0.031}}$ & $\text{8.17}_{\text{1.0}}$ \\
     & HD-PCP & $\text{0.799}_{\text{0.0028}}$ & $\text{0.00771}_{\text{0.0036}}$ & $\text{0.911}_{\text{0.21}}$ & $\text{5.85}_{\text{0.58}}$ & $\text{0.550}_{\text{0.035}}$ & $\text{8.47}_{\text{1.0}}$ \\
     & STDQR & $\text{0.798}_{\text{0.0035}}$ & $\text{0.00780}_{\text{0.0036}}$ & $\text{0.926}_{\text{0.21}}$ & $\text{5.94}_{\text{0.59}}$ & $\text{0.555}_{\text{0.030}}$ & $\text{23.3}_{\text{4.7}}$ \\
     & C-PCP & $\text{0.803}_{\text{0.0058}}$ & $\text{0.00328}_{\text{0.0014}}$ & $\text{0.102}_{\text{0.018}}$ & \bfseries $\text{0.201}_{\text{0.022}}$ & $\text{0.726}_{\text{0.021}}$ & $\text{16.3}_{\text{2.1}}$ \\
     & L-CP & \bfseries $\text{0.800}_{\text{0.0029}}$ & $\text{0.00130}_{\text{0.00057}}$ & $\text{0.0797}_{\text{0.0079}}$ & $\text{0.262}_{\text{0.045}}$ & \bfseries $\text{0.732}_{\text{0.011}}$ & \bfseries $\text{0.0268}_{\text{0.0017}}$ \\
    \midrule\multirow[t]{9}{*}{scm1d} & M-CP & $\text{0.796}_{\text{0.0031}}$ & $\text{0.555}_{\text{0.050}}$ & $\text{1.06}_{\text{0.066}}$ & $\text{2.48}_{\text{0.088}}$ & $\text{0.621}_{\text{0.012}}$ & $\text{3.02}_{\text{0.24}}$ \\
     & CopulaCPTS & $\text{0.735}_{\text{0.013}}$ & $\text{0.352}_{\text{0.058}}$ & $\text{1.73}_{\text{0.24}}$ & $\text{3.48}_{\text{0.35}}$ & $\text{0.581}_{\text{0.021}}$ & $\text{5.33}_{\text{0.33}}$ \\
     & DR-CP & $\text{0.792}_{\text{0.0043}}$ & $\text{0.905}_{\text{0.091}}$ & $\text{1.54}_{\text{0.10}}$ & $\text{5.20}_{\text{0.25}}$ & $\text{0.561}_{\text{0.011}}$ & $\text{0.288}_{\text{0.013}}$ \\
     & C-HDR & $\text{0.811}_{\text{0.0047}}$ & $\text{0.252}_{\text{0.030}}$ & \bfseries $\text{0.437}_{\text{0.077}}$ & $\text{0.114}_{\text{0.017}}$ & \bfseries $\text{0.770}_{\text{0.0097}}$ & $\text{3.71}_{\text{0.26}}$ \\
     & PCP & $\text{0.792}_{\text{0.0063}}$ & $\text{0.730}_{\text{0.073}}$ & $\text{1.86}_{\text{0.12}}$ & $\text{8.19}_{\text{0.28}}$ & $\text{0.511}_{\text{0.015}}$ & $\text{2.94}_{\text{0.21}}$ \\
     & HD-PCP & $\text{0.792}_{\text{0.0063}}$ & $\text{0.717}_{\text{0.070}}$ & $\text{1.83}_{\text{0.12}}$ & $\text{8.04}_{\text{0.27}}$ & $\text{0.522}_{\text{0.020}}$ & $\text{3.47}_{\text{0.24}}$ \\
     & STDQR & $\text{0.790}_{\text{0.0073}}$ & $\text{0.694}_{\text{0.080}}$ & $\text{1.89}_{\text{0.12}}$ & $\text{8.22}_{\text{0.27}}$ & $\text{0.492}_{\text{0.019}}$ & $\text{6.96}_{\text{0.28}}$ \\
     & C-PCP & $\text{0.802}_{\text{0.0060}}$ & $\text{0.230}_{\text{0.028}}$ & $\text{0.485}_{\text{0.077}}$ & $\text{0.166}_{\text{0.034}}$ & $\text{0.751}_{\text{0.0047}}$ & $\text{5.91}_{\text{0.45}}$ \\
     & L-CP & \bfseries $\text{0.799}_{\text{0.0042}}$ & \bfseries $\text{0.211}_{\text{0.021}}$ & $\text{0.453}_{\text{0.074}}$ & \bfseries $\text{0.108}_{\text{0.020}}$ & $\text{0.741}_{\text{0.013}}$ & \bfseries $\text{0.0301}_{\text{0.0010}}$ \\
    \midrule\multirow[t]{9}{*}{meps\_21} & M-CP & $\text{0.799}_{\text{0.0056}}$ & $\text{0.181}_{\text{0.016}}$ & $\text{0.955}_{\text{0.11}}$ & $\text{0.748}_{\text{0.12}}$ & $\text{0.702}_{\text{0.013}}$ & $\text{50.5}_{\text{6.3}}$ \\
     & CopulaCPTS & $\text{0.778}_{\text{0.0061}}$ & $\text{0.167}_{\text{0.017}}$ & $\text{0.937}_{\text{0.13}}$ & $\text{0.692}_{\text{0.12}}$ & $\text{0.678}_{\text{0.011}}$ & $\text{76.6}_{\text{9.8}}$ \\
     & DR-CP & $\text{0.803}_{\text{0.0026}}$ & $\text{0.217}_{\text{0.013}}$ & $\text{3.83}_{\text{0.19}}$ & $\text{4.32}_{\text{0.66}}$ & $\text{0.543}_{\text{0.011}}$ & $\text{0.130}_{\text{0.0069}}$ \\
     & C-HDR & $\text{0.806}_{\text{0.0048}}$ & \bfseries $\text{0.130}_{\text{0.031}}$ & $\text{0.438}_{\text{0.056}}$ & $\text{0.255}_{\text{0.051}}$ & $\text{0.735}_{\text{0.014}}$ & $\text{50.8}_{\text{6.3}}$ \\
     & PCP & $\text{0.799}_{\text{0.0036}}$ & $\text{0.340}_{\text{0.020}}$ & $\text{3.27}_{\text{0.14}}$ & $\text{3.74}_{\text{0.55}}$ & $\text{0.546}_{\text{0.0092}}$ & $\text{50.9}_{\text{6.4}}$ \\
     & HD-PCP & $\text{0.801}_{\text{0.0029}}$ & $\text{0.235}_{\text{0.017}}$ & $\text{2.16}_{\text{0.18}}$ & $\text{2.16}_{\text{0.35}}$ & $\text{0.597}_{\text{0.013}}$ & $\text{51.2}_{\text{6.5}}$ \\
     & STDQR & $\text{0.803}_{\text{0.0024}}$ & $\text{0.276}_{\text{0.018}}$ & $\text{2.66}_{\text{0.14}}$ & $\text{2.89}_{\text{0.45}}$ & $\text{0.590}_{\text{0.011}}$ & $\text{93.1}_{\text{1.2e+01}}$ \\
     & C-PCP & $\text{0.804}_{\text{0.0032}}$ & $\text{0.207}_{\text{0.025}}$ & \bfseries $\text{0.185}_{\text{0.053}}$ & \bfseries $\text{0.0993}_{\text{0.029}}$ & \bfseries $\text{0.773}_{\text{0.0077}}$ & $\text{1.01e+02}_{\text{1.3e+01}}$ \\
     & L-CP & \bfseries $\text{0.800}_{\text{0.0036}}$ & $\text{0.255}_{\text{0.065}}$ & $\text{0.815}_{\text{0.16}}$ & $\text{0.399}_{\text{0.14}}$ & $\text{0.681}_{\text{0.033}}$ & \bfseries $\text{0.0546}_{\text{0.0035}}$ \\
    \midrule\multirow[t]{9}{*}{meps\_19} & M-CP & $\text{0.803}_{\text{0.0031}}$ & $\text{0.219}_{\text{0.025}}$ & $\text{0.703}_{\text{0.055}}$ & $\text{0.614}_{\text{0.095}}$ & $\text{0.705}_{\text{0.0096}}$ & $\text{68.7}_{\text{1.0e+01}}$ \\
     & CopulaCPTS & $\text{0.802}_{\text{0.024}}$ & $\text{0.641}_{\text{0.46}}$ & $\text{1.17}_{\text{0.29}}$ & $\text{0.937}_{\text{0.32}}$ & $\text{0.718}_{\text{0.033}}$ & $\text{1.04e+02}_{\text{1.6e+01}}$ \\
     & DR-CP & $\text{0.796}_{\text{0.0031}}$ & $\text{0.172}_{\text{0.012}}$ & $\text{3.87}_{\text{0.19}}$ & $\text{3.74}_{\text{0.77}}$ & $\text{0.510}_{\text{0.011}}$ & $\text{0.124}_{\text{0.0070}}$ \\
     & C-HDR & $\text{0.806}_{\text{0.0037}}$ & \bfseries $\text{0.107}_{\text{0.017}}$ & $\text{0.369}_{\text{0.038}}$ & $\text{0.229}_{\text{0.040}}$ & $\text{0.761}_{\text{0.012}}$ & $\text{69.0}_{\text{1.0e+01}}$ \\
     & PCP & $\text{0.793}_{\text{0.0036}}$ & $\text{0.400}_{\text{0.066}}$ & $\text{2.93}_{\text{0.25}}$ & $\text{3.35}_{\text{0.57}}$ & $\text{0.546}_{\text{0.013}}$ & $\text{70.6}_{\text{1.1e+01}}$ \\
     & HD-PCP & $\text{0.796}_{\text{0.0036}}$ & $\text{0.271}_{\text{0.037}}$ & $\text{1.95}_{\text{0.15}}$ & $\text{1.95}_{\text{0.37}}$ & $\text{0.584}_{\text{0.010}}$ & $\text{70.9}_{\text{1.1e+01}}$ \\
     & STDQR & $\text{0.791}_{\text{0.0036}}$ & $\text{0.312}_{\text{0.047}}$ & $\text{2.58}_{\text{0.25}}$ & $\text{2.79}_{\text{0.52}}$ & $\text{0.553}_{\text{0.015}}$ & $\text{1.25e+02}_{\text{1.8e+01}}$ \\
     & C-PCP & $\text{0.809}_{\text{0.0022}}$ & $\text{0.236}_{\text{0.029}}$ & \bfseries $\text{0.129}_{\text{0.017}}$ & \bfseries $\text{0.0778}_{\text{0.027}}$ & \bfseries $\text{0.794}_{\text{0.0092}}$ & $\text{1.39e+02}_{\text{2.2e+01}}$ \\
     & L-CP & \bfseries $\text{0.802}_{\text{0.0037}}$ & $\text{0.192}_{\text{0.017}}$ & $\text{0.557}_{\text{0.056}}$ & $\text{0.311}_{\text{0.082}}$ & $\text{0.721}_{\text{0.014}}$ & \bfseries $\text{0.0530}_{\text{0.0029}}$ \\
    \bottomrule
\end{tabular}

\end{table}

\begin{table}[H]
	\fontsize{7.5pt}{8.5pt}
	\selectfont
	\caption{Full results obtained with the setup described in \cref{sec:study} (Part 2).}
	\label{table:mqf2_real_2}
	\centering
	\begin{tabular}{lllllllll}
	\toprule
	& & \textbf{MC} & \textbf{Median Size} & \textbf{CEC-$X$} & \textbf{CEC-$Z$} & \textbf{WSC} & \textbf{Test time} \\
	\textbf{Dataset} & \textbf{Method} &  &  & ($\times 100$) & ($\times 100$) & &  \\
    
    \midrule\multirow[t]{9}{*}{meps\_20} & M-CP & $\text{0.808}_{\text{0.0049}}$ & $\text{0.378}_{\text{0.070}}$ & $\text{0.868}_{\text{0.13}}$ & $\text{0.470}_{\text{0.13}}$ & $\text{0.708}_{\text{0.016}}$ & $\text{86.8}_{\text{1.1e+01}}$ \\
     & CopulaCPTS & $\text{0.796}_{\text{0.011}}$ & $\text{0.359}_{\text{0.064}}$ & $\text{0.968}_{\text{0.15}}$ & $\text{0.579}_{\text{0.13}}$ & $\text{0.689}_{\text{0.021}}$ & $\text{1.26e+02}_{\text{1.7e+01}}$ \\
     & DR-CP & $\text{0.808}_{\text{0.0039}}$ & $\text{0.235}_{\text{0.020}}$ & $\text{3.47}_{\text{0.14}}$ & $\text{2.88}_{\text{0.84}}$ & $\text{0.536}_{\text{0.0094}}$ & $\text{0.129}_{\text{0.0072}}$ \\
     & C-HDR & $\text{0.805}_{\text{0.0055}}$ & \bfseries $\text{0.118}_{\text{0.015}}$ & $\text{0.460}_{\text{0.13}}$ & $\text{0.140}_{\text{0.042}}$ & $\text{0.738}_{\text{0.011}}$ & $\text{87.1}_{\text{1.1e+01}}$ \\
     & PCP & $\text{0.803}_{\text{0.0041}}$ & $\text{0.537}_{\text{0.054}}$ & $\text{2.81}_{\text{0.090}}$ & $\text{2.44}_{\text{0.72}}$ & $\text{0.551}_{\text{0.0087}}$ & $\text{87.7}_{\text{1.1e+01}}$ \\
     & HD-PCP & $\text{0.807}_{\text{0.0041}}$ & $\text{0.431}_{\text{0.074}}$ & $\text{1.85}_{\text{0.15}}$ & $\text{1.30}_{\text{0.38}}$ & $\text{0.629}_{\text{0.011}}$ & $\text{88.0}_{\text{1.1e+01}}$ \\
     & STDQR & $\text{0.806}_{\text{0.0056}}$ & $\text{0.477}_{\text{0.056}}$ & $\text{2.41}_{\text{0.18}}$ & $\text{1.83}_{\text{0.53}}$ & $\text{0.583}_{\text{0.018}}$ & $\text{1.47e+02}_{\text{1.9e+01}}$ \\
     & C-PCP & $\text{0.806}_{\text{0.0048}}$ & $\text{0.334}_{\text{0.043}}$ & \bfseries $\text{0.193}_{\text{0.077}}$ & \bfseries $\text{0.0514}_{\text{0.012}}$ & \bfseries $\text{0.785}_{\text{0.015}}$ & $\text{1.74e+02}_{\text{2.3e+01}}$ \\
     & L-CP & \bfseries $\text{0.801}_{\text{0.0036}}$ & $\text{0.286}_{\text{0.033}}$ & $\text{0.614}_{\text{0.080}}$ & $\text{0.308}_{\text{0.093}}$ & $\text{0.713}_{\text{0.013}}$ & \bfseries $\text{0.0578}_{\text{0.0032}}$ \\
    \midrule\multirow[t]{9}{*}{house} & M-CP & $\text{0.802}_{\text{0.0026}}$ & $\text{1.15}_{\text{0.022}}$ & $\text{0.266}_{\text{0.022}}$ & $\text{0.194}_{\text{0.021}}$ & $\text{0.727}_{\text{0.010}}$ & $\text{13.3}_{\text{0.78}}$ \\
     & CopulaCPTS & $\text{0.812}_{\text{0.0092}}$ & $\text{1.21}_{\text{0.046}}$ & $\text{0.333}_{\text{0.034}}$ & $\text{0.288}_{\text{0.027}}$ & $\text{0.746}_{\text{0.013}}$ & $\text{18.0}_{\text{1.0}}$ \\
     & DR-CP & $\text{0.803}_{\text{0.0042}}$ & $\text{0.666}_{\text{0.024}}$ & $\text{0.900}_{\text{0.050}}$ & $\text{1.15}_{\text{0.067}}$ & $\text{0.628}_{\text{0.013}}$ & $\text{0.182}_{\text{0.0078}}$ \\
     & C-HDR & $\text{0.809}_{\text{0.0038}}$ & \bfseries $\text{0.654}_{\text{0.018}}$ & $\text{0.392}_{\text{0.029}}$ & $\text{0.119}_{\text{0.014}}$ & $\text{0.710}_{\text{0.011}}$ & $\text{13.7}_{\text{0.79}}$ \\
     & PCP & $\text{0.802}_{\text{0.0028}}$ & $\text{0.874}_{\text{0.024}}$ & $\text{0.759}_{\text{0.033}}$ & $\text{1.13}_{\text{0.039}}$ & $\text{0.640}_{\text{0.0079}}$ & $\text{13.2}_{\text{0.79}}$ \\
     & HD-PCP & $\text{0.804}_{\text{0.0036}}$ & $\text{0.675}_{\text{0.019}}$ & $\text{0.691}_{\text{0.036}}$ & $\text{0.780}_{\text{0.038}}$ & $\text{0.652}_{\text{0.0097}}$ & $\text{13.5}_{\text{0.79}}$ \\
     & STDQR & $\text{0.803}_{\text{0.0043}}$ & $\text{0.799}_{\text{0.025}}$ & $\text{0.668}_{\text{0.024}}$ & $\text{0.769}_{\text{0.037}}$ & $\text{0.648}_{\text{0.0086}}$ & $\text{19.8}_{\text{0.94}}$ \\
     & C-PCP & $\text{0.809}_{\text{0.0033}}$ & $\text{0.849}_{\text{0.018}}$ & $\text{0.278}_{\text{0.028}}$ & $\text{0.0877}_{\text{0.011}}$ & $\text{0.729}_{\text{0.010}}$ & $\text{26.4}_{\text{1.6}}$ \\
     & L-CP & \bfseries $\text{0.802}_{\text{0.0039}}$ & $\text{1.19}_{\text{0.019}}$ & \bfseries $\text{0.179}_{\text{0.022}}$ & \bfseries $\text{0.0562}_{\text{0.0086}}$ & \bfseries $\text{0.758}_{\text{0.0099}}$ & \bfseries $\text{0.0765}_{\text{0.0032}}$ \\
    \midrule\multirow[t]{9}{*}{bio} & M-CP & $\text{0.809}_{\text{0.0024}}$ & $\text{0.306}_{\text{0.0061}}$ & $\text{0.135}_{\text{0.010}}$ & $\text{0.254}_{\text{0.015}}$ & $\text{0.764}_{\text{0.0062}}$ & $\text{1.10e+02}_{\text{5.7}}$ \\
     & CopulaCPTS & \bfseries $\text{0.800}_{\text{0.0051}}$ & $\text{0.298}_{\text{0.0099}}$ & $\text{0.137}_{\text{0.0092}}$ & $\text{0.265}_{\text{0.016}}$ & $\text{0.751}_{\text{0.0076}}$ & $\text{1.27e+02}_{\text{6.9}}$ \\
     & DR-CP & $\text{0.804}_{\text{0.0020}}$ & $\text{0.260}_{\text{0.0065}}$ & $\text{0.507}_{\text{0.031}}$ & $\text{1.15}_{\text{0.036}}$ & $\text{0.642}_{\text{0.0064}}$ & $\text{0.409}_{\text{0.014}}$ \\
     & C-HDR & $\text{0.809}_{\text{0.0016}}$ & \bfseries $\text{0.221}_{\text{0.0042}}$ & $\text{0.0342}_{\text{0.0075}}$ & $\text{0.0359}_{\text{0.0062}}$ & $\text{0.797}_{\text{0.0051}}$ & $\text{1.11e+02}_{\text{5.7}}$ \\
     & PCP & $\text{0.802}_{\text{0.0022}}$ & $\text{0.346}_{\text{0.0077}}$ & $\text{0.567}_{\text{0.033}}$ & $\text{1.32}_{\text{0.024}}$ & $\text{0.628}_{\text{0.0058}}$ & $\text{1.10e+02}_{\text{5.9}}$ \\
     & HD-PCP & $\text{0.804}_{\text{0.0017}}$ & $\text{0.262}_{\text{0.0066}}$ & $\text{0.344}_{\text{0.021}}$ & $\text{0.803}_{\text{0.022}}$ & $\text{0.673}_{\text{0.0048}}$ & $\text{1.11e+02}_{\text{5.9}}$ \\
     & STDQR & $\text{0.803}_{\text{0.0027}}$ & $\text{0.272}_{\text{0.0065}}$ & $\text{0.387}_{\text{0.021}}$ & $\text{0.915}_{\text{0.040}}$ & $\text{0.667}_{\text{0.0065}}$ & $\text{86.3}_{\text{7.2}}$ \\
     & C-PCP & $\text{0.810}_{\text{0.0033}}$ & $\text{0.305}_{\text{0.0073}}$ & $\text{0.0363}_{\text{0.0070}}$ & $\text{0.0413}_{\text{0.0077}}$ & \bfseries $\text{0.799}_{\text{0.0058}}$ & $\text{2.20e+02}_{\text{1.2e+01}}$ \\
     & L-CP & $\text{0.805}_{\text{0.0010}}$ & $\text{0.272}_{\text{0.0048}}$ & \bfseries $\text{0.0201}_{\text{0.0050}}$ & \bfseries $\text{0.0207}_{\text{0.0022}}$ & $\text{0.788}_{\text{0.0043}}$ & \bfseries $\text{0.180}_{\text{0.0097}}$ \\
    \midrule\multirow[t]{9}{*}{blog\_data} & M-CP & $\text{0.802}_{\text{0.0055}}$ & $\text{0.175}_{\text{0.043}}$ & $\text{0.290}_{\text{0.057}}$ & $\text{0.169}_{\text{0.078}}$ & $\text{0.738}_{\text{0.013}}$ & $\text{5.29e+02}_{\text{7.2e+01}}$ \\
     & CopulaCPTS & $\text{0.808}_{\text{0.0070}}$ & $\text{0.0917}_{\text{0.016}}$ & $\text{0.285}_{\text{0.046}}$ & $\text{0.223}_{\text{0.070}}$ & $\text{0.739}_{\text{0.011}}$ & $\text{6.05e+02}_{\text{8.2e+01}}$ \\
     & DR-CP & $\text{0.807}_{\text{0.0015}}$ & $\text{0.0351}_{\text{0.0057}}$ & $\text{1.01}_{\text{0.097}}$ & $\text{1.66}_{\text{0.44}}$ & $\text{0.645}_{\text{0.0065}}$ & $\text{0.416}_{\text{0.021}}$ \\
     & C-HDR & $\text{0.809}_{\text{0.0033}}$ & \bfseries $\text{0.0142}_{\text{0.0031}}$ & $\text{0.230}_{\text{0.075}}$ & \bfseries $\text{0.0670}_{\text{0.020}}$ & $\text{0.754}_{\text{0.014}}$ & $\text{5.30e+02}_{\text{7.2e+01}}$ \\
     & PCP & $\text{0.802}_{\text{0.0036}}$ & $\text{0.143}_{\text{0.025}}$ & $\text{0.899}_{\text{0.079}}$ & $\text{1.69}_{\text{0.36}}$ & $\text{0.646}_{\text{0.0051}}$ & $\text{5.29e+02}_{\text{7.2e+01}}$ \\
     & HD-PCP & $\text{0.803}_{\text{0.0042}}$ & $\text{0.127}_{\text{0.025}}$ & $\text{0.763}_{\text{0.077}}$ & $\text{1.05}_{\text{0.21}}$ & $\text{0.661}_{\text{0.0088}}$ & $\text{5.29e+02}_{\text{7.2e+01}}$ \\
     & STDQR & $\text{0.810}_{\text{0.0081}}$ & $\text{0.168}_{\text{0.040}}$ & $\text{0.768}_{\text{0.071}}$ & $\text{0.978}_{\text{0.17}}$ & $\text{0.681}_{\text{0.013}}$ & $\text{6.14e+02}_{\text{8.3e+01}}$ \\
     & C-PCP & $\text{0.804}_{\text{0.0051}}$ & $\text{0.106}_{\text{0.023}}$ & \bfseries $\text{0.165}_{\text{0.055}}$ & $\text{0.124}_{\text{0.062}}$ & \bfseries $\text{0.763}_{\text{0.014}}$ & $\text{1.06e+03}_{\text{1.4e+02}}$ \\
     & L-CP & \bfseries $\text{0.800}_{\text{0.0024}}$ & $\text{0.0679}_{\text{0.019}}$ & $\text{0.331}_{\text{0.098}}$ & $\text{0.0680}_{\text{0.025}}$ & $\text{0.721}_{\text{0.013}}$ & \bfseries $\text{0.186}_{\text{0.0089}}$ \\
    \midrule\multirow[t]{9}{*}{calcofi} & M-CP & $\text{0.804}_{\text{0.0024}}$ & $\text{2.13}_{\text{0.027}}$ & $\text{0.439}_{\text{0.015}}$ & $\text{0.456}_{\text{0.014}}$ & $\text{0.734}_{\text{0.0077}}$ & $\text{23.6}_{\text{1.0}}$ \\
     & CopulaCPTS & $\text{0.812}_{\text{0.0077}}$ & $\text{2.34}_{\text{0.13}}$ & $\text{0.468}_{\text{0.052}}$ & $\text{0.482}_{\text{0.053}}$ & $\text{0.743}_{\text{0.010}}$ & $\text{26.9}_{\text{1.1}}$ \\
     & DR-CP & $\text{0.807}_{\text{0.0021}}$ & \bfseries $\text{1.67}_{\text{0.024}}$ & $\text{1.42}_{\text{0.036}}$ & $\text{1.53}_{\text{0.033}}$ & $\text{0.657}_{\text{0.0062}}$ & $\text{0.427}_{\text{0.018}}$ \\
     & C-HDR & $\text{0.804}_{\text{0.0020}}$ & $\text{1.98}_{\text{0.024}}$ & \bfseries $\text{0.0307}_{\text{0.013}}$ & $\text{0.0199}_{\text{0.0039}}$ & \bfseries $\text{0.793}_{\text{0.0057}}$ & $\text{24.7}_{\text{1.0}}$ \\
     & PCP & $\text{0.804}_{\text{0.0024}}$ & $\text{2.34}_{\text{0.033}}$ & $\text{1.62}_{\text{0.043}}$ & $\text{1.77}_{\text{0.040}}$ & $\text{0.640}_{\text{0.0035}}$ & $\text{23.7}_{\text{1.0}}$ \\
     & HD-PCP & $\text{0.804}_{\text{0.0031}}$ & $\text{1.89}_{\text{0.033}}$ & $\text{0.969}_{\text{0.035}}$ & $\text{1.04}_{\text{0.031}}$ & $\text{0.684}_{\text{0.0055}}$ & $\text{24.5}_{\text{1.1}}$ \\
     & STDQR & $\text{0.802}_{\text{0.0028}}$ & $\text{1.97}_{\text{0.023}}$ & $\text{1.11}_{\text{0.026}}$ & $\text{1.21}_{\text{0.027}}$ & $\text{0.680}_{\text{0.0076}}$ & $\text{26.9}_{\text{0.94}}$ \\
     & C-PCP & $\text{0.811}_{\text{0.0020}}$ & $\text{2.82}_{\text{0.046}}$ & $\text{0.0335}_{\text{0.010}}$ & $\text{0.0261}_{\text{0.0053}}$ & $\text{0.808}_{\text{0.0050}}$ & $\text{47.3}_{\text{2.0}}$ \\
     & L-CP & \bfseries $\text{0.801}_{\text{0.0021}}$ & $\text{2.70}_{\text{0.027}}$ & $\text{0.0348}_{\text{0.021}}$ & \bfseries $\text{0.0191}_{\text{0.0043}}$ & $\text{0.792}_{\text{0.0039}}$ & \bfseries $\text{0.192}_{\text{0.0090}}$ \\
    \midrule\multirow[t]{9}{*}{taxi} & M-CP & \bfseries $\text{0.804}_{\text{0.0036}}$ & $\text{4.27}_{\text{0.091}}$ & $\text{0.0557}_{\text{0.0037}}$ & $\text{0.0398}_{\text{0.0052}}$ & $\text{0.789}_{\text{0.0057}}$ & $\text{42.9}_{\text{3.3}}$ \\
     & CopulaCPTS & $\text{0.823}_{\text{0.0056}}$ & $\text{4.73}_{\text{0.16}}$ & $\text{0.120}_{\text{0.026}}$ & $\text{0.105}_{\text{0.026}}$ & $\text{0.803}_{\text{0.0065}}$ & $\text{49.1}_{\text{3.8}}$ \\
     & DR-CP & $\text{0.804}_{\text{0.0028}}$ & \bfseries $\text{2.60}_{\text{0.035}}$ & $\text{0.355}_{\text{0.0083}}$ & $\text{0.427}_{\text{0.028}}$ & $\text{0.707}_{\text{0.0051}}$ & $\text{0.456}_{\text{0.028}}$ \\
     & C-HDR & $\text{0.809}_{\text{0.0039}}$ & $\text{2.60}_{\text{0.045}}$ & $\text{0.0426}_{\text{0.0049}}$ & $\text{0.0475}_{\text{0.0055}}$ & $\text{0.794}_{\text{0.0056}}$ & $\text{44.0}_{\text{3.3}}$ \\
     & PCP & $\text{0.804}_{\text{0.0021}}$ & $\text{4.03}_{\text{0.055}}$ & $\text{0.308}_{\text{0.021}}$ & $\text{0.374}_{\text{0.028}}$ & $\text{0.716}_{\text{0.0053}}$ & $\text{43.3}_{\text{3.4}}$ \\
     & HD-PCP & $\text{0.806}_{\text{0.0020}}$ & $\text{3.19}_{\text{0.042}}$ & $\text{0.181}_{\text{0.014}}$ & $\text{0.209}_{\text{0.013}}$ & $\text{0.750}_{\text{0.0058}}$ & $\text{44.1}_{\text{3.4}}$ \\
     & STDQR & $\text{0.807}_{\text{0.0046}}$ & $\text{3.66}_{\text{0.074}}$ & $\text{0.192}_{\text{0.012}}$ & $\text{0.211}_{\text{0.0066}}$ & $\text{0.752}_{\text{0.0085}}$ & $\text{33.0}_{\text{1.5}}$ \\
     & C-PCP & $\text{0.809}_{\text{0.0035}}$ & $\text{4.06}_{\text{0.087}}$ & $\text{0.0383}_{\text{0.0052}}$ & $\text{0.0418}_{\text{0.0036}}$ & $\text{0.802}_{\text{0.0070}}$ & $\text{86.2}_{\text{6.7}}$ \\
     & L-CP & $\text{0.807}_{\text{0.0036}}$ & $\text{4.97}_{\text{0.15}}$ & \bfseries $\text{0.0260}_{\text{0.0040}}$ & \bfseries $\text{0.0209}_{\text{0.0047}}$ & \bfseries $\text{0.802}_{\text{0.0046}}$ & \bfseries $\text{0.182}_{\text{0.011}}$ \\
    \bottomrule
\end{tabular}

\end{table}

\end{document}